\newcommand{\para}[1]{\noindent {\bf #1}}
\newcommand{\textt}[1]{\noindent {\tt #1}}
\newcommand{\ie}{\emph{i.e.}}
\newcommand{\eg}{\emph{e.g.}}
\newcommand{\framework}{UMP}
\newcommand{\algone}{D-SUM}
\newcommand{\algtwo}{GT-DSUM}
\newtheorem{theorem}{Theorem}
\newtheorem{assumption}{Assumption}
\newtheorem{lemma}{Lemma}
\newtheorem{definition}{Definition}
\newtheorem{proposition}{Proposition}
\newtheorem{remark}{Remark}
\title{A Unified Momentum-based Paradigm of Decentralized SGD for Non-Convex Models and Heterogeneous Data}
\author{
Haizhou Du$^1$
\and
Chendong Ni$^2$\and
%Third Author$^{2,3}$\And
%Fourth Author$^4$
\affiliations
$^1$Shanghai University of Electric Power\\
%$^2$Second Affiliation\\
%$^3$Third Affiliation\\
%$^4$Fourth Affiliation
%\emails
% \{first, second\}@example.com,
% third@other.example.com,
% fourth@example.com
}
\begin{document}

\maketitle

\begin{abstract}
% Decentralized training, enabling data locality and model consistency, is a promising paradigm in distributed machine learning. 
% Distributed machine learning provides collaboration for emerging decentralized applications in IoT and edge computing fields.
Emerging distributed applications recently boost the development of decentralized machine learning, especially in IoT and edge computing fields.
% \du{You can directly introduce the strong trend of DML   development is the decentralized way and give some uses cases.  } 
% A variety of literature has shown that it could be as competitive as the centralized algorithms under convex cases. 
% However, non-convex \du{model or function?} issues are widely common in realistic scenarios where the majority of works are unsuitable due to inefficiency or invalidity. 
% The common problem of non-convexity and the existence of data heterogeneity, however, substantially degrade the performance, resulting in inefficiency or invalidity, which diminishes their practicality in real world. 
%\ncd{In real-world scenarios, the common problems of non-convexity and data heterogeneity result in inefficiency and invalidity, leading to performance degradation and development stagnation.}
In real-world scenarios, the common problems of non-convexity and data heterogeneity result in inefficiency, performance degradation, and development stagnation.
The bulk of studies concentrates on one of the issues mentioned above without having a more general framework that has been proven optimal. 
To this end, we propose a unified paradigm called \framework{}, which comprises two algorithms \textt{\algone} and \textt{\algtwo} 
based on the momentum technique with decentralized stochastic gradient descent (SGD).
%\ncd{based on the momentum technique with stochastic gradient descent (SGD)}.
%\xwj{Udon: A Unified Framework for Decentralized SGD with Momentum on  Non-Convex Function and Non-IID Data.} \du{Why did you name Udon for the framework? What's meaning?}
%\du{any other good name?}, 
The former provides a convergence guarantee for general non-convex objectives, 
% and subsumes the Heavy Ball and Nesterov's acceleration
 while the latter is extended by introducing gradient tracking, which estimates the global optimization direction to mitigate data heterogeneity (\ie, distribution drift). 
% Both algorithms employ momentum to handle non-convex models, resulting in speedup model training. Furthermore, \textt{\algtwo{}} overcomes the detrimental effects of heterogeneous data to converge faster and generate better. 
%\ncd{We can recover a number of existing algorithms in this framework.}
We can cover most momentum-based variants based on the classical heavy ball or Nesterov's acceleration with different parameters in \framework{}. In theory, we rigorously provide the convergence analysis of these two approaches for non-convex 
%\du{Non-i.i.d?} 
objectives and conduct extensive experiments, demonstrating a significant improvement in model accuracy up to $57.6\%$ 
% yielding a faster convergence rate and better scalability 
compared to other methods in practice.    
\end{abstract}

\section{Introduction}
\label{sec:introduction}

% With the advanced development of AI applications (\eg, face verification 
% % \du{recognition or verification?}
% \cite{taigman2014deepface}, robobus \cite{levinson2011towards}, mobile keyboard prediction \cite{hard2018federated}, content creation \cite{hargittai2008participation})%, \etc)
% , the training models are evoluted in a rapid rate, containing massive amounts of parameters and generating tremendous data. 
Distributed machine learning (DML) has emerged as an important paradigm in large-scale machine learning~\cite{wan2022shielding,zhang2022fedduap,qu2022generalized}. In terms of how to aggregate 
% \du{aggregate?} 
the model parameters/gradients among workers,  researchers classify the system architecture into two main classes: parameter server (PS) and decentralized. 
The former is generally considered
% referred \xwj{as->to as} 
as the centralized paradigm 
% but suffering with the bottleneck of communication congestion \du{as->when} the number of workers scale up, 
where the central server acts as a coordinator for convenience,
while the latter allows communication in a peer-to-peer fashion over an underlying topology, which could guarantee the model consistency across all workers with better scalability. 

% common scenario: non-convex
% how to speedup contradiction between theory and evaluation
% heterogeneity
%\du{There have a gap. First, you should tell the important of non-convex objectives. Why do we study the non-convex issue?}
% \du{Meanwhile, multiple?}
Meanwhile, multiple complementary studies~\cite{fang2018spider,yu2019linear,hsieh2020non} have focused on the issues of DML
%\ncd{-> decentralized learning?} 
mainly based on the following two key aspects.
$\bullet$ \textit{The property of non-convex objectives is quite complicated in deep learning, in particular in distributed scenarios~\cite{karimireddy2020scaffold,lian2017can}.} %Due to the non-convex nature of neural networks, one of %\du{the? just remove it}
% the original targets is to minimize the surrogate objective. 
Although some standard theoretical results have been obtained for convex models~\cite{tao2022private,deng2021minibatch,TaoLWT21}, much less is applicable in non-convex settings since they may be lossy and cause serious obstacles (\eg, high computation complexity and poor generalization)~\cite{ghadimi2015global,mai2020convergence}. 
% Furthermore, the problem is even more prevalent in distributed methods since there are almost no fixed results for problems beyond those that are convex. Thus, it is a necessity to explore the theoretical convergence results for the optimization of the distributed, non-convex objectives to make sure the validity of the proposed algorithms.
$\bullet$ \textit{It is well known that heterogeneity in the data is one of %\du{the? remove} 
key challenges in distributed training, 
% and it is inevitable that the distribution of each worker varies considerably and incurs drift problems 
 resulting in a slow and unstable convergence as well as poor model generalization.} There still exists a gap between the disappointing empirical performance and the degree of data heterogeneity~\cite{shang2022federated,lin2021quasi,esfandiari2021cross}.
% $\bullet$ Besides, $n$ workers perform the stochastic gradient descent (SGD) with mini-batch in parallel can obtain a linear speedup convergence when choosing a suitable learning rate in theory \cite{lian2017can, yu2019linear}. Intuitively, such linear speedup regarding to the number of workers is desired since it implies an expected optimal scaling possibility through increasing the number of workers. However, it is hard to achieve in practice since high latency or low bandwidths.
%\ncd{Unfortunately, there are currently no existing works attempting to improve the real-world decentralized training from a comprehensive perspective by taking both non-convexity and data heterogeneity into account. Thus, it is non-trivial to handle these challenges which significantly hinder the development of  real-life applications}
Unfortunately, there are currently no existing works attempting to improve real-world decentralized training from a comprehensive perspective by taking both non-convexity and data heterogeneity into account.
Thus, it is non-trivial to handle these challenges, which significantly hinder the development of 
real-life applications.
Motivated by the momentum's effects on optimal convergence complexity and empirical evaluation successes ~\cite{koloskova2019decentralized,yu2019linear,han2020riemannian,lin2021quasi},
we propose \textbf{UMP}, a \textbf{U}nified, \textbf{M}omentum-based \textbf{P}aradigm
% with the general form of SGD 
in the decentralized learning without considering the communication overhead throughout the paper.
% a unified framework named \framework{} consisting of two algorithms captured by the general form of momentum-based SGD variants in a decentralized pattern. 
It consists of two algorithms named \textt{\algone} and \textt{\algtwo}. 
% which could be widely employed in various scenarios since they identify and overcome the restrictions of non-convexity and data heterogeneity concerns. % Different from most existing works \cite{yu2019linear} just incorporate momentum into model update, applying this information for guiding the next communication round,
The former one \textt{\algone} explores the potential of momentum by maintaining and scaling the momentum buffer to sharpen the loss landscape significantly and overcomes the restrictions of non-convexity, leading to better model performance and faster convergence rate in the non-convex settings. Our latter algorithm \textt{\algtwo} also aims to mitigate the impact of data heterogeneity on the discrepancy of local model parameters by introducing the gradient tracking (GT) technique~\cite{di2016next}. The core insight is that the variance between workers is decreasing while the local gradient asymptotically aligns with the global optimization direction independent on the heterogeneity of the data. \textt{\algtwo} accelerates decentralized learning achieving better generalization performance under both non-convex and different degrees of non-IID.

%Overall, 
%The \textbf{main contributions} of this paper are as follows:
This paper makes the following \textbf{main contributions}:
% \du{key->\textbf{main}} 

\begin{itemize}
    \item %\du{We study the important issue of \textit{XXX} in DML.} \ncd{
    We 
    % extend the behavior of momentum to a decentralized paradigm, and 
    propose a unified momentum-based paradigm \framework{} with two algorithms 
    % \textt{\algone} and \textt{\algtwo} 
    for dealing with non-convex and the degree of non-IID simultaneously. 
    Moreover, a variety of algorithms with the momentum technique could be obtained by specifying the parameters of our base algorithms.
    %\ncd{Moreover, a variety of algorithms with the momentum technique could be obtained by specifying parameters of our base algorithms.} 
    % Moreover, both Heavy Ball \cite{polyak1964some} and Nesterov's \cite{nesterov1983method} momentum instances could be obtained by specifying our base algorithms.
    %}
    \item %\du{We propose XXX, a novel aggregation framework,  that improves the convergence speed and scalability by XXXX. } \ncd{
    We design the first algorithm \textt{\algone}, which achieves good model performance, demonstrating its applicability in terms of efficacy and efficiency. We also provide its convergence result under the non-convex cases.
    %\ncd{achieves good model performance, demonstrating its applicability in terms of efficacy and efficiency. We also provide its convergence result under the non-convex cases.}
    \item Our second one \textt{\algtwo}, which is robust to the distribution drift problem by applying the GT technique, is being further developed. We rigorously prove its convergence bound in smooth, non-convex settings.
    \item %\du{We conduct extensive experiments to evaluate the performance of our framework. Experimental results show that XXX accelerates the convergence speed by up to XX\%, and improves scalability...} \ncd{
    We additionally conduct extensive experiments to evaluate the performance of \framework{} on common models, datasets, and dynamic real-world settings. Experimental results demonstrate that \textt{\algone} and \textt{\algtwo} improve the model accuracy by up to $35.8\%$ and $57.6\%$ respectively under different non-IID degrees compared with the well-known decentralized baselines. \textt{\algtwo} performs better than \textt{\algone} on model generalization across training tasks suffering from data skewness. % Besides, they also outperform existing methods in terms of faster convergence rate and better scalability.%}

\end{itemize}

% The remaining part of this paper is organized as follows. We first present some related works in Section~\ref{sec:related_work}. Section~\ref{sec:design} presents the \framework{} framework and describes the rationale of our two proposed algorithms \textt{\algone} and \textt{\algtwo}. 
% %its convergence analysis are also represented in this section. In Section~\ref{sec:gt_sum}, we further extend it to a non-IID case, develop our second algorithm \textt{\algtwo} and also theoretically show the convergence result.
% In Section~\ref{sec:convergence}, we theoretically show the convergence results.
% Section~\ref{sec:eval} gives a comprehensive evaluation of our work and finally we draw the conclusion in Section~\ref{sec:conclusion}.

% \input{related_work.tex}

\section{The Unified Paradigm: \framework{}}
\label{sec:design}
In this section, we first begin with the notation and revisit two momentum approaches: the heavy ball (HB) method~\cite{polyak1964some} and Nesterov's momentum~\cite{nesterov1983method}.
Inspired by them, we generalize a unified momentum-based paradigm  %\framework{} 
with two algorithms \textt{\algone} and \textt{\algtwo}, which could cover the above two classical methods and other momentum-based variants, aiming to address issues on non-convexity and data heterogeneity in real-world decentralized learning applications. Finally, we provide the convergence result that they could converge almost to a stationary point 
for general smooth, non-convex objectives.
%\ncd{, which could cover above two methods and other momentum-based variants, aiming to address issues on non-convexity and data heterogeneity in real-world decentralized training applications. Finally, we provide the convergence result that they could converge almost to a stationary point  for general smooth, non-convex objectives.}

\subsection{Notation and Preliminary}
\label{subsec:pre_not}
To better demonstrate the applicable effect in real-world complex scenarios, we consider a decentralized setting with a network topology where $n$ workers jointly deal with an optimization problem. Assume that for every worker $i$, it holds its own datasets drawn from $\mathcal{D}_i$ distribution, which corresponds to data heterogeneity. Let $f_i: \mathbb{R}^d \to \mathbb{R}$ be the training datasets loss function of worker $i$ and can be given in a stochastic form $\mathbb{E}_{\xi_i \sim \mathcal{D}_i} \left [ \nabla F_i(\mathbf{x}, \xi_i) \right ] = \nabla f_i(\mathbf{x})$, where $F_i(\mathbf{x}, \xi_i)$ is the per-data loss function related with the mini-batch sample $\xi_i \sim \mathcal{D}_i$. Then, we formulate the empirical risk minimization with sum-structure objectives:
\begin{equation}
\label{eq:obj_func}
f^{\ast} = \min_{\mathbf{x} \in \mathbb{R}^d } \left [ f(\mathbf{x} ) := \frac{1}{n} \sum_{i=1}^n \left [ f_i(\mathbf{x} ) 
= \mathbb{E}_{\xi_i \sim \mathcal{D}_i } F_i(\mathbf{x}, \xi_i ) \right ]   \right ].
\end{equation}

Among workers, there is an underlying topology graph $\mathbf{W} \in \mathbb{R}^{n \times n}$, which is convenient to encode the communication between arbitrary two workers, 
% \du{nodes->workers}
\ie, we let $w_{ij}=0$ if and only if worker $i$ and $j$ are not connected.
\begin{definition}[Consensus Matrix ~\cite{koloskova2021improved}]
\label{def:consensus_matrix}
A matrix with non-negative entries $\mathbf{W} \in \left [ 0, 1 \right ]^{n \times n}$ that is symmetric ($\mathbf{W} = \mathbf{W}^{\top}$), and doubly stochastic ($\mathbf{W1} = \mathbf{1}, \mathbf{1}^{\top}\mathbf{W} = \mathbf{1}$), where $\mathbf{1}$ denotes the all-one vector in $\mathbb{R}^{n}$.
\end{definition}
Throughout the paper, we use the notation $\mathbf{x}_i^{(t),\tau}$ to denote the sequence of model parameters on worker $i$ at the $\tau$-th local update in epoch $t$. For any vector $\mathbf{a}_i \in \mathbb{R}^d$, % (\ie, denoted by the lowercase bold letters, $\mathbf{a}_i$ could be an instance of $\mathbf{x}_i$), 
we denote its model averaging $\bar{\mathbf{a}} = \frac{1}{n}\sum_{i=1}^n \mathbf{a}_i$. Let $\left \| \cdot \right \|$, $\left \| \cdot \right \|_F$ denote the $l_2$ vector norm and Frobenius matrix norm, respectively.

For ease of presentation, we apply both vector and matrix notation whenever it is more convenient. We denote by a capital letter for the matrix form combining by $\mathbf{a}_i$ as follows,
\begin{equation}
\begin{matrix}
\mathbf{A} = \left [ \mathbf{a}_1, \cdots, \mathbf{a}_n   \right ] \in \mathbb{R}^{d \times n},  
& \bar{\mathbf{A}} = \left [ \bar{\mathbf{a}}, \cdots, \bar{\mathbf{a}}  \right ] = \mathbf{A} \frac{1}{n} \mathbf{11}^{\top}.  
\end{matrix}   
\end{equation}

% \subsection{Existing SGD approaches with Momentum}
% \label{subsec:decen_with_mo}
% In pursuit of better performance, a line of literature focus on the improvement of SGD aiming to improve model accuracy theoretically and empirically. Among them, 
The introduction of a \textit{momentum} term is one of the most common modifications, which is viewed as a critical component for training the state-of-the-art deep neural networks~\cite{qu2022generalized,lin2021quasi}. Corresponding to its empirical success, momentum attempts to enhance the convergence rate on non-convex objectives by setting the optimized searching direction as the combination of stochastic gradient and historical directions.

The HB method (\ie, also known as Polyak's momentum) is first proposed for the smooth and convex settings, written as
\begin{equation}
\label{eq:momentum_hb}
\left\{\begin{matrix}
\begin{aligned}
& \mathbf{u}_i^{(t+1)} = \beta \mathbf{u}_i^{(t)} + \mathbf{g}_i^{(t)}   \\
& \mathbf{x}_i^{(t+1)} = \mathbf{x}_i^{(t)} - \eta \mathbf{u}_i^{(t+1)},
\end{aligned}
\end{matrix}\right.
\end{equation}
where $\mathbf{u}_i^{(t)}$, $\mathbf{g}_i^{(t)}$ are denoted as the momentum buffer, and the stochastic gradient of worker $i$ at epoch $t$, respectively. $\eta$ presents the learning rate. The momentum variable $\beta$ adjusts the magnitude of updating direction provided by the past information estimation with the stochastic gradient, indicating the direction of the steepest descent. Equivalently, (\ref{eq:momentum_hb}) can be also updated below
\begin{equation}
\label{eq:momentum_hb_one}
\mathbf{x}_i^{(t+1)} = \mathbf{x}_i^{(t)} - \eta \mathbf{g}_i^{(t)} + \beta \left ( \mathbf{x}_i^{(t)} - \mathbf{x}_i^{(t-1)}   \right ),
\end{equation}
when $t \ge 1$. Holding the past gradient values, this style of update can have better stability to some extent and enables improvement compared with some vanilla SGD methods~\cite{cutkosky2020momentum}.

Another kind of technique called Nesterov's shows that choosing with suitable parameters, the extrapolation step can be accelerated from $\mathcal{O}\left ( \frac{1}{t}  \right )$ to $\mathcal{O}\left ( \frac{1}{t^2}  \right )$, which is the optimal rate for the smooth convex problems. Concretely, its update step is described as follows
\begin{equation}
\label{eq:momentum_nesterov}
\left\{\begin{matrix}
\begin{aligned}
& \mathbf{u}_i^{(t+1)} = \beta \mathbf{u}_i^{(t)} + \mathbf{g}_i^{(t)}  \\
& \mathbf{v}_i^{(t+1)} = \beta\mathbf{u}_i^{(t+1)} + \mathbf{g}_i^{(t)}   \\
& \mathbf{x}_i^{(t+1)} = \mathbf{x}_i^{(t)} - \eta \mathbf{v}_i^{(t+1)}.
\end{aligned}
\end{matrix}\right.
\end{equation}
The model parameters are updated by introducing the momentum vector $\mathbf{u}_i$ and extra auxiliary $\mathbf{v}_i$ sequences. Compared with (\ref{eq:momentum_hb}), through decaying the momentum buffer $\mathbf{u}_i^{(t)}$, it effectively improves the rate of convergence without causing oscillations. Similarly, the above steps can be written as 
\begin{equation}
\label{eq:momentum_nesterov_one}
\mathbf{x}_i^{(t+1)} = \mathbf{x}_i^{(t)} - \eta \mathbf{g}_i^{(t)} 
+ \beta \left ( \mathbf{x}_i^{(t)} - \eta \mathbf{g}_i^{(t)} - \mathbf{x}_i^{(t-1)} + \eta \mathbf{g}_i^{(t-1)} \right ).
\end{equation}
Based on (\ref{eq:momentum_hb_one}) and (\ref{eq:momentum_nesterov_one}), it is not difficult to observe that the former could evaluate
%/apply 
the gradient and add momentum simultaneously, while the latter applies momentum after evaluating gradients, which intuitively causes more computation cost. Meanwhile, leveraging the idea of HB momentum, Nesterov's acceleration  brings us closer to the minimum (\ie, $\mathbf{x}^{\ast}$) by introducing an additional gradient descent rule by adding the subtracted gradients $\eta (\mathbf{g}_i^{(t-1)} - \mathbf{g}_i^{(t)})$ for general convex cases. 
The above two basic momentum-based approaches are firstly investigated in convex settings, showing their advantage compared with the vanilla SGD. However, there is still a shortage of a comprehensive analysis of momentum-based SGD under non-convex conditions in common real-world scenarios.
%\ncd{The above two basic momentum-based approaches are firstly investigated in convex settings, showing their advantage compared with the vanilla SGD. However, there is still a short of a comprehensive analysis of momentum-based SGD under non-convex conditions.}

\subsection{\textt{\algone{}}  Algorithm}
\label{subsec:sum}
\begin{algorithm}[t]
\DontPrintSemicolon
\SetKwInput{Input}{Input}
\SetAlgoLined
\LinesNumbered
\caption{\colorbox{blue!30}{vanilla SGD} and \colorbox{red!30}{\textt{\algone{}}}; colors indicate the two alternative variants.}
\label{alg:sum_with_vanilla}
\Input{$\forall i$, initialize $\mathbf{x}_i^{(0),0} = \mathbf{v}_i^{(0), 0} = \mathbf{x}_0$; constant parameters $\eta$, $\alpha$, and $\beta$; $\forall i, j$, consensus matrix $\mathbf{W}$ with entries $w_{ij}$; the number of epochs $T$ and local steps $K$.}
\For{$t \in \left \{ 0, \cdots, T-1 \right \}$ {\it \bf at worker} $i$ {\it \bf in parallel}}{
Set $\mathbf{x}_i^{(t), 0} = \mathbf{x}_i^{(t)}, \mathbf{v}_i^{(t), 0} = \mathbf{v}_i^{(t)}$. \;
\For{$\tau \in \left \{ 0, \cdots, K-1 \right \}$ }{
Sample $\xi_i^{(t), \tau}$ and compute $\mathbf{g}_i^{(t), \tau} = \nabla F_i(\mathbf{x}_i^{(t), \tau}, \xi_i^{(t), \tau})$. \;
\colorbox{blue!30}{$\mathbf{x}^{(t), \tau+1} = \mathbf{x}^{(t),\tau} - \eta \mathbf{g}_i^{(t),\tau}$.} \;
\colorbox{red!30}{Compute local model $\mathbf{x}_i^{(t),\tau}$ from (\ref{eq:sum}).} \;
}
Perform gossip averaging via  (\ref{eq:gossip_avg_model}). \;
\colorbox{red!30}{$\mathbf{v}_i^{(t+1)} = \sum_{j=1}^n w_{ij} \mathbf{v}_j^{(t), K }$.} \;
}
% \EndFor
\end{algorithm}

In this section, we present \framework{} and its first algorithm \textt{\algone}, %\du{give a name?} 
which is employed in decentralized training under non-convex cases. 
%\du{The current sentence cannot strongly represent the uniformity of the proposed framework. You can say when **, a famous XXX framework is just a special case of us.} 
% We show that both Heavy Ball \cite{polyak1964some} and Nesterov \cite{nesterov1983method} methods are the special cases of our unified framework. \ncd{section 3 begining}
% \ncd{XX and XX are special cases ...}

Under each epoch, workers first perform $K$ local updates using different optimizers (\ie, SGD, Adam ~\cite{kingma2014adam}, etc.) with or without momentum. In this paper, we mainly focus on the momentum-based SGD variants, which are demonstrated in (\ref{eq:momentum_hb}), and (\ref{eq:momentum_nesterov}) for example. From a comprehensive view, we apply
the key update of the stochastic unified momentum (SUM) is according to
% It is easy to show that \colorbox{orange!30}{$\mathbf{x}_i^{(t)} = \mathbf{x}_i^{(t), 0} = \mathbf{v}_i^{(t), 0}$}, and the base optimizer of SGD implies that $\mathbf{g}_i^{(t), \tau} = \nabla F_i(\mathbf{x}_i^{(t), \tau}, \xi_i^{(t), \tau})$.

\begin{equation}
\label{eq:sum}
\left\{\begin{matrix}
\begin{aligned}
& \mathbf{u}_{i}^{(t), \tau+1} = \mathbf{x}_{i}^{(t),\tau} - \eta \mathbf{g}_i^{(t), \tau}  \\
& \mathbf{v}_{i,}^{(t),\tau+1} = \mathbf{x}_{i}^{(t), \tau} - \alpha \eta  \mathbf{g}_i^{(t), \tau} \\
 & \mathbf{x}_{i}^{(t), \tau+1} = \mathbf{u}_{i}^{(t),\tau+1} + \beta \left (  \mathbf{v}_{i}^{(t),\tau+1} -  \mathbf{v}_{i}^{(t),\tau} \right ),
\end{aligned}  
\end{matrix}\right.
\end{equation}
where $\alpha \ge 0$, and $\beta \in \left [ 0,1 \right )$. $\mathbf{a}_i^{(t),\tau}$ ($\mathbf{a}_i$ could be the instance for $\mathbf{x}_i$, $\mathbf{u}_i$, $\mathbf{v}_i$, and $\mathbf{g}_i$) is denoted as the related variables for worker $i$ after $\tau$ local updates in epoch $t$. After $K$ local steps, worker $i$ communicates with its neighbors according to the communication pattern $\mathbf{W}$ for exchanging their local model parameters. We call this synchronization operation as gossip averaging which can be compactly written as 
\begin{equation}
\label{eq:gossip_avg_model}
\mathbf{x}_i^{(t+1)} = \sum_{j=1}^{n} w_{ij} \mathbf{x}_j^{(t), K}.
\end{equation}
To present the difference between vanilla SGD and stochastic unified momentum in (\ref{eq:sum}), we summarize the training procedure
% \du{procedures->procedure} 
in Algorithm~\ref{alg:sum_with_vanilla}. The specific algorithm instance is obtained by tuning the hyperparameters $\alpha$, $\beta$, $\eta$, and $K$. We cover the basic Heavy Ball method (\ref{eq:momentum_hb_one}) and Nesterov's momentum (\ref{eq:momentum_nesterov_one}) when setting $\alpha = 0$, and $\alpha = 1$, respectively. Besides, when $K=1$, it reduces to the standard mini-batch SGD with momentum acceleration. Specially, we update the auxiliary variable sequences $\left \{ \mathbf{v}_i \right \}$ for any worker $i$ by using the same gossip synchronization as in (\ref{eq:gossip_avg_model}) interpreted as a restart in the next training epoch to simplify theoretical analysis.
% demonstrated in the following subsection. % \du{subsection?}.

However, there is no theoretical or empirical analysis to demonstrate that the momentum gets rid of heterogeneity which degrades the distributed deep training due to the discrepancies between local activation statistics~\cite{hsieh2020non}. Not only taking non-convex functions into account, but we also incorporate a technique that is agnostic to data heterogeneity, gradient tracking into \textt{\algone} to alleviate the impact of heterogeneous data in decentralized training for better model generalization in the following.

\subsection{\textt{\algtwo{}} Algorithm}
\label{sec:gt_sum}

\begin{algorithm}[t]
\DontPrintSemicolon
\SetKwInput{Input}{Input}
\SetAlgoLined
\LinesNumbered
\caption{\textt{\algtwo{}}}
\label{alg:gt_sum_gt}
\Input{$\forall i$, initialize $\mathbf{x}_i^{(0),0} = \mathbf{v}_i^{(0),0} = \mathbf{x}_0$, $\mathbf{y}_i^{(0)} = \mathbf{g}_i^{(0),0} = \nabla F_i(\mathbf{x}_i^{(0),0}, \xi_i^{(0),0})$, and $\mathbf{d}_i^{(-1)} = \mathbf{0}_p$; constant parameters $\alpha \ge 0, \beta \in \left [0,1 \right ), \eta, \lambda \in \left [0,1 \right ]$; $\forall i, j$, consensus matrix $\mathbf{W}$ with entries $w_{ij}$; the number of epochs $T$, and local steps $K$.}
\For{$t \in \left \{ 0, \cdots, T-1 \right \}$ {\it \bf at worker} $i$ {\it \bf in parallel}}{
\For{$\tau \in \left \{ 0, \cdots, K-1 \right \}$}{
Sample $\xi_i^{(t),\tau}$, compute $\mathbf{g}_i^{(t),\tau} = \nabla F_i(\mathbf{x}_i^{(t),\tau}, \xi_i^{(t),\tau})$. \;
$\mathbf{m}_i^{(t),\tau} = \lambda \mathbf{g}_i^{(t),\tau} + (1-\lambda) \mathbf{y}_i^{(t)}$. \;
Substitute $\mathbf{g}_i^{(t),\tau}$ with $\mathbf{m}_i^{(t),\tau}$ as the local gradient estimation, perform (\ref{eq:sum}). \;
}
Gossip averaging $\mathbf{x}_i^{(t+1)} = \sum_{j=1}^n w_{ij} \mathbf{x}_j^{(t),K}$. \;
$\mathbf{v}_i^{(t+1)} = \sum_{j=1}^n w_{ij} \mathbf{v}_i^{(t),K}$. \;
$\mathbf{d}_i^{(t)} = \frac{\mathbf{x}_i^{(t)} - \mathbf{x}_i^{(t+1)}}{K \eta}$. \;
Gradient tracking based on $\mathbf{y}_i^{(t+1)} = \sum_{j=1}^n w_{ij} \left ( \mathbf{y}_j^{(t)} +  \mathbf{d}_j^{(t)} - \mathbf{d}_j^{(t-1)} \right )$. \;
}
% \State  \Comment{update the model parameters}
% \State  \Comment{update the auxiliary variables}
% \State  \Comment{estimate the global optimization directions}
% \State  \Comment{update the tracking variables}
\end{algorithm}

In this subsection, we go further
the fact that heterogeneity hinders the local momentum acceleration ~\cite{lin2021quasi} and provides our second algorithm in \framework{}, termed \textt{\algtwo}, which aims to generalize the consensus model parameters better and alleviate the impact of heterogeneous data by applying the gradient tracking  technique.
% After that, we provide a rigorous proof of \textt{\algtwo}.

% GT is firstly proposed to exploit the successive local gradient estimations among workers for decentralized convex optimization. Recent works further illustrate its effectiveness on how it behaves on non-convex tasks \cite{xin2021improved,xin2021hybrid} and is affected by the stochastic noise \cite{koloskova2021improved}. 
Taking the discrepancies between workers' local data partition into account, GT introduces an extra worker-sided auxiliary variable $\mathbf{y}_i^{(t)}, \forall i$ aiming to asymptotically track the average of $\nabla f_i$ assuming the local accurate gradients are accessible at any epoch $t$. Intuitively, GT is agnostic to the heterogeneity, while $\mathbf{y}_i^{(t)}$ is approximately equivalent to the global gradient direction along with the epoch $t$ increases. Inspired by this, we
%It is then natural to 
introduce GT into \textt{\algone}, yields \textt{\algtwo}. Concretely, we normalize the applied gradient $\mathbf{m}_i^{(t) ,\tau}$ using the mini-batch gradient $\mathbf{g}_i^{(t),\tau}$, and the $\mathbf{y}_i^{(t)}$ with the dampening factor $\lambda$ to highlight the necessity of local updates. The detailed algorithm is described in Algorithm~\ref{alg:gt_sum_gt}. Within local updates, the model parameters are updated on line 5 with \textt{\algone} but using a normalization term $\mathbf{m}_i^{(t),\tau}$. Line 7 and 8 are the same as the basic \textt{\algone} procedures in Algorithm~\ref{alg:sum_with_vanilla}. For \textt{\algtwo}, we apply the difference of two consecutive synchronized models shown in line 9 to update the gradient tracker variable in line 10 using the gossip-liked style~\cite{xin2021improved,xin2021hybrid}. 
Especially, when $K = 1$, $\lambda = 1$ and $\beta = 0$, the Algorithm~\ref{alg:gt_sum_gt} can be reduced to the original GT algorithm ~\cite{koloskova2021improved} instance.
% with momentum is covered by tuning the hyperparameters when $K = 1$, and $\lambda = 1$. \du{Why not move these two hyperparameters to the Evaluation section? What's the meaning of this sentence here? }  \ncd{k=1, $\lambda=1$, XXX}

%\ncd{
Since Algorithm~\ref{alg:sum_with_vanilla} and \ref{alg:gt_sum_gt} employ multiple consensus steps from parameters exchanging which significantly increase communication cost, we % are aware of this shortcoming and 
apply the communication compression technique GRACE~\cite{xu2021grace} to trade off between model generalization and communication overhead in Section~\ref{sec:eval}.

\subsection{Theoretical Analysis}
\label{secsub:theoretical}
In what follows, we present the convergence analysis of two algorithms in the \framework{} for general non-convex settings. The detailed proof is in Appendix. Firstly, we state our assumptions throughout the paper.

\begin{assumption}[$L$-smooth]
\label{ass:l_smooth}
For each function $f_i: \mathbb{R}^d \to \mathbb{R}$ is differentiable, and there exists a constant $L > 0$ such that for each $\mathbf{x}, \mathbf{x}^{\prime} \in \mathbb{R}^d: \left \| \nabla f(\mathbf{x} ) - \nabla f(\mathbf{x}^{\prime} ) \right \| \le L\left \| \mathbf{x} - \mathbf{x}^{\prime}   \right \|$.
\end{assumption}

\begin{assumption}[Bounded variances]
\label{ass:bound_variance}
We assume that there exists $\sigma>0$ and $\zeta > 0$ for any $i, \mathbf{x} \in \mathbb{R}^d$ such that $\mathbb{E}_{\xi_i \sim \mathcal{D}_i } \left \| \nabla F_i(\mathbf{x}, \xi_i) - \nabla f_i(\mathbf{x} )  \right \|^2 \le \sigma^2 $, and $\frac{1}{n} \sum_{i=1}^n\left \| \nabla f_i(\mathbf{x}) - \nabla f(\mathbf{x} ) \right \|^2 \le \zeta^2 $.
\end{assumption}

Assumptions~\ref{ass:l_smooth} and \ref{ass:bound_variance} are standard in general non-convex objective literature~\cite{lin2021quasi,yu2019linear,koloskova2020unified} in order to ensure the basis of loss functions continuous and the limited influence of heterogeneity among distributed scenarios. Noted that when $\zeta = 0$, we have $\nabla f_i(\mathbf{x}) = \nabla f(\mathbf{x})$, \ie, it reduces to the case of IID data distribution across all participating workers. The third common assumption is to assume the stochastic gradients are uniformly bounded which is stated as follows.
\begin{assumption}[Bounded stochastic gradient]
We assume that the second moment of stochastic gradients is bounded for any $i, \mathbf{x} \in \mathbb{R}^d, \mathbb{E}_{\xi_i \sim \mathcal{D}_i} \left \| \nabla F_i(\mathbf{x}, \xi_i) \right \|^2 \le G^2$.
\label{ass:bound_sto_grad}
\end{assumption}

\begin{assumption}
\label{ass:mixing_matrix}
The mixing matrix is doubly stochastic by Definition~\ref{def:consensus_matrix}. Further, define $\bar{\mathbf{Z}} = \mathbf{Z}\frac{1}{n}\mathbf{11}^{\top}$ for any matrix $\mathbf{Z} \in \mathbb{R}^{d \times n}$. Then, the mixing matrix satisfies $\mathbb{E}_{\mathbf{W}} \left \| \mathbf{ZW} - \bar{\mathbf{Z}} \right \|^2_F \le (1-\rho) \left \| \mathbf{Z} - \bar{\mathbf{Z}} \right \|^2_F$.
\end{assumption}

In Assumption~\ref{ass:mixing_matrix}, we assume that $\rho := 1- \max \left \{ \left | \lambda_2(\mathbf{W} )  \right |,\left | \lambda_n(\mathbf{W} ) \right |  \right \}^2 > 0$, where let $\lambda_i(\mathbf{W})$ denote the $i$-th largest eigenvalue of the mixing matrix $\mathbf{W}$ with $-1 \le \lambda_n(\mathbf{W} ) \le \cdots \le \lambda_2(\mathbf{W} ) \le \lambda_1(\mathbf{W} ) \le 1$. For example, the value of $\rho$ is commonly used when $\rho = 1$ for the full-mesh (complete) communication topology.

\subsubsection{Convergence Analysis of \textt{\algone{}}}
\label{subsubsec:convergence_algone}
We now state our convergence result for \textt{\algone} (red highlight) in Algorithm~\ref{alg:sum_with_vanilla}. The detailed proof is presented in Appendix~\ref{appendix:proof_alg1}

\begin{theorem}
\label{thm:dsgd_sum}
Considering problem (\ref{eq:obj_func}) under the above mentioned assumptions, we denote $\beta_0 = \max \left \{ 1+\beta, 1+\alpha\beta \right \}$, for all $T \ge 1$ and $K \ge 1$ in Algorithm~\ref{alg:sum_with_vanilla} with learning rate $\eta \le \frac{\rho}{5L}$ and parameters satisfy $\frac{4-\rho}{2} < \frac{1}{\beta_0^2}$, we have
\begin{equation}
\begin{aligned}
& \frac{1}{KT} \sum_{t=0}^{T-1}\sum_{\tau=0}^{K-1} 
\mathbb{E} \left \| \nabla f(\bar{\mathbf{x}}^{(t), \tau}) \right \|^2 \\
& \quad \le \frac{2 \left ( f(\mathbf{x}_0) - f^{\ast} \right )  }{\tilde{\eta}KT}
+ \frac{2\beta^2\hat{\eta}^2L^2G^2}{n
% \tilde{\eta}
(1-\beta)^4}
+ \frac{2L^2C_1}{n^2(1-Q_1)} \\
& \quad \quad  + \frac{L}{n} \left (\sigma^2 + 2\tilde{\eta}G^2
+ 3\sigma^2\tilde{\eta} \right ), \notag 
\end{aligned}
\end{equation}
where $\tilde{\eta} = \frac{\eta}{1-\beta}$, $\hat{\eta} = \left ( (1-\beta)\alpha - 1\right ) \eta$, $Q_1 = 2\beta_0^2 (1-\frac{\rho}{4})$, and $C_1 = 24\eta^2\beta_0^2\zeta^2/\rho + 4\left ( 1 - \rho \right ) \left ( 1 + 2\alpha\beta + 2\alpha^2\beta^2  \right )  \eta^2\sigma^2$. 
\end{theorem}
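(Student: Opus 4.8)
The plan is to follow the standard virtual-iterate / consensus-recursion / descent-lemma route for decentralized momentum SGD, specialized to the unified update (\ref{eq:sum}). \emph{Step 1 (averaged dynamics and an auxiliary sequence).} Since $\mathbf{W}$ is doubly stochastic, the gossip step (\ref{eq:gossip_avg_model}) preserves the mean, so unrolling (\ref{eq:sum}) yields $\bar{\mathbf{x}}^{(t),\tau+1}=\bar{\mathbf{x}}^{(t),\tau}-\eta\bar{\mathbf{g}}^{(t),\tau}+\beta(\bar{\mathbf{x}}^{(t),\tau}-\alpha\eta\bar{\mathbf{g}}^{(t),\tau}-\bar{\mathbf{v}}^{(t),\tau})$ with $\bar{\mathbf{g}}^{(t),\tau}=\frac1n\sum_i\mathbf{g}_i^{(t),\tau}$. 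Because for $\tau\ge 1$ the rule (\ref{eq:sum}) is equivalent to a Nesterov-type step with the gradient correction scaled by $\alpha$, I would introduce a ``momentum-free'' sequence $\mathbf{z}^{(t),\tau}$ of the schematic form $\bar{\mathbf{x}}^{(t),\tau}+\frac{\beta}{1-\beta}\big(\bar{\mathbf{x}}^{(t),\tau}-\bar{\mathbf{x}}^{(t),\tau-1}\big)$ plus an order-$\hat\eta$ gradient-difference correction, calibrated so that $\mathbf{z}^{(t),\tau+1}=\mathbf{z}^{(t),\tau}-\tilde\eta\,\bar{\mathbf{g}}^{(t),\tau}$ exactly (this is why $\tilde\eta=\eta/(1-\beta)$ is the effective step size), with $\mathbf{z}^{(0),0}=\mathbf{x}_0$. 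Assumption~\ref{ass:bound_sto_grad} then bounds $\mathbb{E}\|\mathbf{z}^{(t),\tau}-\bar{\mathbf{x}}^{(t),\tau}\|^2$ by something of order $\hat\eta^2\beta^2 G^2/(1-\beta)^4$; this gap, together with the $1/n$ from averaging, is the source of the term $\frac{2\beta^2\hat\eta^2 L^2 G^2}{n(1-\beta)^4}$.

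\emph{Step 2 (descent lemma).} Apply $L$-smoothness (Assumption~\ref{ass:l_smooth}) to $f$ along $\{\mathbf{z}^{(t),\tau}\}$, giving $\mathbb{E}f(\mathbf{z}^{(t),\tau+1})\le\mathbb{E}f(\mathbf{z}^{(t),\tau})-\tilde\eta\,\mathbb{E}\langle\nabla f(\mathbf{z}^{(t),\tau}),\bar{\mathbf{g}}^{(t),\tau}\rangle+\frac{L\tilde\eta^2}{2}\mathbb{E}\|\bar{\mathbf{g}}^{(t),\tau}\|^2$. Taking conditional expectation over the fresh samples replaces $\bar{\mathbf{g}}^{(t),\tau}$ by $\frac1n\sum_i\nabla f_i(\mathbf{x}_i^{(t),\tau})$ at the cost of $\sigma^2/n$ from Assumption~\ref{ass:bound_variance} and with the raw second moments handled by Assumption~\ref{ass:bound_sto_grad} (these produce the $\frac{L}{n}(\sigma^2+2\tilde\eta G^2+3\sigma^2\tilde\eta)$ cluster). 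I then split $\nabla f(\mathbf{z}^{(t),\tau})$ into $\nabla f(\bar{\mathbf{x}}^{(t),\tau})$ plus an error controlled by the Step-1 gap, and $\frac1n\sum_i\nabla f_i(\mathbf{x}_i^{(t),\tau})$ into $\nabla f(\bar{\mathbf{x}}^{(t),\tau})$ plus a consensus error bounded by $\frac{L^2}{n}\sum_i\|\mathbf{x}_i^{(t),\tau}-\bar{\mathbf{x}}^{(t),\tau}\|^2$ via $L$-smoothness of each $f_i$; completing the square leaves $-c\,\tilde\eta\,\mathbb{E}\|\nabla f(\bar{\mathbf{x}}^{(t),\tau})\|^2$ on the right for a constant $c$, plus the additive noise terms and the consensus term.

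\emph{Step 3 (the consensus/drift recursion --- the crux).} It remains to bound $\Xi^{(t),\tau}:=\frac1n\sum_i\mathbb{E}\|\mathbf{x}_i^{(t),\tau}-\bar{\mathbf{x}}^{(t),\tau}\|^2$ uniformly. Writing (\ref{eq:sum}) in matrix form, within an epoch the Frobenius deviation grows only through the (suitably $\alpha,\beta$-scaled) gradient increments, whose magnitude I would split into a $\nabla f(\bar{\mathbf{x}})$-part (to be absorbed in Step~4), a heterogeneity part bounded by $\zeta^2$ (Assumption~\ref{ass:bound_variance}), and a stochastic part bounded by $\sigma^2$; across each gossip step Assumption~\ref{ass:mixing_matrix} contracts the deviation by the factor $1-\rho$. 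A Young's inequality with parameter tuned to $\rho/4$ then yields a one-epoch linear recursion $\Xi^{(t+1),0}\le 2\beta_0^2\big(1-\tfrac{\rho}{4}\big)\Xi^{(t),0}+\tfrac{C_1}{n^2}+(\text{terms absorbed via }\eta\le\tfrac{\rho}{5L})$, i.e.\ with contraction rate $Q_1=2\beta_0^2(1-\rho/4)$; the hypothesis $\frac{4-\rho}{2}<\frac1{\beta_0^2}$ is exactly $Q_1<1$, so summing the resulting geometric series gives $\frac1{KT}\sum_{t,\tau}\Xi^{(t),\tau}\le\frac{C_1}{n^2(1-Q_1)}$. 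I expect this to be the main obstacle: one must simultaneously track the gossiped auxiliary variable $\mathbf{v}_i$ (which the algorithm treats as a per-epoch restart), fold $\frac1n\sum_i\|\mathbf{v}_i^{(t),\tau}-\bar{\mathbf{v}}^{(t),\tau}\|^2$ into the same coupled recursion, and verify that $\eta\le\rho/(5L)$ makes every Young cross-term and every $O(\eta^2L^2)$ remainder absorbable into the leading quantities.

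\emph{Step 4 (telescope and rearrange).} Summing the Step-2 inequality over $\tau=0,\dots,K-1$ and $t=0,\dots,T-1$, the $f$-values telescope to at most $f(\mathbf{x}_0)-f^{\ast}$ (using $\mathbf{z}^{(0),0}=\mathbf{x}_0$ and $f\ge f^{\ast}$); substituting the Step-3 bound for the consensus terms, dividing by $c\,\tilde\eta KT$, and collecting constants yields the claim, where the condition $\eta\le\rho/(5L)$ guarantees $c\ge\tfrac12$ and hence the factor $2$ in $\frac{2(f(\mathbf{x}_0)-f^{\ast})}{\tilde\eta KT}$.
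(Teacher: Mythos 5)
Your plan follows essentially the same route as the paper's proof: the paper's Lemma~\ref{lem:auxiliary_relationship} constructs exactly your momentum-free virtual sequence $\mathbf{z}$ with effective step $\tilde{\eta}=\eta/(1-\beta)$, applies the $L$-smooth descent lemma along $\bar{\mathbf{z}}$, bounds the $\bar{\mathbf{z}}$--$\bar{\mathbf{x}}$ gap through the geometric momentum accumulation and Assumption~\ref{ass:bound_sto_grad} to get the $\beta^2\hat{\eta}^2L^2G^2/(n(1-\beta)^4)$ term, splits the gradient mismatch into a $\zeta^2$ heterogeneity part plus an $L^2$-weighted consensus error, and controls the consensus error by the same two-step matrix recursion with contraction $Q_1=2\beta_0^2(1-\rho/4)$ (your condition reading $\frac{4-\rho}{2}<\frac{1}{\beta_0^2}\Leftrightarrow Q_1<1$ is exactly how the paper uses it), before telescoping. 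The only cosmetic difference is bookkeeping: the paper eliminates $\mathbf{v}_i$ by writing the update as a two-step recursion in $\mathbf{X}^{(l)},\mathbf{X}^{(l-1)}$ rather than carrying a coupled $\mathbf{v}$-deviation as you suggest, which changes nothing essential.
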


\begin{remark}
Theorem~\ref{thm:dsgd_sum} proposes a non-asymptotic convergence bound of \textt{\algone} for general neural network since the second term \ie, $\mathcal{O} \left ( \frac{L^2\hat{\eta}^2}{n}  \right )$ generates from the core SGD step in (\ref{eq:sum}). Intuitively, there exists an appropriate $\alpha$ for achieving the optimal training performance in practice, which has been observed in the single node case~\cite{yan2018unified}. In Section~\ref{sec:eval}, we perform related experiments to confirm this speculation.
\end{remark}

\subsubsection{Convergence Analysis of \textt{\algtwo{}}}
Next theorem is the convergence result of \textt{\algtwo{}} in Algorithm~\ref{alg:gt_sum_gt} when $K = 1$ with a fixed communication topology among workers for convenience, and the detailed proof is in Appendix~\ref{appendix:proof_alg2}. Based on the GT is addressed with the issue on how to apply the mini-batch gradient estimates to track the global optimization descent direction, we define the following proposition to clarify this illustration.

\begin{proposition}[Gradients averaging tracker~\cite{di2016next}]
\label{prop:appro_grad_tracking}
We assume a loose constraint that the auxiliary variables $\mathbf{y}_i^{(t)}$ are considered as the tracker of the average $\frac{1}{n} \sum_{j=1}^n \nabla f_j(\mathbf{x}_i^{(t)})$, which means for any epoch $t$, we have $\mathbb{E} \left \| \mathbf{y}_i^{(t)} - \frac{1}{n}\sum_{j=1}^n \nabla f_j(\mathbf{x}_i^{(t)})  \right \|^2 \le \epsilon ^2$.
\end{proposition}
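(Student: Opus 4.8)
The plan is to obtain the bound as a property of the gossip‑style gradient‑tracking update of Algorithm~\ref{alg:gt_sum_gt} rather than as a standing hypothesis: the argument decomposes into a \emph{mean part} that identifies the limit of the network average $\bar{\mathbf{y}}^{(t)}$, a \emph{deviation part} that controls $\mathbf{y}_i^{(t)}-\bar{\mathbf{y}}^{(t)}$ through the contraction of the mixing matrix, and an $L$‑smoothness step that glues the two together and converts model‑consensus error into gradient error. Throughout I would work with $K=1$, as in the statement, and use the matrix notation $\mathbf{Y}^{(t)}=[\mathbf{y}_1^{(t)},\dots,\mathbf{y}_n^{(t)}]$, $\mathbf{D}^{(t)}=[\mathbf{d}_1^{(t)},\dots,\mathbf{d}_n^{(t)}]$ together with their averages $\bar{\mathbf{Y}}^{(t)},\bar{\mathbf{D}}^{(t)}$.

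First I would pin down the mean dynamics. Averaging $\mathbf{y}_i^{(t+1)}=\sum_j w_{ij}\bigl(\mathbf{y}_j^{(t)}+\mathbf{d}_j^{(t)}-\mathbf{d}_j^{(t-1)}\bigr)$ over $i$ and using double stochasticity of $\mathbf{W}$ (Assumption~\ref{ass:mixing_matrix}) shows that $\bar{\mathbf{y}}^{(t+1)}-\bar{\mathbf{d}}^{(t)}=\bar{\mathbf{y}}^{(t)}-\bar{\mathbf{d}}^{(t-1)}$, so this quantity is invariant; with $\mathbf{d}_i^{(-1)}=\mathbf{0}$ this gives $\bar{\mathbf{y}}^{(t)}=\bar{\mathbf{d}}^{(t-1)}+\frac{1}{n}\sum_i\mathbf{g}_i^{(0),0}$ for $t\ge 1$. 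Next, since $\mathbf{d}_i^{(t)}=(\mathbf{x}_i^{(t)}-\mathbf{x}_i^{(t+1)})/\eta$ at $K=1$, expanding the substituted \algone{} step~(\ref{eq:sum}) with $\mathbf{g}_i^{(t),0}$ replaced by $\mathbf{m}_i^{(t),0}=\lambda\mathbf{g}_i^{(t),0}+(1-\lambda)\mathbf{y}_i^{(t)}$ and then applying gossip averaging shows that $\bar{\mathbf{d}}^{(t-1)}$ is an affine function of $\frac{1}{n}\sum_i\mathbf{m}_i^{(t-1),0}$ and a momentum correction $\propto\beta(\mathbf{v}_i^{(t-1),1}-\mathbf{v}_i^{(t-1),0})$ that itself reduces to a multiple of $\mathbf{m}_i^{(t-1),0}$ at $K=1$; taking expectations and using $\mathbb{E}\mathbf{g}_i=\nabla f_i$ (Assumption~\ref{ass:bound_variance}), $\mathbb{E}\bar{\mathbf{d}}^{(t-1)}$ then differs from $\frac{1}{n}\sum_i\nabla f_i(\mathbf{x}_i^{(t-1)})$ only through $(1-\lambda)$‑weighted copies of $\bar{\mathbf{y}}^{(t-1)}$. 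This yields a coupled recursion between $\bar{\mathbf{y}}^{(t)}$ and $\bar{\mathbf{d}}^{(t)}$, which I would close by induction to show $\mathbb{E}\|\bar{\mathbf{y}}^{(t)}-\frac1n\sum_i\nabla f_i(\bar{\mathbf{x}}^{(t)})\|^2$ is bounded by a constant times $L^2\eta^2 G^2 + \sigma^2 + (\text{model-consensus error})$.

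Next I would bound the deviation. Writing the tracking step in matrix form and subtracting its mean, Assumption~\ref{ass:mixing_matrix} gives $\|\mathbf{Y}^{(t+1)}-\bar{\mathbf{Y}}^{(t+1)}\|_F^2\le(1-\rho)\bigl\|(\mathbf{Y}^{(t)}-\bar{\mathbf{Y}}^{(t)})+(\mathbf{D}^{(t)}-\bar{\mathbf{D}}^{(t)})-(\mathbf{D}^{(t-1)}-\bar{\mathbf{D}}^{(t-1)})\bigr\|_F^2$. Because $K=1$, $\beta\in[0,1)$ and $\lambda\in[0,1]$, Assumption~\ref{ass:bound_sto_grad} (with Assumption~\ref{ass:bound_variance}) makes each $\mathbf{d}_i^{(t)}$ — and hence each increment $\mathbf{d}_i^{(t)}-\mathbf{d}_i^{(t-1)}$ — uniformly bounded in expectation, so a Young's‑inequality split turns the display into a linear recursion $a_{t+1}\le(1-\tfrac{\rho}{2})a_t+c$ with $c=\mathcal{O}((1-\rho)G^2/\rho)$, giving the uniform bound $a_t\le\mathcal{O}(G^2/\rho^2)$; in parallel I would reuse the model‑consensus estimate already established for Theorem~\ref{thm:dsgd_sum} to control $\frac1n\sum_i\|\mathbf{x}_i^{(t)}-\bar{\mathbf{x}}^{(t)}\|^2$. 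Finally, decomposing
\[ \mathbf{y}_i^{(t)}-\tfrac1n\textstyle\sum_j\nabla f_j(\mathbf{x}_i^{(t)})=\bigl(\mathbf{y}_i^{(t)}-\bar{\mathbf{y}}^{(t)}\bigr)+\bigl(\bar{\mathbf{y}}^{(t)}-\tfrac1n\textstyle\sum_j\nabla f_j(\bar{\mathbf{x}}^{(t)})\bigr)+\tfrac1n\textstyle\sum_j\bigl(\nabla f_j(\bar{\mathbf{x}}^{(t)})-\nabla f_j(\mathbf{x}_i^{(t)})\bigr), \]
bounding the first term by the $\mathbf{Y}$‑consensus estimate, the second by the mean analysis, and the third by $L^2$ times the model‑consensus error (Assumption~\ref{ass:l_smooth}), and then applying $\|a+b+c\|^2\le3(\|a\|^2+\|b\|^2+\|c\|^2)$ and taking expectations, produces a uniform bound that we name $\epsilon^2$.

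The main obstacle is that, unlike textbook gradient tracking where $\bar{\mathbf{y}}^{(t)}$ is exactly the average of the local gradients, here the effective descent direction $\mathbf{d}_i^{(t)}$ carries both the momentum correction $\beta(\mathbf{v}_i^{(t),1}-\mathbf{v}_i^{(t),0})$ and the dampened mixture with $\mathbf{y}_i^{(t)}$ itself, so the $\mathbf{Y}$‑consensus recursion, the model‑consensus recursion, and the mean‑tracking recursion are mutually coupled. The delicate part is closing this joint linear system — verifying that, under a learning‑rate restriction analogous to $\eta\le\rho/(5L)$ and the parameter condition of Theorem~\ref{thm:dsgd_sum}, its governing $3\times3$ nonnegative matrix has spectral radius below one so that all three errors remain uniformly bounded — which is presumably why the statement is posed as the loose constraint $\epsilon^2$ rather than an explicit vanishing rate.
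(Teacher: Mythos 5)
The first thing to note is that the paper never proves this statement: despite the label ``Proposition,'' it is introduced as a standing hypothesis (``we assume a loose constraint\dots''), imported from the gradient-tracking literature~\cite{di2016next}, and it is then \emph{used} as an assumption to bound the terms $T_5$ and $T_6$ in the proof of Theorem~\ref{thm:gt_dsum}. Your attempt to actually derive the bound from the dynamics of Algorithm~\ref{alg:gt_sum_gt} is therefore a genuinely different (and more ambitious) route than anything in the paper, but as written it has real gaps. First, your mean-dynamics step is not what you claim: with $\mathbf{y}_i^{(0)}=\mathbf{g}_i^{(0),0}$ and $\mathbf{d}_i^{(-1)}=\mathbf{0}$, the invariance of $\bar{\mathbf{y}}^{(t)}-\bar{\mathbf{d}}^{(t-1)}$ gives $\bar{\mathbf{y}}^{(t)}=\bar{\mathbf{g}}^{(0),0}+\bar{\mathbf{d}}^{(t-1)}$, so the stale initial stochastic gradient never telescopes away (unlike textbook GT, where $\mathbf{y}$ is corrected by gradient differences and $\bar{\mathbf{y}}^{(t)}=\bar{\mathbf{g}}^{(t)}$ exactly); moreover $\mathbf{d}_i^{(t)}$ is the \emph{applied} direction, i.e.\ a $\beta$-momentum-corrected, $\lambda$-dampened mixture involving $(1-\lambda)\mathbf{y}_i^{(t)}$ itself, so what $\bar{\mathbf{y}}^{(t)}$ tracks is the average update direction, not $\frac{1}{n}\sum_j\nabla f_j$. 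Without additional argument the ``mean part'' is only bounded at the level $\mathcal{O}(G)$, which makes the resulting $\epsilon$ a large constant rather than a tracking error --- consistent with why the authors posit the bound instead of proving it.

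Second, your deviation step is circular as stated: you bound $\mathbf{d}_i^{(t)}-\mathbf{d}_i^{(t-1)}$ ``uniformly in expectation'' from Assumptions~\ref{ass:bound_variance} and \ref{ass:bound_sto_grad}, but $\mathbf{d}_i^{(t)}$ contains $(1-\lambda)\mathbf{y}_i^{(t)}$, which is exactly the quantity whose boundedness you are trying to establish; this can only be repaired by the joint induction over the coupled $(\mathbf{X},\mathbf{Y},\text{mean})$ recursions that you defer to the last paragraph and never close. Verifying that the governing nonnegative matrix has spectral radius below one is the entire content of such a proof, and the natural sufficient conditions are precisely the parameter constraints appearing in Theorem~\ref{thm:gt_dsum} --- which creates an awkward dependency, since in the paper the proposition is an input to that theorem, not a consequence of its hypotheses. (Also, the consensus estimate you propose to ``reuse'' was derived for \textt{\algone{}} in Appendix~\ref{appendix:proof_alg1}; the analogous estimates for \textt{\algtwo{}} are Lemmas~\ref{lem:dist_x_gt}--\ref{lem:dist_xy_gt}, which themselves feed the same coupled system.) So the plan is a reasonable sketch of how one \emph{might} replace the assumption by a lemma, but it is not a proof, and its two key steps --- eliminating the persistent initialization bias and closing the coupled recursion non-circularly --- are exactly the points left open.
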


\begin{theorem}
\label{thm:gt_dsum}
Consider problem (\ref{eq:obj_func}) under the listed specific
% \du{\sout{above} list the specific assumptions} 
assumptions, we denote $\beta_0 = \max \left \{ 1+\alpha\beta, 1+\beta \right \}$, and
%\du{and sout{we}} 
set $T \ge 1$ in Algorithm~\ref{alg:gt_sum_gt} without multiple local steps (\ie, $K=1$) with learning rate $\eta$ chosen as 
\begin{equation}
0 \le \eta \le \min \left \{ \frac{\rho}{12\lambda}, \frac{1-\beta}{2}, \frac{3+\beta}{2\sqrt{3}\lambda (1+\alpha\beta)}, \frac{1+\beta}{2\sqrt{3}\lambda\alpha\beta}   \right \}\frac{1}{L}  \notag
\end{equation}
and parameters satisfy
\begin{equation}
\left\{\begin{matrix}
\begin{aligned}
 & (1+\alpha\beta) (1-\lambda) \le \frac{1}{2\sqrt{2}},\\
 & \rho \le \frac{48 \lambda^2L^2}{(1-\lambda)^2},\\
 & 4\beta_0^2\left ( 1-\frac{\rho}{4}  \right ) < 1,  \\
 & 8(1-\rho) (1+\alpha\beta)^2 < 1 ,
\end{aligned}
\end{matrix}\right.\notag 
\end{equation}
we have 
\begin{equation}
\begin{aligned}
& \frac{1}{T} \sum_{t=0}^{T-1} \mathbb{E} \left \| \nabla f(\bar{\mathbf{x}}^{(t)}) \right \|^2 \\
& \le \frac{2\left ( f(\mathbf{x}_0) - f^{\ast} \right ) }{\tilde{\eta}T} 
+ \frac{8\beta^2\hat{\eta}^2L^2}{n(1-\beta)^4} \left ( \sigma^2 + 3\zeta^2 + 4G^2 \right ) \\ 
& \quad + \frac{12L^2}{n} \left ( \frac{Q_3}{T} + \frac{(2-Q_2)Q_4}{(1-Q_2)T} + \frac{V_{\max}}{1-Q_2} +\frac{C_2}{1-Q_2}  \right ) \\
& \quad + \frac{4\lambda^2\sigma^2+16\zeta^2}{n} + 8(1-\lambda^2) \epsilon^2 + \frac{12\beta^2\hat{\eta}^2L^2\epsilon^2}{(1-\beta)^4} \notag
\end{aligned}
\end{equation}
where $\tilde{\eta} = \frac{\eta}{1-\beta}$ and $\hat{\eta} = (\alpha - \alpha\beta -1)\eta$. In addition, $C_2 = \frac{\beta_0^2\left ( 1-\frac{\rho}{2}  \right )}{L^2} (\sigma^2+\zeta^2) (192\lambda^2L^2 + \rho )$, $Q_2 \triangleq \min \left \{ 4\beta_0^2\left ( 1-\frac{\rho}{4}  \right ), 8(1-\rho) (1+\alpha\beta)^2  \right \}$, $Q_3 = 6(\zeta^2+\sigma^2)$, $Q_4 = \beta_0^2 \left ( 1-\frac{\rho}{2}  \right ) (\sigma^2+\zeta^2) \left ( 48+ \frac{\rho}{L^2} + 192\lambda^2   \right )$. Furthermore, we define 
$V_{\max}
\triangleq \max_{0 \le t \le T-1} \left \{ \frac{1}{n} \left (\mathbb{E} \left \| \mathbf{X}^{(t)} - \bar{\mathbf{X}}^{(t)} \right \|_F^2
+  \mathbb{E} \left \| \mathbf{Y}^{(t)} - \bar{\mathbf{Y}}^{(t)} \right \|_F^2 \right )\right \} .$
\end{theorem}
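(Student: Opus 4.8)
The plan is to follow the standard descent-lemma route for decentralized momentum SGD, adapted to the gradient-tracking correction. First I would unroll the \textt{\algtwo} update for $K=1$ into a single effective step on the averaged iterate $\bar{\mathbf{x}}^{(t)}$: since $\mathbf{W}$ is doubly stochastic, averaging the gossip step kills the mixing matrix and leaves $\bar{\mathbf{x}}^{(t+1)} = \bar{\mathbf{x}}^{(t)} - \tilde{\eta}\,\overline{\mathbf{m}}^{(t)} + \beta$-correction terms coming from the $\mathbf{v}$-restart and the $\alpha$-scaling in (\ref{eq:sum}), exactly as (\ref{eq:momentum_hb_one})/(\ref{eq:momentum_nesterov_one}) collapse the two-line recursion. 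Collecting the $\beta$ terms into an $\hat{\eta}$-scaled perturbation is what produces the $\frac{\beta^2\hat\eta^2 L^2}{n(1-\beta)^4}$ factor in the bound. Then apply Assumption~\ref{ass:l_smooth} (descent lemma) to $f(\bar{\mathbf{x}}^{(t+1)})$, take expectations, and use $\overline{\mathbf{m}}^{(t)} = \lambda\overline{\mathbf{g}}^{(t)} + (1-\lambda)\overline{\mathbf{y}}^{(t)}$ together with Proposition~\ref{prop:appro_grad_tracking} to relate $\overline{\mathbf{y}}^{(t)}$ to $\frac1n\sum_j\nabla f_j(\cdot)$ up to the $\epsilon^2$ slack; this is where the $8(1-\lambda^2)\epsilon^2$ and the $\frac{4\lambda^2\sigma^2+16\zeta^2}{n}$ terms enter, via Assumptions~\ref{ass:bound_variance} and \ref{ass:bound_sto_grad} to control $\sigma^2,\zeta^2,G^2$.

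Next I would set up the consensus recursions. Define the combined deviation potential $V^{(t)} = \frac1n\big(\mathbb{E}\|\mathbf{X}^{(t)}-\bar{\mathbf{X}}^{(t)}\|_F^2 + \mathbb{E}\|\mathbf{Y}^{(t)}-\bar{\mathbf{Y}}^{(t)}\|_F^2\big)$ and show it satisfies a linear recursion $V^{(t+1)} \le Q_2 V^{(t)} + (\text{noise terms})$, where the contraction factor $Q_2 = \min\{4\beta_0^2(1-\rho/4),\, 8(1-\rho)(1+\alpha\beta)^2\}$ comes from Assumption~\ref{ass:mixing_matrix} applied to both the $\mathbf{x}$-block (which picks up the $\beta_0$ amplification from the momentum update) and the $\mathbf{y}$-block (the gradient-tracking gossip, which costs an extra $(1-\rho)$ and the $(1+\alpha\beta)^2$ factor from the $\mathbf{d}^{(t)}-\mathbf{d}^{(t-1)}$ term). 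The hypotheses $4\beta_0^2(1-\rho/4)<1$ and $8(1-\rho)(1+\alpha\beta)^2<1$ are exactly what make $Q_2<1$ so the recursion is summable; unrolling it gives a geometric-series bound $\sum_{t}V^{(t)} \le \frac{Q_3}{1-Q_2} + \dots$, producing the $\frac{Q_3}{T}+\frac{(2-Q_2)Q_4}{(1-Q_2)T}+\frac{V_{\max}}{1-Q_2}+\frac{C_2}{1-Q_2}$ block after dividing by $T$. The $C_2$ term, with its $(192\lambda^2L^2+\rho)$ shape, is the steady-state noise floor of that recursion, and the learning-rate caps $\eta\le\rho/(12\lambda L)$, $\eta\le(3+\beta)/(2\sqrt3\lambda(1+\alpha\beta)L)$, etc., are the conditions under which the cross terms between the descent inequality and the consensus recursion can be absorbed.

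Then I would telescope the descent inequality over $t=0,\dots,T-1$. The $f(\mathbf{x}_0)-f^\ast$ appears from the telescoping sum, divided by $\tilde\eta T$; the consensus terms feed in through $\|\bar{\mathbf{x}}^{(t)}-\mathbf{x}_i^{(t)}\|^2 \le \frac1n\|\mathbf{X}^{(t)}-\bar{\mathbf{X}}^{(t)}\|_F^2$ which is bounded by the summed $V^{(t)}$; and the requirement $(1+\alpha\beta)(1-\lambda)\le\frac1{2\sqrt2}$ together with $\rho\le 48\lambda^2L^2/(1-\lambda)^2$ is what lets me move the $(1-\lambda)$-weighted tracker-error contributions to the left side or bound them by the already-controlled quantities. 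Rearranging to isolate $\frac1T\sum_t\mathbb{E}\|\nabla f(\bar{\mathbf{x}}^{(t)})\|^2$ and collecting constants yields the stated bound.

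The main obstacle I expect is the coupled two-block consensus recursion: unlike plain D-SGD where one tracks only the $\mathbf{x}$-deviation, here $\mathbf{Y}^{(t)}-\bar{\mathbf{Y}}^{(t)}$ is driven by $\mathbf{d}^{(t)}-\mathbf{d}^{(t-1)} = \frac{1}{K\eta}(\mathbf{x}^{(t)}-2\mathbf{x}^{(t+1)}+\mathbf{x}^{(t+2)})$-type second differences of the iterates, so the $\mathbf{y}$-deviation at step $t+1$ depends on $\mathbf{x}$-deviations at two consecutive steps \emph{and} on the momentum buffers $\mathbf{v}$. Getting a clean single contraction $Q_2<1$ out of this — rather than a $2\times2$ system whose spectral radius one must then bound — requires carefully choosing how to split Young's-inequality cross terms and is exactly where the delicate interplay of all four parameter constraints and the four-way minimum in the $\eta$ bound comes from; I would handle it by first establishing a one-step bound for each block in terms of the other block at the previous step, then combining with weights chosen to make the off-diagonal coupling subcritical under the stated hypotheses.
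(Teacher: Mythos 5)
Your plan matches the paper's proof in all essentials: the paper formalizes your ``effective averaged step with momentum correction'' via the auxiliary sequence $\bar{\mathbf{z}}^{(t)}$ of Lemma~\ref{lem:auxiliary_relationship} (whose distance to $\bar{\mathbf{x}}^{(t)}$ produces the $\beta^2\hat{\eta}^2/(1-\beta)^4$ terms), bounds the tracker error through Proposition~\ref{prop:appro_grad_tracking} and Assumptions~\ref{ass:bound_variance}--\ref{ass:bound_sto_grad} exactly as you describe, and couples the $\mathbf{X}$- and $\mathbf{Y}$-consensus blocks in Lemmas~\ref{lem:dist_x_gt}--\ref{lem:dist_xy_gt} into a contraction with factor $Q_2<1$ under the stated parameter constraints. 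The only cosmetic difference is that the paper simply adds the two block bounds, yielding the two-step-memory recursion $V_{t+1}\le Q_2V_t+Q_2V_{t-1}+C_2$ summed via Proposition~\ref{prop:recursive_abs} (which is why the $V_{\max}/(1-Q_2)$ term survives in the final bound), rather than your weighted combination, but this is the same argument.
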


\begin{remark}
\label{remark:gt}
The fourth term on the right-hand side of the Theorem~\ref{thm:gt_dsum}, \ie, $\mathcal{O} \left ( \frac{1}{nT} + \frac{\beta_0^2}{nT} + \frac{1}{n\beta_0^2} \right )$ comes from the the additional GT step for searching global optimal descent estimation in line $10$, Algorithm~\ref{alg:gt_sum_gt}. However, this term can be dominant when $\alpha$ scales due to its higher order. Clearly, $\beta_0 = 1 + \alpha\beta$ when $\alpha > 1$, and it performs the convergence rate $\mathcal{O} \left ( \frac{\beta_0^2}{nT} \right )$, leading 
%\du{leading} 
to a significant deterioration from convergence perspective if $\alpha$ rises.
% The analysis indicates there exists a certain $\alpha$ for achieving (sub)optimal model generalization under different training tasks. 
Hence, we will show the impact of $\alpha$ in Section~\ref{sec:eval}.
\end{remark}

\section{Evaluation}
\label{sec:eval}
Our main evaluation results demonstrate that \textt{\algone{}} outperforms other methods in terms of model accuracy, and \textt{\algtwo{}} achieves a higher performance under different levels of non-IID.
%with For two different algorithms, \textt{\algone} and \textt{\algtwo} 
% on different datasets and neural networks. 
All experiments are executed in a CPU/GPU cluster, equipped with Inter(R) Xeon(R) Gold 6126, 4 GTX 2080Ti cards, and 12 Tesla T4 cards. We used Pytorch and Ray \cite{ray2018} to implement and train our models. 

\subsection{Experiment Methodology}
\label{subsec:exp_setting}
\para{Baselines.}
We consider the following three decentralized methods with momentum, which are described as follows:
$\bullet$ \textit{Local SGD}~\cite{stich2018local}  periodically averages model parameters among all worker nodes. Compared with the vanilla SGD, each node independently runs the single-node SGD with Heavy Ball momentum.
$\bullet$ \textit{QG-DSGDm}~\cite{lin2021quasi} mimics the global optimization direction and integrates the quasi-global momentum into local stochastic gradients without causing extra communication costs. It empirically mitigates the impact on data heterogeneity. 
$\bullet$ \textit{SlowMo}~\cite{slowmo} performs a slow, periodical momentum update through an All-Reduce pattern (model averaging) after multiple SGD steps. For simplicity, we use the common mini-batch gradient as the local update direction. % and set its slow learning rate as $0.6$.

\para{Datasets and models.}
We study the decentralized behaviors on both computer vision (CV) and natural language processing (NLP) tasks, including MNIST, EMNIST, CIFAR10, and AG NEWS. For all CV tasks, we train different CNN models. For NLP, we train an RNN, which includes an embedding layer, and a dropout layer, followed by a dense layer. The model description is shown in Appendix~\ref{sec:appendix:ex_setup}.

\begin{table*}[t]
\centering
\caption{
The testing accuracy with different algorithms on various training benchmarks and different degrees of non-IID.}
\label{tab:acc}
\scriptsize
%\resizebox{0.85\textwidth}{!}
{
\begin{tabular}{ccccc}
\toprule
\multirow{2}{*}{Datasets} & \multirow{2}{*}{Algorithms} & \multicolumn{3}{c}{Testing Accuracy ($\%$)} \\
\cmidrule(lr){3-5}
& & non-IID $=0.1$ & non-IID $=1$ & non-IID $=10$ \\
\midrule
\multirow{5}{*}{MNIST~\cite{lecun1998gradient}} & \texttt{Local SGD w/ momentum} & $95.66 \pm 0.21$ & $97.99 \pm 0.03$ & $98.39 \pm 0.03$ \\
& \texttt{QG-DSGDm} & $96.02 \pm 0.19$ & $97.46 \pm 1.36$ & $98.21 \pm 0.04$\\
& \texttt{SlowMo} & $97.32 \pm 0.02$ & $97.93 \pm 0.07$ & $98.34 \pm 0.06$ \\ \cmidrule(lr){2-5}
& \texttt{\algone} (ours) & $\textbf{97.89} \pm 0.21$ & $\textbf{98.77} \pm 0.04$ & $\textbf{98.94} \pm 0.01$ \\
& \texttt{\algtwo} (ours) & $97.51 \pm 0.61$ & $98.70 \pm 0.01$ & $98.82 \pm 0.03$ \\ \midrule
\multirow{5}{*}{EMNIST~\cite{cohen2017emnist}} & \texttt{Local SGD w/ momentum} & $45.90 \pm 1.21$ & $36.77 \pm 0.13$ & $38.29 \pm 0.03$ \\
& \texttt{QG-DSGDm} & $46.03 \pm 0.6$ & $46.02 \pm 0.12$ & $36.72 \pm 0.02$\\
& \texttt{SlowMo} & $45.52 \pm 0.03$ & $37.11 \pm 0.01$ & $37.50 \pm 0.0$ \\ \cmidrule(lr){2-5}
& \texttt{\algone} (ours) & $49.68 \pm 0.43$ & $49.75 \pm 0.05$ & $42.50 \pm 0.01$ \\
& \texttt{\algtwo} (ours) & $\textbf{50.49} \pm 0.82$ & $\textbf{50.25} \pm 0.07$ & $\textbf{51.87} \pm 0.02$ \\ \midrule
\multirow{5}{*}{CIFAR10~\cite{krizhevsky2009learning}} & \texttt{Local SGD w/ momentum} & $22.94 \pm 1.11$ & $42.93 \pm 0.85$ & $52.82 \pm 0.01$ \\
& \texttt{QG-DSGDm} & $26.34 \pm 1.42$ & $49.12 \pm 038$ & $54.03 \pm 0.24$\\
& \texttt{SlowMo} & $31.06 \pm 1.27$ & $50.46 \pm 0.04$ & $55.50 \pm 0.10$ \\ \cmidrule(lr){2-5}
& \texttt{DSUM} (ours) & $31.16 \pm 1.27$ & $54.34 \pm 0.11$ & $57.59 \pm 1.05$ \\
& \texttt{\algtwo} (ours) & $\textbf{36.16} \pm 0.74$ & $\textbf{56.95} \pm 1.56$ & $\textbf{59.34} \pm 1.55$ \\ \midrule
\multirow{5}{*}{AG NEWS~\cite{zhang2015character}} & \texttt{Local SGD w/ momentum} & $75.51 \pm 0.44$ & $77.98 \pm 0.39$ & $80.66 \pm 0.02$ \\
& \texttt{QG-DSGDm} & $78.82 \pm 0.31$ & $79.33 \pm 0.38$ & $82.24 \pm 0.02 $\\
& \texttt{SlowMo} & $82.57 \pm 0.03$ & $83.17 \pm 0.01$ & $83.79 \pm 0.01$ \\ \cmidrule(lr){2-5}
& \texttt{DSUM} (ours) & $84.13 \pm 0.55$ & $85.46 \pm 0.31$ & $87.52 \pm 0.04$ \\
& \texttt{\algtwo} (ours) & $\textbf{84.29} \pm 0.37$ & $\textbf{87.59} \pm 0.18$ & $\textbf{89.07} \pm 0.04$ \\ \bottomrule
\end{tabular}
}
\label{tbl:baseline_evalutaion}
\end{table*}

\para{Hyperparameters.}
For all algorithms with different benchmarks, the setting deploys $10$ workers by default. In our experiments, we set the local mini-batch size as $256$ for CIFAR10 and $128$ for the rest, and the number of local updates is set as $K=10$. To illustrate the challenge of data heterogeneity in decentralized deep training, we adopt the Dirichlet distribution value \cite{lin2021quasi} to control different levels of non-IID degree, for the case with non-IID $= 0.1, 1, 10$; the smaller the value is, the more likely the workers
% \du{works}nodes 
hold samples from only one class of labels (\ie, non-IID $= 0.1$ can be viewed as an extreme data skewness case). Besides, we set the scalar $\alpha$, momentum $\beta$, normalized parameter $\lambda$ as $2$, $0.9$, and $0.8$ respectively by default. Among choices of $\mathbf{W}$ considered in practice, we pre-construct a dynamic topology changing sequence varying from full-mesh to ring by the popular Metropolis-Hastings rule~\cite{koloskova2021improved} \ie, $w_{ij}=w_{ji}=\min \left \{ \frac{1}{\text{deg}(i)+1 }, \frac{1}{\text{deg}(j)+1 }  \right \}$ for any $i, j$, $w_{ii}=1 - \sum_{j=1}^n w_{ij}$. The learning rate $\eta$ is fine-tuned via a grid search on the set $\left \{ 10^{-2}, 10^{-1.5}, 10^{-1}, 10^{-0.5} \right \}$ for each algorithm and dataset.

\para{Performance Metrics.}
We examine the effects of different momentum variants on decentralized deep learning, including
\begin{itemize}
    \item \textit{Model generalization} is measured by the proportion between the amount of the correct data by the model and that of all data in the test dataset. We report the averaged model performance of local models over test samples.
    \item \textit{Effect of different hyperparameters} is explored by tuning their values to study the properties of \textt{\algone} and \textt{\algtwo}. % In our evaluation, we show the impact of the following factors:
    \item \textit{Scalability} is a crucial property while handling tasks in a distributed situation.
    % and record the spent training epochs when reaching to a certain accuracy.
\end{itemize}

\subsection{Evaluation results}
\label{subsec:eva_results}
\para{Performance with compared baselines.} % Table~\ref{tbl:baseline_evalutaion} examines the effect of our proposed algorithms with compared baselines on different datasets. 
In Table~\ref{tbl:baseline_evalutaion}, we can see that %\ncd{
our proposed algorithms outperform all other baselines across different levels of data skewness.
For CIFAR10 and AG NEWS, the performance of our algorithms and benchmarks: \textt{\algtwo} $>$ \textt{\algone} $>$ \textt{SlowMo} $>$ \textt{QG-DSGDm} $>$ \textt{Local SGD w/ momentum}.
%and our algorithms have similar performance but have faster rate of  convergence than benchmarks. 
% noting that
Our proposed algorithms outperform other benchmarks on model generalization and demonstrate that GT technique effectively mitigates the negative impact caused by data heterogeneity. 
% \textt{\algone} improves accuracy up to $35.8\%$ when non-IID $= 1$. Moreover, 
As the non-IID level increases, % leading model deterioration and poor convergence performance,
\textt{GT-DSUM} achieves a
% still achieves the accuracy of $59.34\%$, $56.95\%$, $36.16\%$, which is 
higher accuracy than \textt{Local SGD w/ momentum} %$12.3\%$, $32.7\%$, and 
up to $57.6\%$ on CIFAR10. % For NLP tasks, the evaluation results indicate that our proposed algorithms benefit. For example, \textt{\algone} achieves $84.13\%$, $85.46\%$, and $87.52\%$ on testing accuracy compared with other baselines which perform a poor generalization under three different degrees of data heterogeneity, changing from non-IID to IID. Moreover, \textt{\algtwo} further improves the accuracy results $11.6\%$, $12.3\%$, and $10.4\%$ better than \textt{Local SGD w/ momentum}. 

\begin{table*}[!htbp]
\centering
\caption{The impact of $\alpha$ for \textt{\algone} and \textt{\algtwo} on the test accuracy with non-IID $= 1$. 
% when $K$ and $\beta$ are kept constant.
``$\star$'' indicates non-convergence.}
\label{tbl:impact_alpha}
%\resizebox{0.85\textwidth}{!}
\scriptsize
{
\begin{tabular}{cccccccccccc}
\toprule
\multirow{2}{*}{Datasets} & \multirow{2}{*}{Methods} &  \multicolumn{10}{c}{The test accuracy ($\%$) evaluated on different $\alpha$ under the non-IID $= 1$ case} \\ \cmidrule(lr){3-12}
& & $\alpha = 0$  & $\alpha = 0.5$ & $\alpha = 1$ & $\alpha = 2$ & $\alpha = 3$ & $\alpha = 4$ & $\alpha = 5$ & $\alpha = 8$ & $\alpha = 10$ & $\alpha = 15$ \\ \midrule
\multirow{2}{*}{MNIST}    & \textt{\algone}  & $98.05$ & $98.40$ & $98.57$  &  $98.76$ & $98.85$ & $98.76$ & $98.85 $ & $\textbf{99.09}$ & $98.87$  & $92.86$  \\
& \textt{\algtwo}                          & $98.18$ & $98.50$ & $98.70$  &  $98.70$ & $\textbf{98.80}$   &  $\star $ & $\star $ & $\star$ & $\star$  & $\star$  \\ \midrule
\multirow{2}{*}{EMNIST}   & \textt{\algone}  & $37.1$ & $43.58$ & $35.58$  &  $49.75$ & $\textbf{55.32}$   &  $49.80 $ & $43.00 $ & $48.47 $ & $53.04$  & $\star$  \\
& \textt{\algtwo}                          & $33.72$ & $46.06$ & $47.10$  &  $\textbf{50.25} $ & $39.84$   &  $\star $ & $\star $ & $\star $ & $\star$  & $\star $  \\ \midrule
\multirow{2}{*}{CIFAR10}  & \textt{\algone}  & $47.10 $ & $50.85$ & $51.68$  &  $50.32$ & $\textbf{54.83}$   &  $51.10 $ & $51.53  $ & $50.68$ & $\star$  & $\star$  \\
& \textt{\algtwo}                          & $45.98$ & $49.80$ & $50.55$  &  $54.77$ & $\textbf{57.58}$   &  $54.43 $ & $\star $ & $\star $ & $\star$  & $\star$  \\ \midrule
\multirow{2}{*}{AG NEWS}  & \textt{\algone}  & $79.16$ & $81.78 $ & $83.72$  &  $85.46$ & $86.38$   &  $86.36   $ & $88.20 $ & $87.07 $ & $\textbf{88.82}$  & $88.56$  \\
& \textt{\algtwo}                          & $78.90 $ & $83.12  $ & $85.34$  &  $\textbf{87.59} $ & $86.89$   &  $77.67  $ & $\star $ & $\star $ & $\star$  & $\star$  \\ \bottomrule    
\end{tabular}
}
\end{table*}

%%%%%%%%%%%%%%% impact of local updates
\begin{figure*}[t]
\centering
\subfigure[\small LeNet over MNIST]{
\includegraphics[width=0.23\textwidth]{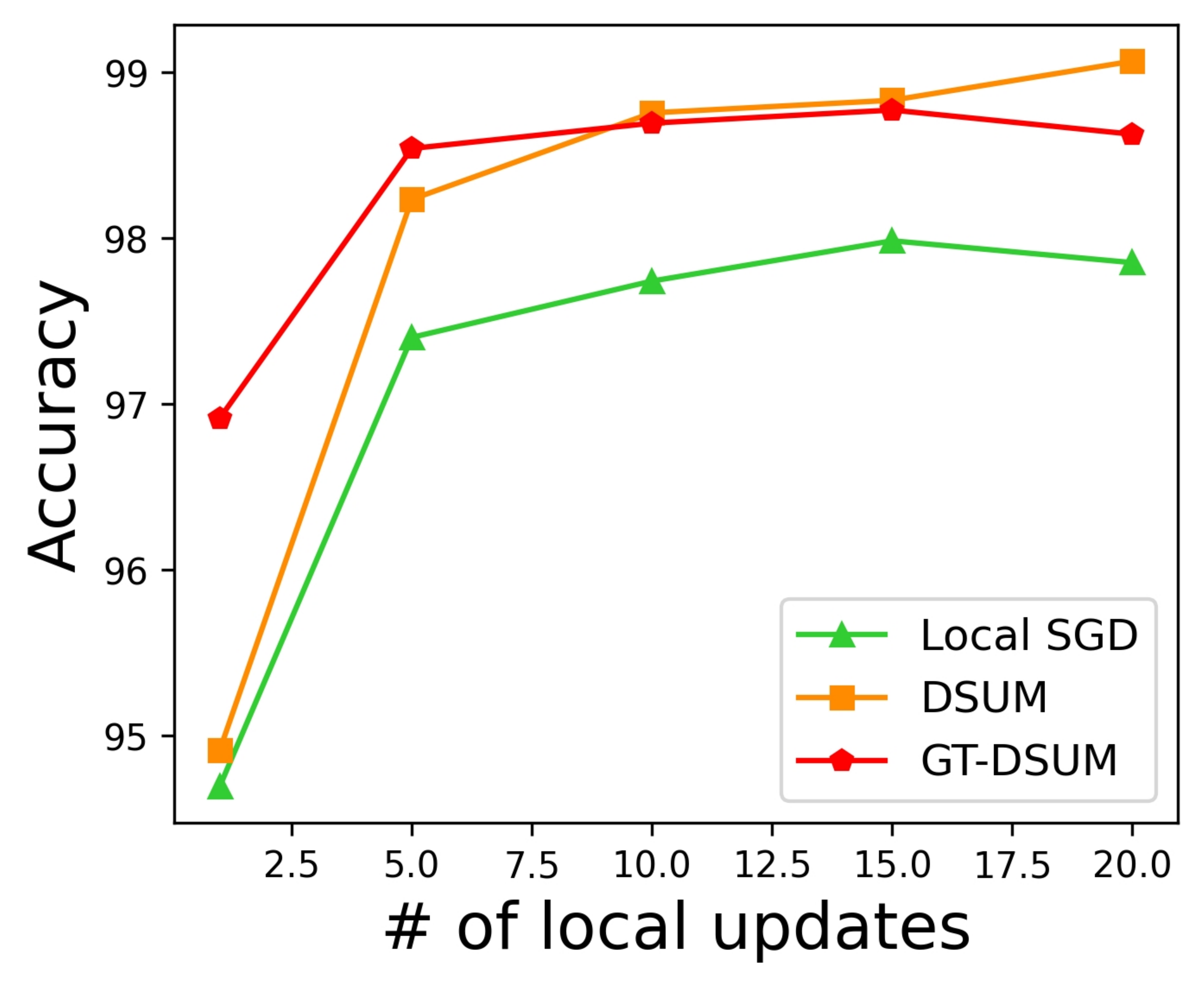}
\label{fig:impact_lu_mnist}}
\subfigure[\small CNN over EMNIST]{
\includegraphics[width=0.23\textwidth]{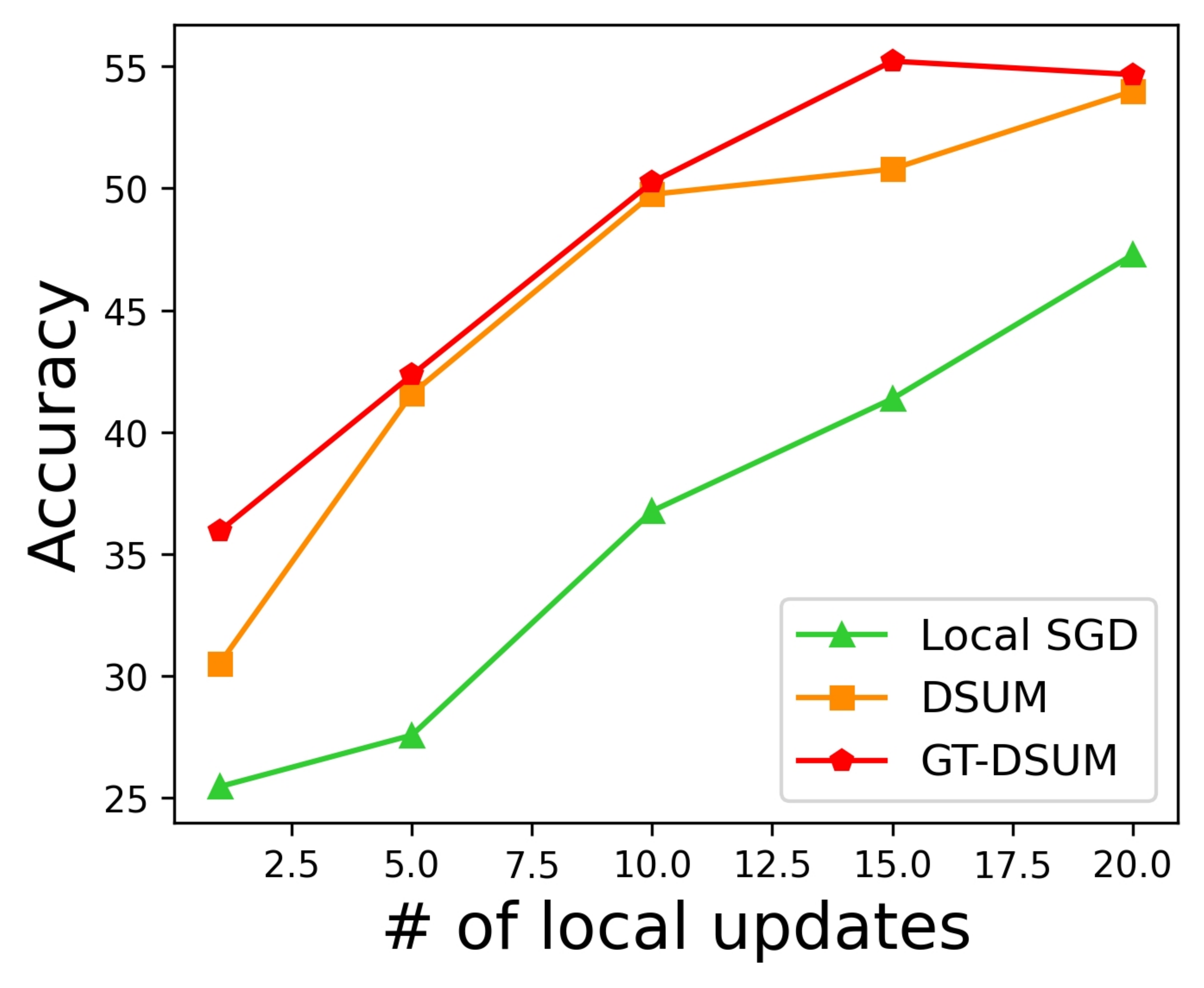}
\label{fig:impact_lu_emnist}}
\subfigure[\small LeNet over CIFAR10]{
\includegraphics[width=0.23\textwidth]{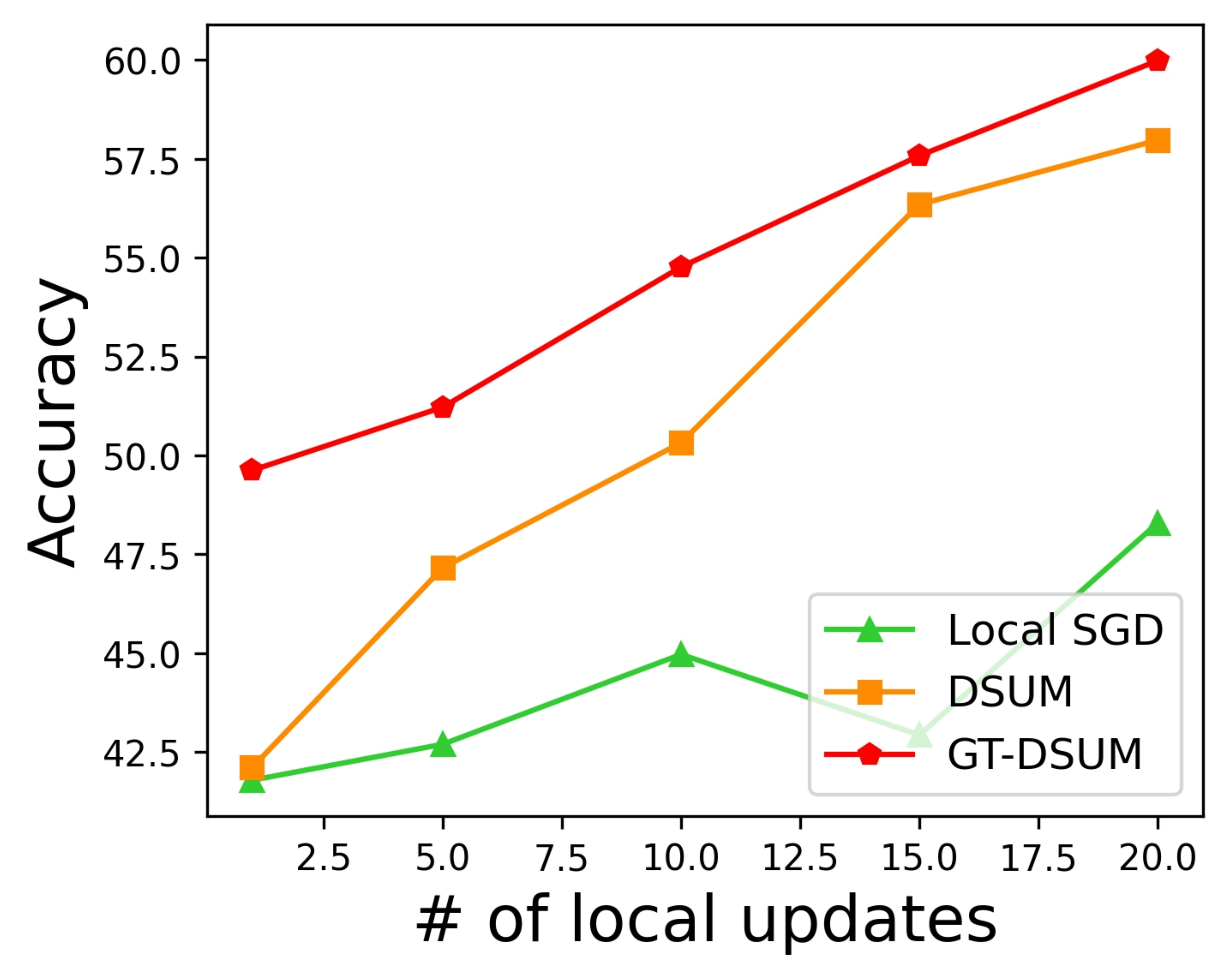}
\label{fig:impact_lu_cifar10}}
\subfigure[\small RNN over AG NEWS]{
\includegraphics[width=0.23\textwidth]{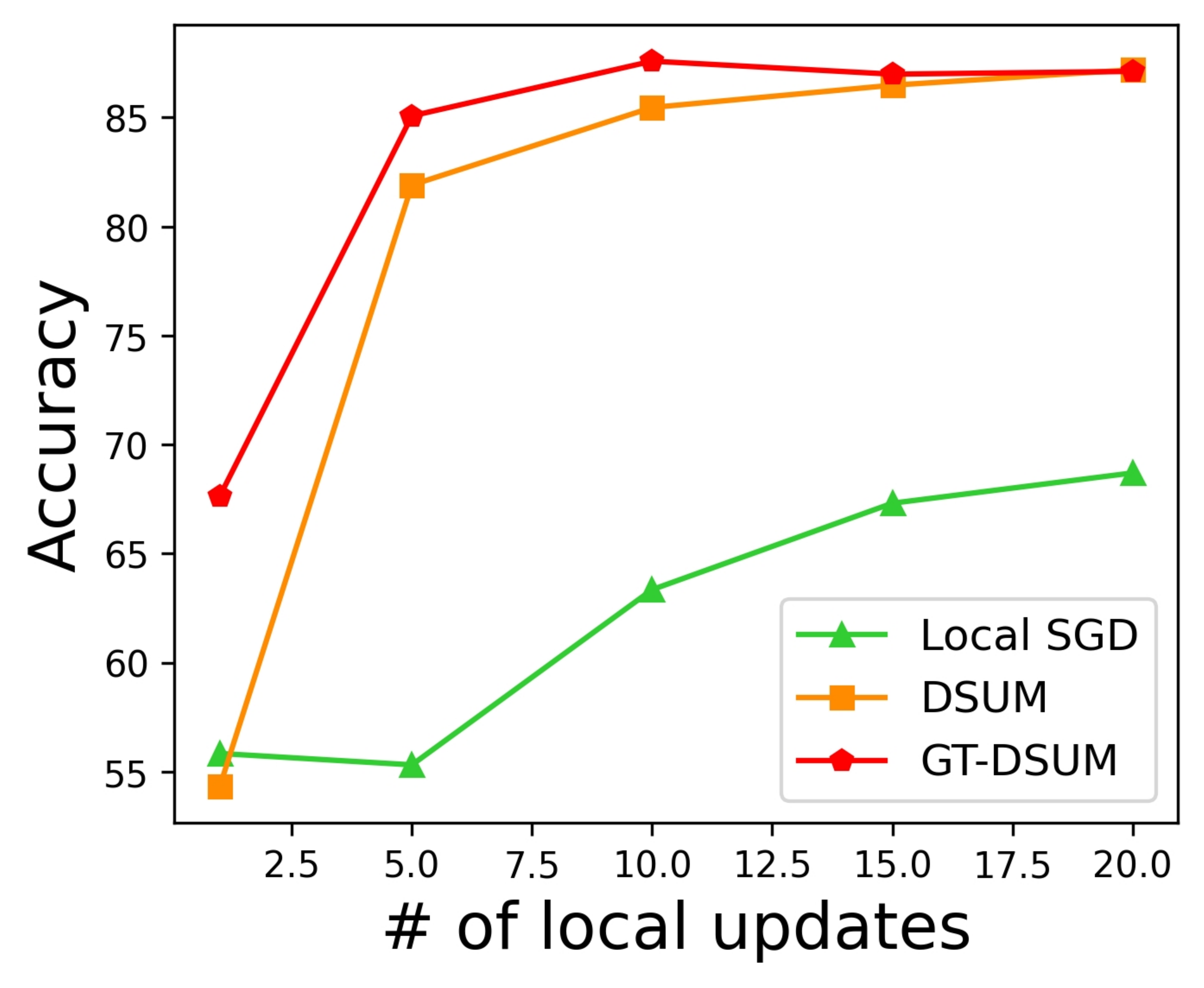}
\label{fig:impact_lu_agnews}}
\caption{Impact on the number of local updates $K$ on the convergence when momentum $\beta = 0.9$ under the non-IID $= 1$ case.}
\label{fig:impact_lu}
\end{figure*}
%%%%%%%%%%%%%%% impact of local updates

%%%%%%%%%%%%%%% impact of momentum
\begin{figure*}[!htbp]
\centering
\subfigure[\small LeNet over MNIST]{
\label{fig:impact_momentum_mnist}
\includegraphics[width=0.23\textwidth]{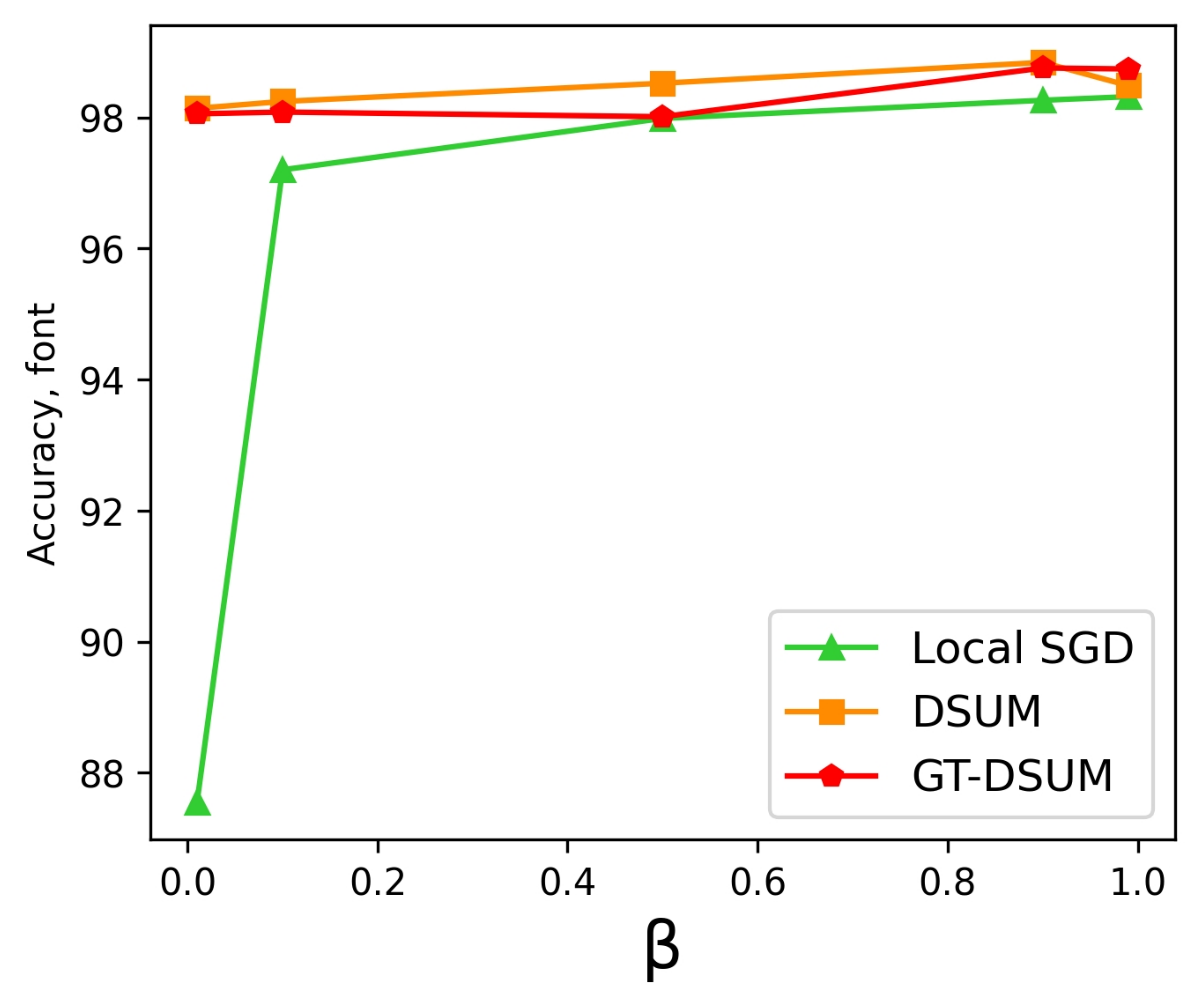}}
\subfigure[\small CNN over EMNIST]{
\label{fig:impact_momentum_emnist}
\includegraphics[width=0.23\textwidth]{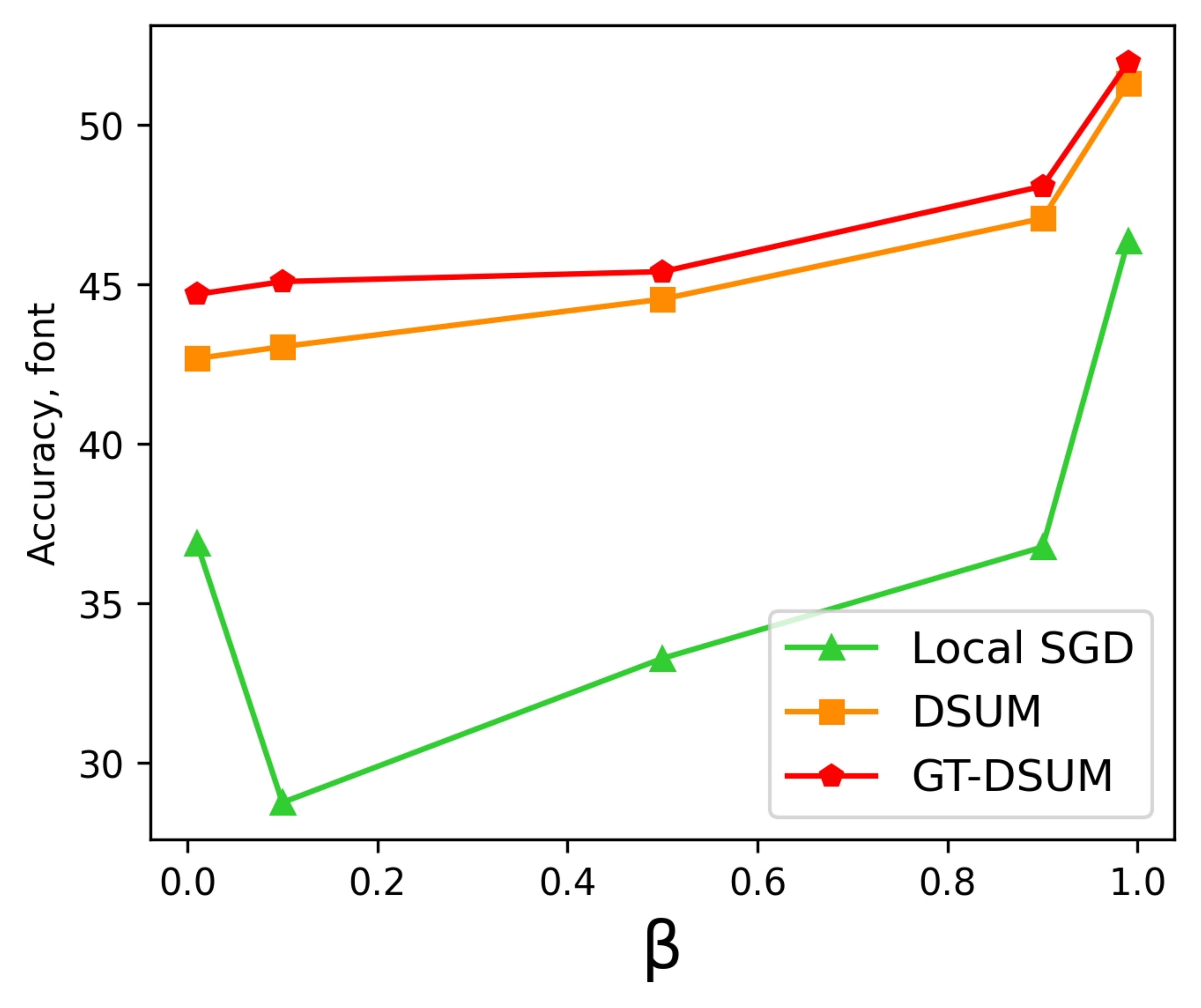}}
\subfigure[\small LeNet over CIFAR10]{
\label{fig:impact_momentum_cifar10}
\includegraphics[width=0.23\textwidth]{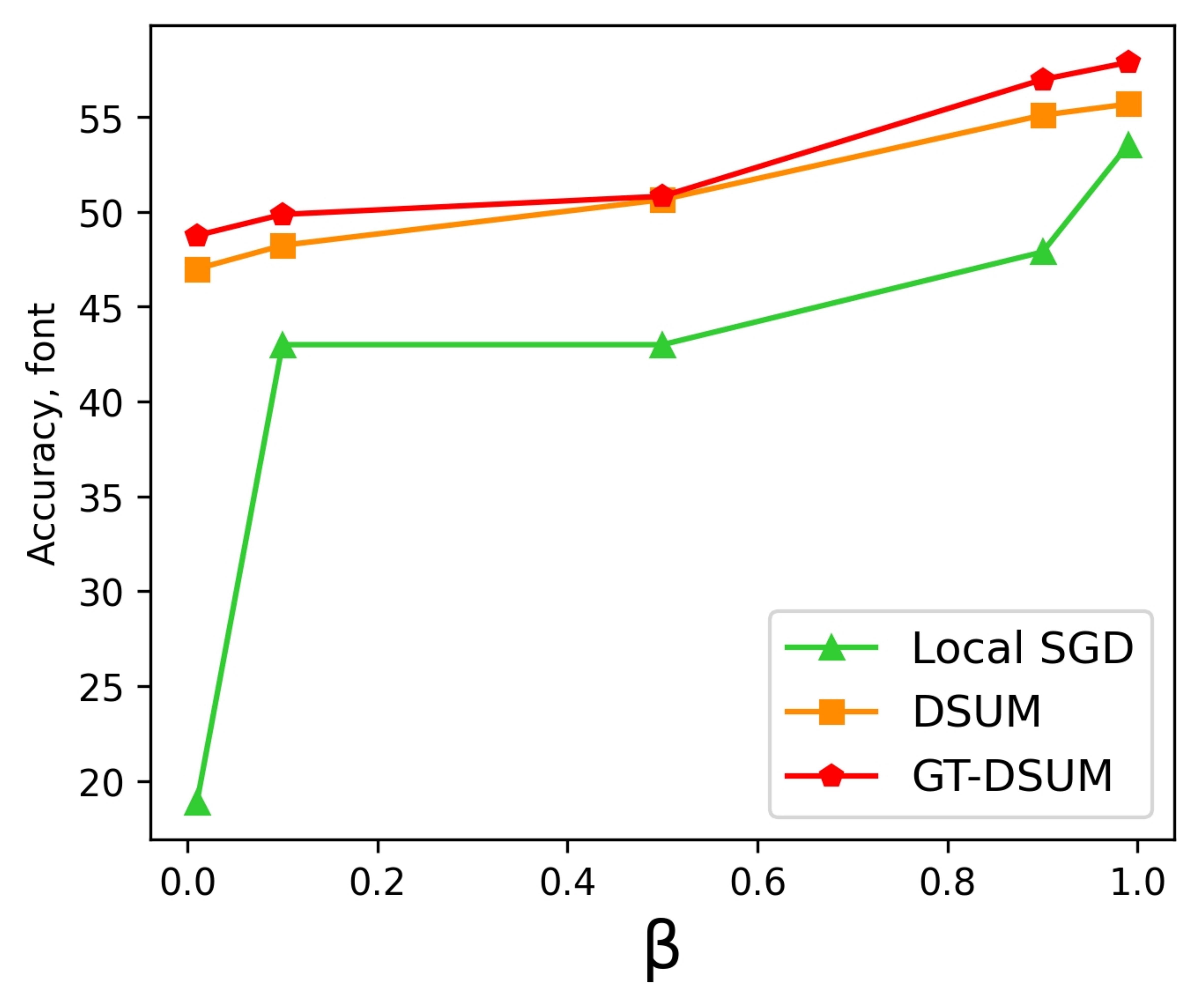}}
\subfigure[\small RNN over AG NEWS]{
\label{fig:impact_momentum_agnews}
\includegraphics[width=0.23\textwidth]{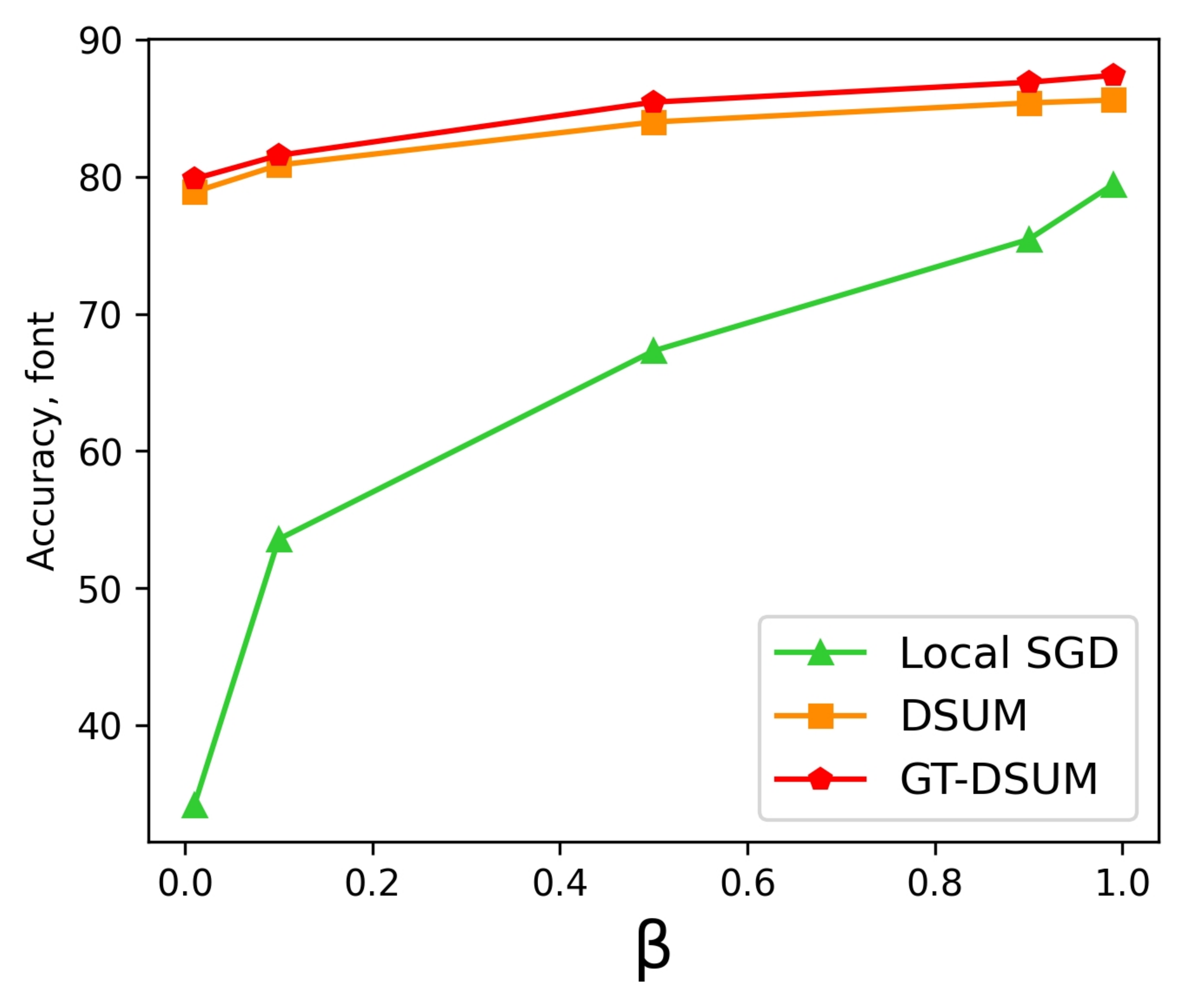}}
\caption{Impact on the momentum $\beta$ on the convergence when the number of local update $K = 10$ under the non-IID $= 1$ case.}
\label{fig:impact_momentum}
\end{figure*}
%%%%%%%%%%%%%%% impact of momentum

%%%%%%%%%%%%%%% impact of world size
\begin{figure*}[!htbp]
\centering
\subfigure[LeNet over MNIST]{
\includegraphics[width=0.23\textwidth]{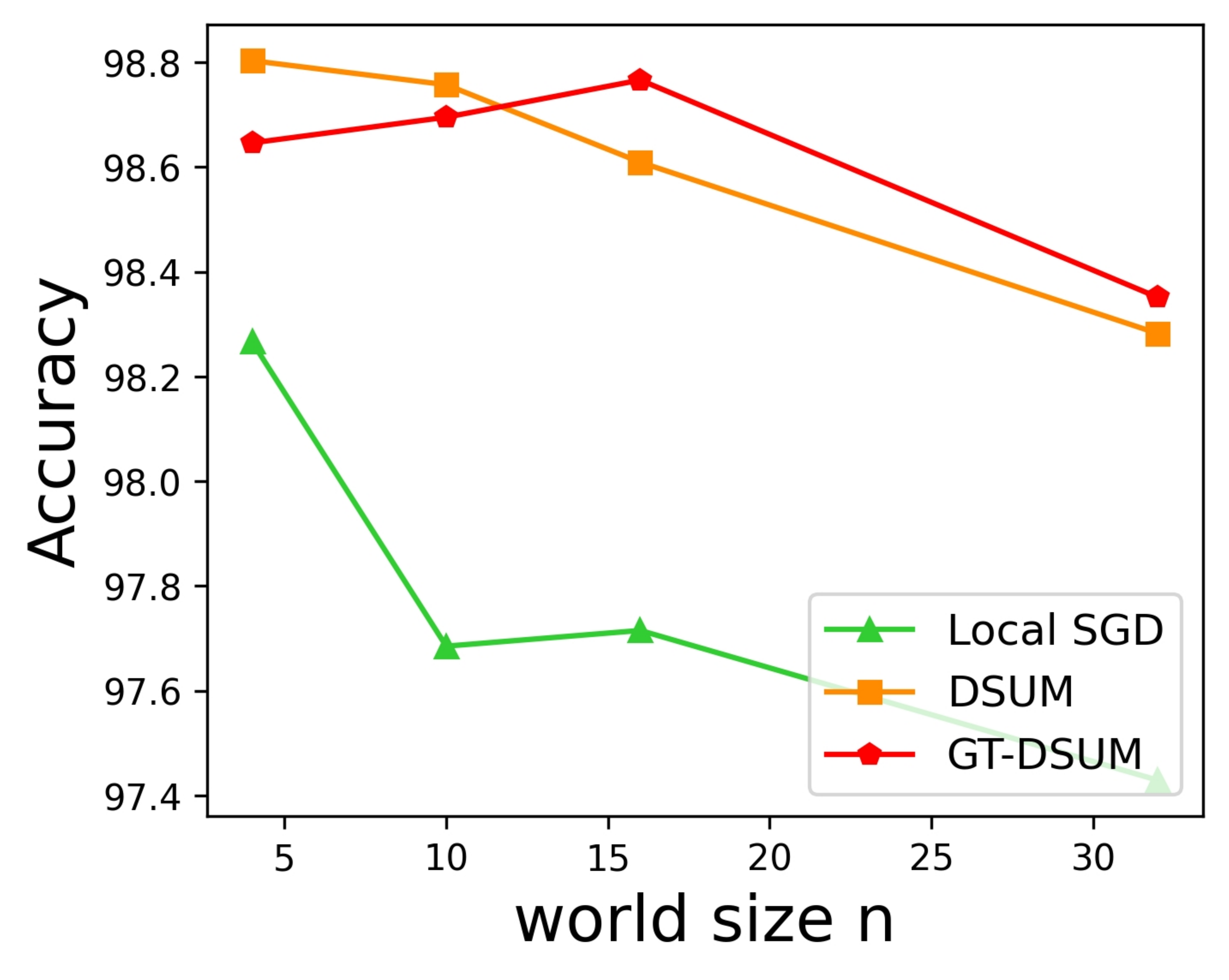}
\label{fig:impact_ws_mnist}}
\subfigure[CNN over EMNIST]{
\includegraphics[width=0.23\textwidth]{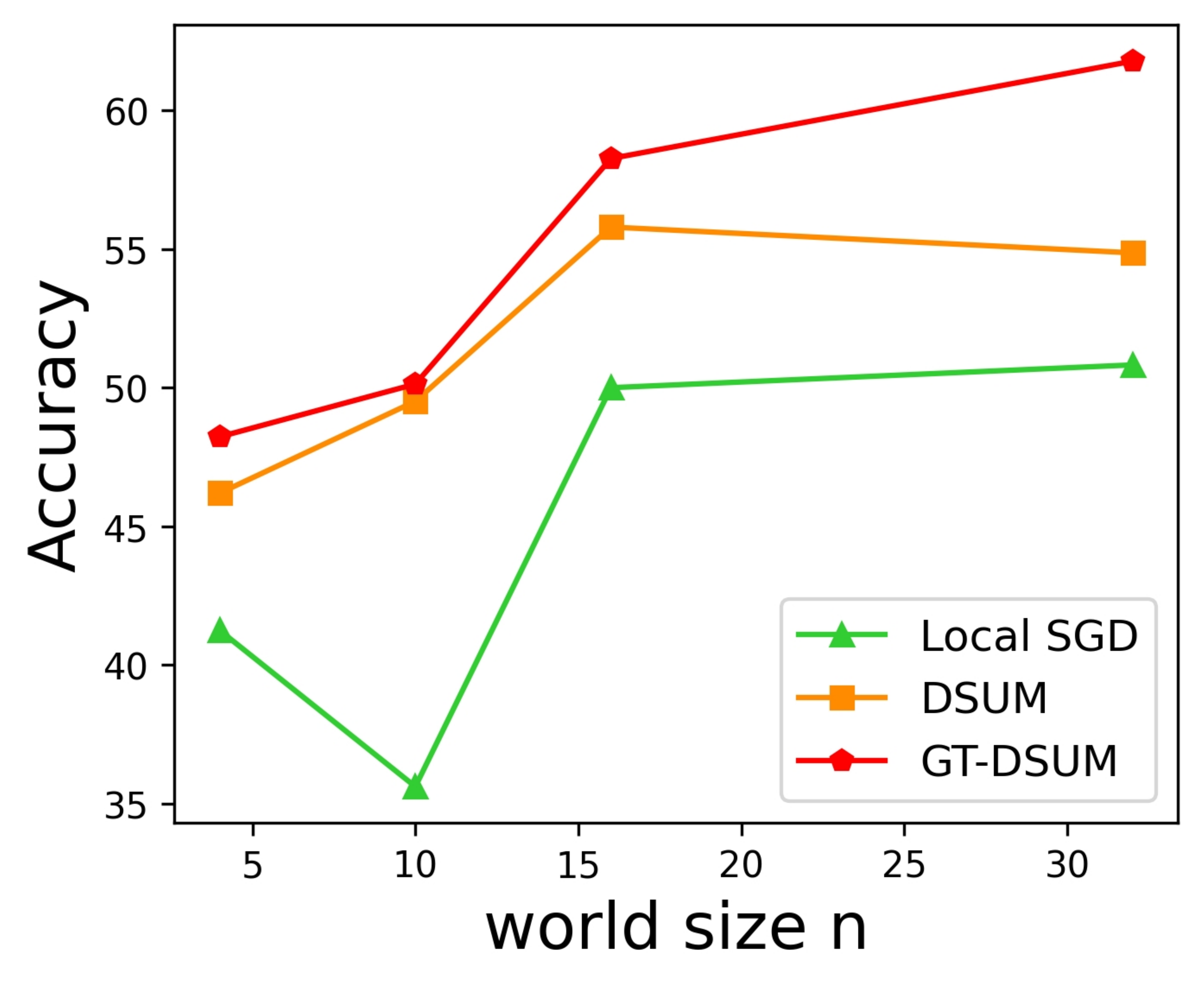}
\label{fig:impact_ws_emnist}}
\subfigure[LeNet over CIFAR10]{
\includegraphics[width=0.23\textwidth]{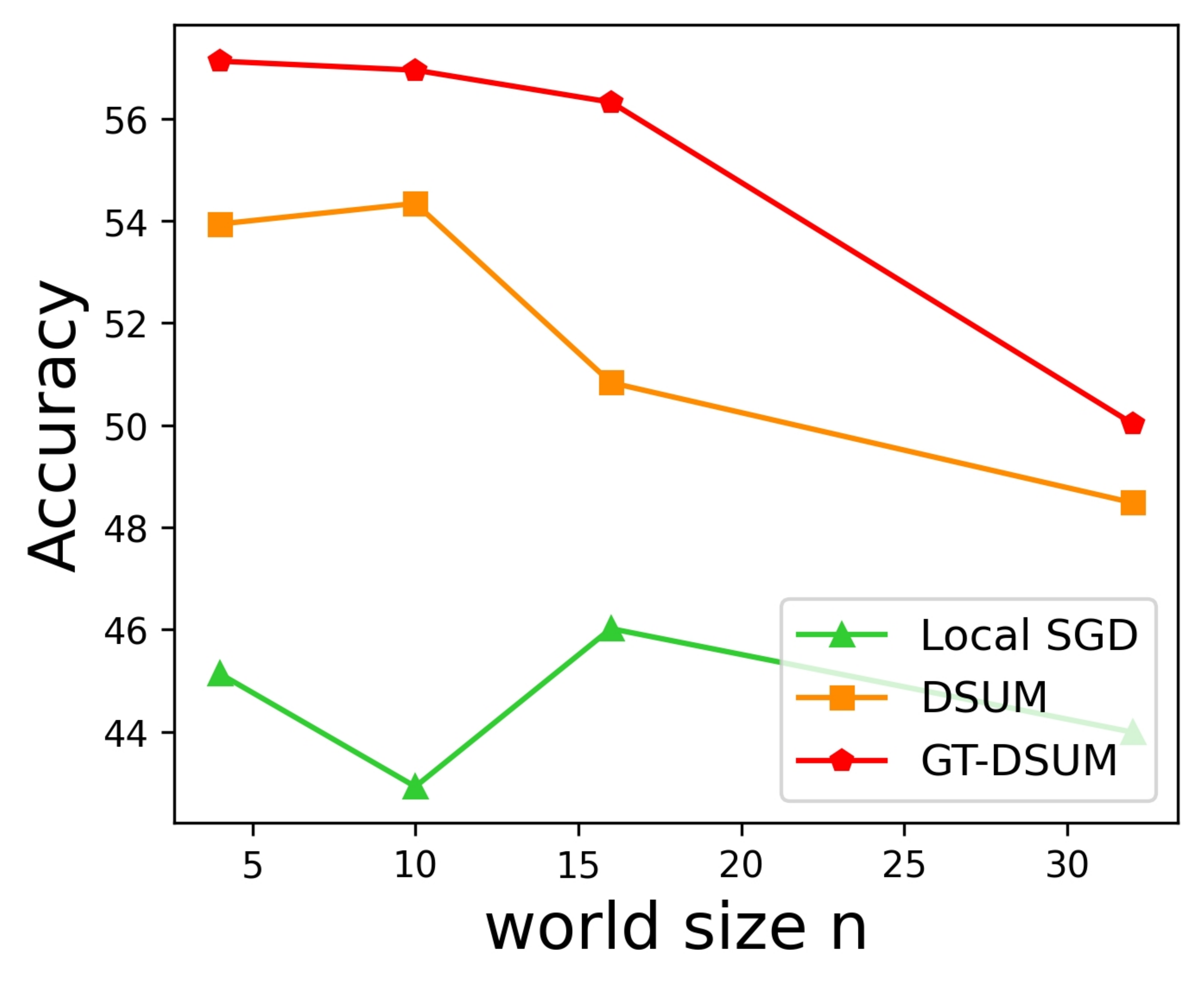}
\label{fig:impact_ws_cifar10}}
\subfigure[RNN over AG NEWS]{
\includegraphics[width=0.23\textwidth]{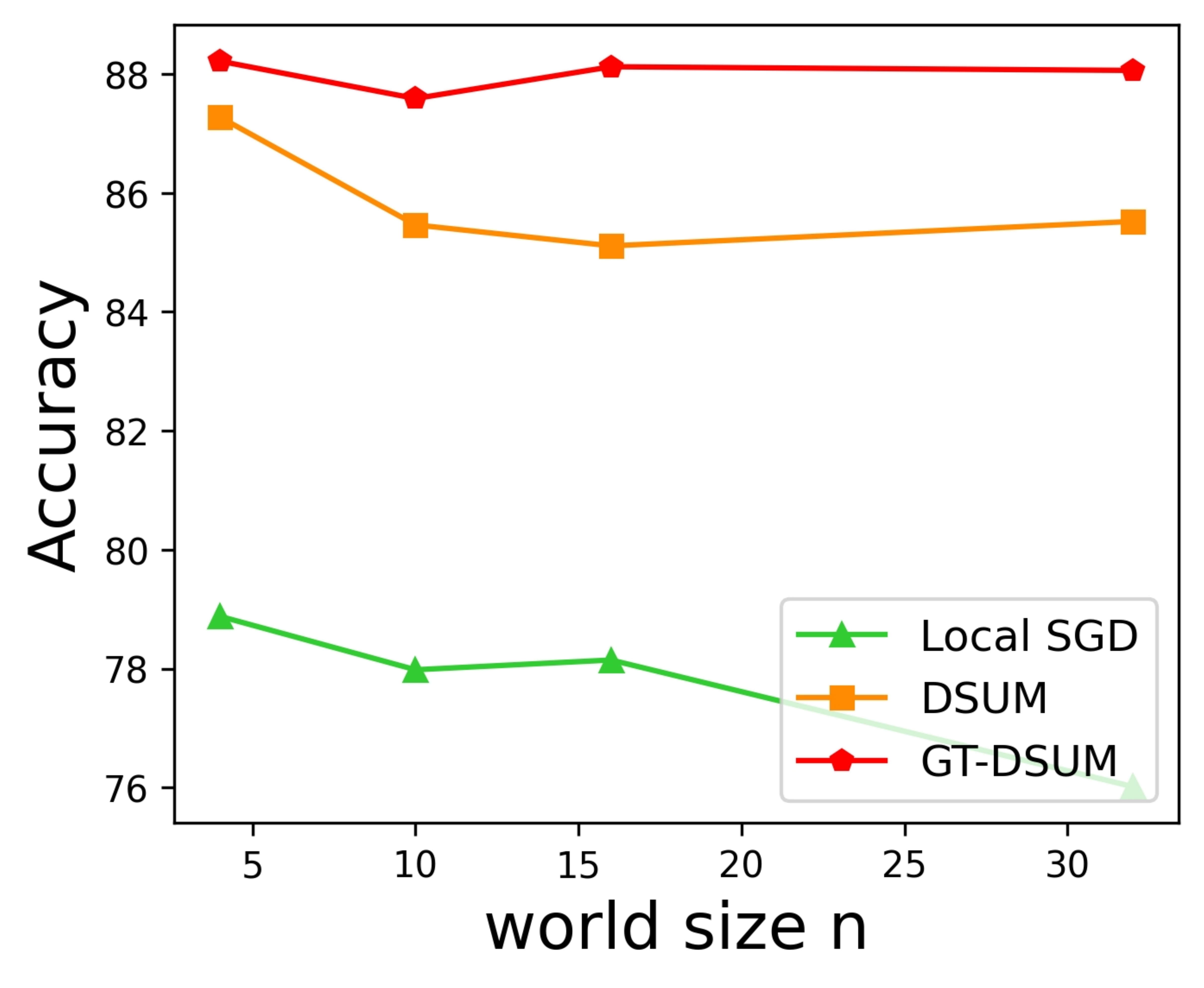}
\label{fig:impact_ws_agnews}}
\caption{Impact on the momentum world size on the convergence when the number of local update $K = 10$ under the non-IID $= 1$ case.}
\label{fig:impact_ws}
\end{figure*}
%%%%%%%%%%%%%%% impact of  world size

\para{Effect of local update.}
The number of local updates $K$ is one of the most important parameters since it influences the final model generalization and training time. As usual, the number of $K$ is set less than $20$~\cite{reddi2020adaptive,qu2022generalized}. Hence, we present the comparison with $K \in \left \{ 1, 5, 10, 15, 20 \right \}$.
We make two observations from the results in Figure~\ref{fig:impact_lu}. Firstly, our algorithms have better performance than \textt{Local SGD w/ momentum} regardless $K$. %\ncd{
% Training LeNet over MNIST in Figure~\ref{fig:impact_lu_mnist}, \textt{\algone} achieves a higher testing accuracy $99.07\%$ when $K = 20$. 
For CIFAR10 and AG NEWS, shown in Figure~\ref{fig:impact_lu_cifar10}, \ref{fig:impact_lu_agnews}, we observe that \textt{\algtwo} always keep competitive when the number of local updates increases. Among them, it 
% reaches $57.58\%$ and $85.07\%$, 
improves accuracy $53.8\%$ than \textt{Local SGD w/ momentum} when $K = 5$.
%}
% , when local update is larger than 1. For example, DUSM achieves an accuracy of 99.07\% for LeNet over MNIST, which is higher than that of benchmark. 
Secondly, %\ncd{
we can find see that the workers may not guarantee to improve the model generalization substantially by increasing the number of local updates $K$.
% \eg, test precision does not monotonically increase with $K$. For example, we find that $K = 10$ can obtain the best performance over AG NEWS, and $K = 15$ over MNIST and EMNIST for \textt{\algtwo}. 
Besides, all benchmarks perform worst when $K = 1$, while when $K$ is too large, performance may be degraded because workers' local models drift too far apart in distributed optimizations~\cite{qu2022generalized}.

\para{Effect of $\beta$.}
The momentum term $\beta$ is further investigated via grid search on the set $\left \{ 0,01, 0.1, 0.5, 0.9, 0.99 \right \}$ as different strategies for handling algorithms.
The third set of simulations evaluates the performance of model accuracy on different $\beta$, which are depicted in Figure~\ref{fig:impact_momentum}. 
We have a key observation from those results. Regardless of datasets or models, evaluation results with a greater $\beta$ (\eg, 0.9, 0.99) trend to outperform with a smaller one (\eg, 0.01, 0.1). In addition, it can be observed that the testing accuracy monotonically increases with $\beta$ on CIFAR10 and AG NEWS. 
%} \xwj{
% For EMNIST, CIFAR10 and AG NEWS, 
We note that \textt{\algtwo} reaches a higher model accuracy compared with \textt{Local SGD w/ momentum} when $\beta$ is increasing. 
Among all tasks, it improves test accuracy up to $56.8\%$
% , $16.0\%$, $51.9\%$ better 
than \textt{Local SGD w/ momentum}.
%}

\para{Sensitivity of $\alpha$.}
%\ncd{
% We conduct the hyperparameter tuning to examine the influence of parameter $\alpha$ on model performance, which are adopted the default parameter settings for $K$, $\beta$ and $\lambda$. % The results are shown in Table~\ref{tbl:impact_alpha}. 
One of the most important parameters $\alpha$ is varying from $0$ to $15$ to analyze its influence on the convergence performance.
Two observations are as follows according to Table~\ref{tbl:impact_alpha}. 
Firstly, we note that 
% there are always different optimal values of $\alpha$ for \textt{\algone} evaluated on different datasets under the same degree of data heterogeneity (\eg, non-IID $= 1$). 
different optimal values of $\alpha$ are always found when \textt{\algone} is evaluated on various datasets with the same level of data heterogeneity (\eg, non-IID $= 1$).
% For instance, \textt{\algone} achieves an accuracy of $99.09\%$ with LeNet over MNIST when it takes the optimal value of $\alpha = 8$, and $88.82\%$ for AG NEWS with $\alpha = 10$, respectively. 
It is hard and time-consuming to determine the optimum $\alpha$ due to different characteristics of datasets and models.
% however, there always exists an optimal $\alpha$ between $0$ and $15$ under different benchmarks in general. 
Secondly, as $\alpha$ increases, \textt{\algtwo} makes a significant degradation on model performance. This phenomenon verifies the analysis detailed in Remark~\ref{remark:gt}, which indicates that \textt{\algtwo} requires a stricter constraint on $\alpha$ than \textt{\algone} to ensure model validity. 
% From Table~\ref{tbl:impact_alpha}, we can see that it leads to non-convergence when $\alpha \in \left ( 3, 5\right )$. Especially, the accuracy first decreases when $\alpha$ is about $3$ when training CNN over EMNIST as well as RNN over AG NEWS. 
Empirically, there is still a gap between the vulnerable property of $\alpha$ to the momentum-based optimizer and the robustness endowing with a superior performance. 
% Furthermore, determining the optimal value of $\alpha$ theoretically for pursuing model generalization remains to be further investigated.
%}
% Table~\ref{tbl:impact_alpha} compares the testing precision of the proposed algorithms \textt{\algone} and \textt{\algtwo} with different $\alpha$. The results shown in Figure~\ref{fig:impact_alpha}, where the horizontal axis is the alpha, from 0 to 3. As the alpha increase, the average train accuracy of our algorithms are increasing from $1\%$ to $6\%$ on three datasets. Because alpha has a great impact on our algorithms, we get the conclusion that larger alpha numbers may cause the model not converging. 

\para{Scalability.}
We finally train on different numbers of workers compared with \textt{Local SGD w/ momentum} when non-IID $= 1$. We evaluate this by extending the scale by adjusting the number of devices $n$ training on $4$, $10$, $16$, and $32$ workers. Results are shown in Figure~\ref{fig:impact_ws}. When the number of participating % \xwj{workers->worker} 
workers increases, the advantage of our schemes is readily apparent since our method \textt{\algtwo} consistently reaches a higher model accuracy compared to the \textt{Local SGD w/ momentum} in this non-IID case.

\section{Conclusion}
\label{sec:conclusion}
In this paper,  we 
% put emphasis on the decentralized architecture, and 
propose a unified momentum-based paradigm \framework{} with two algorithms \textt{\algone} and \textt{\algtwo}. The former obtains good model generalization, dealing with the validity under non-convex cases, while the latter is further developed by applying the GT technique to eliminate the negative impact of heterogeneous data. 
% To sum up, we shed light on addressing the non-convex problems while taking data heterogeneity into account. 
By deriving the convergence of general non-convex settings, these algorithms achieve competitive performance closely related to a critical parameter $\alpha$. 
%in Eq(\ref{eq:sum}). 
% To the best of our knowledge, we first provide the general analysis of the decentralized SGD variants with momentum. 
Extensive experimental results show our \framework{} leads to at most 
$57.6\%$ increase in improvement of accuracy. %compared with SOTA methods.
% \du{->improvement of？}  

% with faster convergence rate and better scalability. 
% However, there is still a gap in how $\alpha$ bothers the performance of the model as it rises. Closing this gap, we believe, 
% is critical for wider adoption 
% is essential to the widespread adoption
% for model training.
% a gap still remains how $\alpha$ bothers the model performance as it increases. 
% \du{\sout{Future work may } We will} 
% We will investigate how momentum-based optimizers are manipulated by $\alpha$ and seek to minimize the adverse degradation with an optimal $\alpha$ in theory for the next step. 
% \du{for the next step}

%% The file named.bst is a bibliography style file for BibTeX 0.99c
\bibliographystyle{named}
\bibliography{reference}

\appendix
\onecolumn{}
\section{Prerequisite}
\label{appendix:prerequisite}
For giving the theoretical analysis of the convergence results of all proposed algorithms, we first present some preliminary facts as follows:
\begin{itemize}
    \item \para{Fact 1: } For any random vector $\mathbf{a}$, it holds for $\mathbb{E} \left \| \mathbf{a} \right \|^2 = \mathbb{E} \left \| \mathbf{a} - \mathbb{E} \left [ \mathbf{a}  \right ]  \right \|^2 + \left \| \mathbb{E} \left [ \mathbf{a}  \right ]   \right \|^2$.
    \item \para{Fact 2: } For any $a >0$, we have $ \pm \left \langle \mathbf{a}, \mathbf{b}   \right \rangle \le \frac{1}{2a} \left \| \mathbf{a}  \right \|^2 + \frac{a}{2} \left \| \mathbf{b}  \right \|^2$.
    \item \para{Fact 3: } $\left \langle \mathbf{a}, \mathbf{b}   \right \rangle = \frac{1}{2} \left \| \mathbf{a}  \right \|^2 + \frac{1}{2} \left \| \mathbf{b}  \right \|^2 - \frac{1}{2} \left \| \mathbf{a} - \mathbf{b}   \right \|^2    $.
    \item \para{Fact 4: } For given two vectors $\mathbf{a}$ and $\mathbf{b}, \forall a > 0$, we have $\left \| \mathbf{a} + \mathbf{b} \right \|^2 \le (1+a) \left \| \mathbf{a} \right \|^2 + (1 + \frac{1}{a}) \left \| \mathbf{b}  \right \|^2$.
    \item \para{Fact 5: } For arbitrary set of $n$ vectors $\left \{ \mathbf{a}_i \right \}_{i=1}^n$, we have $\left \| \sum_{i=1}^n \mathbf{a}_i  \right \|^2 \le n \sum_{i=1}^n \left \| \mathbf{a}_i  \right \|^2$.
    \item \para{Fact 6: } Suppose $\left \{ b_i \right \}_{i=1}^n$, and $\left \{ \mathbf{a} _i \right \}_{i=1}^n $ are a set of non-negative scalars and vectors, respectively. We define $s=\sum_{i=1}^n b_i$. Then according to Jensen's inequality, we have $\left \| \sum_{i=1}^n b_i \mathbf{a}_i  \right \|^2 = s^2 \left \| \sum_{i=1}^n \frac{b_i}{s} \mathbf{a}_i \right \|^2 \le s^2 \cdot \sum_{i=1}^n \frac{b_i}{s}  \left \| \mathbf{a}_i  \right \|^2 = s\cdot \sum_{i=1}^n b_i \left \| \mathbf{a}_i  \right \|^2$. 
\end{itemize}
The inequalities of \para{Fact 4} also hold for the sum of two matrices $\mathbf{A}, \mathbf{B}$ in Frobenius norm.

\begin{proposition}
\label{proposition:consensus_matrix_propoerty}
One step of gossip averaging with the mixing matrix $\mathbf{W}$ defined in the Definition~\ref{def:consensus_matrix} preserves the averaging of the iterates, \ie,
$\mathbf{XW}\frac{\mathbf{11}^{\top}}{n} = \mathbf{X}\frac{\mathbf{11}^{\top}}{n}$.
\end{proposition}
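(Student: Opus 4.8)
The plan is to reduce the matrix identity to the defining property of a doubly stochastic matrix, so that the statement becomes a one-line computation using associativity of matrix multiplication. First I would invoke Definition~\ref{def:consensus_matrix}: since $\mathbf{W}$ is doubly stochastic, in particular $\mathbf{W}\mathbf{1} = \mathbf{1}$, i.e. every row of $\mathbf{W}$ sums to one. Next I would isolate the right-hand factor of the product $\mathbf{X}\mathbf{W}\frac{\mathbf{1}\mathbf{1}^{\top}}{n}$ and compute $\mathbf{W}\frac{\mathbf{1}\mathbf{1}^{\top}}{n} = \frac{(\mathbf{W}\mathbf{1})\mathbf{1}^{\top}}{n} = \frac{\mathbf{1}\mathbf{1}^{\top}}{n}$. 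Substituting this back gives $\mathbf{X}\mathbf{W}\frac{\mathbf{1}\mathbf{1}^{\top}}{n} = \mathbf{X}\frac{\mathbf{1}\mathbf{1}^{\top}}{n}$, which is precisely the claim. Equivalently, writing columns, the $k$-th column of $\mathbf{X}\mathbf{W}$ has the same mean over the $n$ entries as the $k$-th column of $\mathbf{X}$, since post-multiplying by the doubly stochastic $\mathbf{W}$ only redistributes mass among the $n$ agents without changing the average.

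There is essentially no obstacle here; the only subtlety worth a sentence is the orientation of the stochasticity condition. Because the averaging operator $\frac{\mathbf{1}\mathbf{1}^{\top}}{n}$ is applied on the right and $\mathbf{W}$ is multiplied into it from the left, it is the row-stochasticity $\mathbf{W}\mathbf{1}=\mathbf{1}$ that is used; the column-stochasticity $\mathbf{1}^{\top}\mathbf{W}=\mathbf{1}^{\top}$ and the symmetry $\mathbf{W}=\mathbf{W}^{\top}$ from Definition~\ref{def:consensus_matrix} are not needed for this particular statement. I would close by noting the consequence actually exploited downstream, namely that $\bar{\mathbf{A}} = \mathbf{A}\frac{\mathbf{1}\mathbf{1}^{\top}}{n}$ is invariant under one gossip step, so that $\overline{\mathbf{A}\mathbf{W}} = \bar{\mathbf{A}}$; this is the form in which the identity enters the consensus-error recursions in the convergence proofs of \textt{\algone} and \textt{\algtwo}.
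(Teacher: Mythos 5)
Your proof is correct and is exactly the argument the paper implicitly relies on (the paper states Proposition~\ref{proposition:consensus_matrix_propoerty} without an explicit proof): it follows in one line from $\mathbf{W}\mathbf{1}=\mathbf{1}$ via $\mathbf{W}\frac{\mathbf{1}\mathbf{1}^{\top}}{n}=\frac{(\mathbf{W}\mathbf{1})\mathbf{1}^{\top}}{n}=\frac{\mathbf{1}\mathbf{1}^{\top}}{n}$. Your added remark that only row-stochasticity is needed for the identity as stated is also accurate.
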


Note our schemes have the following observation on the role of momentum, we now state a basic lemma:
\begin{lemma}
\label{lem:auxiliary_relationship}
Let introduce an auxiliary variable $\mathbf{y}_i^{(t),\tau} = \frac{\beta}{1-\beta} \left ( \mathbf{x}_i^{(t),\tau} - \mathbf{v}_i^{(t),\tau}  \right )$, we define $\mathbf{z}_i^{(t),\tau} \triangleq \mathbf{x}_i^{(t),\tau} + \mathbf{y}_i^{(t),\tau}$ and $\mathbf{c}_i^{(t),\tau} \triangleq \frac{1-\beta}{\beta} \mathbf{y}_i^{(t),\tau} $. Then we have
\begin{equation}
\label{eq:sgd_with_auxiliary}
\mathbf{z}_i^{(t), \tau+1} = \mathbf{z}_i^{(t),\tau} - \frac{\eta}{1-\beta} \mathbf{g}_i^{(t),\tau}
\end{equation}
and
\begin{equation}
\label{eq:recursive_with_auxiliary}
\mathbf{c}_i^{(t),\tau+1} = \beta \mathbf{c}_i^{(t),\tau} + \left ( \alpha - \alpha\beta -1  \right )\eta \mathbf{g}_i^{(t),\tau}.
\end{equation}
\end{lemma}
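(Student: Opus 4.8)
The plan is to prove both identities by direct substitution of the three-line \algone{} update (\ref{eq:sum}); the lemma is purely algebraic, so no stochastic, smoothness, or mixing arguments are needed. The one useful preparatory step is to eliminate $\mathbf{y}_i^{(t),\tau}$ from the definition of $\mathbf{z}_i^{(t),\tau}$, which gives the closed form
\begin{equation}
\mathbf{z}_i^{(t),\tau} \;=\; \mathbf{x}_i^{(t),\tau} + \frac{\beta}{1-\beta}\bigl(\mathbf{x}_i^{(t),\tau} - \mathbf{v}_i^{(t),\tau}\bigr) \;=\; \frac{1}{1-\beta}\,\mathbf{x}_i^{(t),\tau} - \frac{\beta}{1-\beta}\,\mathbf{v}_i^{(t),\tau}, \notag
\end{equation}
so that $\mathbf{z}$ is a fixed affine combination of $\mathbf{x}$ and $\mathbf{v}$ whose coefficients sum to one; I will work from this form throughout.

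For the first identity (\ref{eq:sgd_with_auxiliary}), I would substitute $\mathbf{u}_i^{(t),\tau+1} = \mathbf{x}_i^{(t),\tau} - \eta\mathbf{g}_i^{(t),\tau}$ into the third line of (\ref{eq:sum}) to read off the one-step displacement $\mathbf{x}_i^{(t),\tau+1} - \mathbf{x}_i^{(t),\tau} = -\eta\mathbf{g}_i^{(t),\tau} + \beta\bigl(\mathbf{v}_i^{(t),\tau+1} - \mathbf{v}_i^{(t),\tau}\bigr)$. Forming $\mathbf{z}_i^{(t),\tau+1} - \mathbf{z}_i^{(t),\tau}$ from the closed form above, the two contributions proportional to $\mathbf{v}_i^{(t),\tau+1} - \mathbf{v}_i^{(t),\tau}$ --- one coming with coefficient $\tfrac{\beta}{1-\beta}$ from the $\mathbf{x}$-displacement and one with coefficient $-\tfrac{\beta}{1-\beta}$ from the $\mathbf{v}$-displacement --- cancel exactly, leaving $\mathbf{z}_i^{(t),\tau+1} - \mathbf{z}_i^{(t),\tau} = -\tfrac{\eta}{1-\beta}\mathbf{g}_i^{(t),\tau}$.

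For the second identity (\ref{eq:recursive_with_auxiliary}), I would first establish the recursion satisfied by the difference $\mathbf{x}_i^{(t),\tau} - \mathbf{v}_i^{(t),\tau}$: plugging all three lines of (\ref{eq:sum}) into $\mathbf{x}_i^{(t),\tau+1} - \mathbf{v}_i^{(t),\tau+1}$ and collecting the coefficients of $\mathbf{x}_i^{(t),\tau}$, $\mathbf{v}_i^{(t),\tau}$ and $\mathbf{g}_i^{(t),\tau}$ yields $\mathbf{x}_i^{(t),\tau+1} - \mathbf{v}_i^{(t),\tau+1} = \beta\bigl(\mathbf{x}_i^{(t),\tau} - \mathbf{v}_i^{(t),\tau}\bigr) + (\alpha - 1 - \alpha\beta)\eta\mathbf{g}_i^{(t),\tau}$. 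Multiplying through by $\tfrac{\beta}{1-\beta}$ converts the left-hand side to $\mathbf{y}_i^{(t),\tau+1}$ and the first right-hand term to $\beta\mathbf{y}_i^{(t),\tau}$; multiplying the resulting identity by $\tfrac{1-\beta}{\beta}$ and using $\mathbf{c}_i^{(t),\tau} = \tfrac{1-\beta}{\beta}\mathbf{y}_i^{(t),\tau}$ then gives $\mathbf{c}_i^{(t),\tau+1} = \beta\mathbf{c}_i^{(t),\tau} + (\alpha - \alpha\beta - 1)\eta\mathbf{g}_i^{(t),\tau}$, exactly as stated.

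I do not anticipate any genuine obstacle: the work is entirely bookkeeping, and the only points that require care are tracking the scalar $(\alpha - \alpha\beta - 1)$ correctly (it is the same factor that defines $\hat{\eta}$ in Theorems~\ref{thm:dsgd_sum} and \ref{thm:gt_dsum}) and keeping every quantity at a common epoch index $t$, since both identities are internal to a single epoch and make no reference to the gossip step (\ref{eq:gossip_avg_model}). The reason the lemma is stated up front is its payoff for the convergence analysis: (\ref{eq:sgd_with_auxiliary}) shows that $\{\mathbf{z}_i^{(t),\tau}\}$ evolves as plain SGD with effective step size $\tfrac{\eta}{1-\beta}$, and (\ref{eq:recursive_with_auxiliary}) shows that $\{\mathbf{c}_i^{(t),\tau}\}$ is a $\beta$-contracting geometric accumulation of the stochastic gradients, so the subsequent descent lemma and consensus-error bounds in the appendix can be carried out on the $\mathbf{z}$- and $\mathbf{c}$-sequences rather than on $\mathbf{x}$ directly.
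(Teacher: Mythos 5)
Your proposal is correct and follows essentially the same route as the paper: both identities are obtained by direct algebraic substitution of the three-line update (\ref{eq:sum}) into the closed form $\mathbf{z}_i^{(t),\tau} = \frac{1}{1-\beta}\mathbf{x}_i^{(t),\tau} - \frac{\beta}{1-\beta}\mathbf{v}_i^{(t),\tau}$ and into $\mathbf{c}_i^{(t),\tau} = \mathbf{x}_i^{(t),\tau} - \mathbf{v}_i^{(t),\tau}$, with the coefficient $(\alpha-\alpha\beta-1)\eta$ emerging exactly as in the paper. The only cosmetic difference is that you cancel the $\mathbf{v}$-increments directly rather than unrolling $\mathbf{v}_i^{(t),\tau} = \mathbf{x}_i^{(t),\tau-1} - \alpha\eta\,\mathbf{g}_i^{(t),\tau-1}$ one step back as the paper does, which is equivalent bookkeeping.
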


\begin{proof}
For (\ref{eq:sgd_with_auxiliary}), starting from the definition of $\mathbf{v}_i^{(t),\tau}$ in (\ref{eq:sum}),
\begin{equation}
\begin{aligned}
\mathbf{z}_i^{(t),\tau+1} 
% & = \mathbf{x}_i^{(t),\tau+1} + \mathbf{y}_i^{(t),\tau+1} \\ 
% & = \mathbf{x}_i^{(t),\tau+1} + \frac{\beta}{1-\beta} \left ( \mathbf{x}_i^{(t),\tau+1} - \mathbf{x}_i^{(t),\tau} + \alpha\eta \mathbf{g}_i^{(t),\tau}    \right )  \\
& =  \frac{1}{1-\beta} \mathbf{x}_i^{(t),\tau+1} 
- \frac{\beta}{1-\beta} \left ( \mathbf{x}_i^{(t),\tau} - \alpha\eta \mathbf{g}_i^{(t),\tau} \right ) \\
& = \mathbf{x}_i^{(t),\tau} + \frac{\beta}{1-\beta} 
\left ( \mathbf{x}_i^{(t),\tau} - \mathbf{x}_i^{(t),\tau-1} + \alpha\eta\mathbf{g}_i^{(t),\tau-1}  \right ) - \frac{\eta}{1-\beta} \mathbf{g}_i^{(t),\tau} \\
& = \mathbf{x}_i^{(t),\tau} + \mathbf{y}_i^{(t),\tau} - \frac{\eta}{1-\beta} \mathbf{g}_i^{(t),\tau}. \notag    
\end{aligned}
\end{equation}
Similarly for (\ref{eq:recursive_with_auxiliary}), 
\begin{equation}
\begin{aligned}
\mathbf{c}_i^{(t),\tau+1} 
& = \mathbf{x}^{(t),\tau} - \eta \mathbf{g}_i^{(t),\tau} - \mathbf{x}_i^{(t),\tau} + \alpha \eta \mathbf{g}_i^{(t),\tau} + \beta \left ( \mathbf{x}^{(t),\tau} - \alpha\eta \mathbf{g}_i^{(t),\tau} - \mathbf{x}^{(t),\tau-1} + \alpha\eta \mathbf{g}_i^{(t),\tau-1} \right ) \\
& = \beta \left ( \mathbf{x}_i^{(t),\tau} - \mathbf{v}_i^{(t),\tau} \right ) 
+ \left ( \alpha - \alpha\beta -1\right ) \eta \mathbf{g}_i^{(t),\tau}.
\end{aligned}
\end{equation}
\end{proof}

\section{Proof of \textt{\algone{}}}
\label{appendix:proof_alg1}
Since the smoothness of $f$, it follows that
\begin{equation}
\label{eq:begin_smooth}
\begin{aligned}
\mathbb{E}_{t,\tau} f(\bar{\mathbf{z}}^{(t), \tau+1}) - f(\bar{\mathbf{z}}^{(t), \tau}) 
& \le -\frac{\eta}{1-\beta} \left \langle \nabla f(\bar{\mathbf{z}}^{(t), \tau}), 
\mathbb{E}_{t,\tau} \left [ \bar{\mathbf{g} }^{(t), \tau} \right ]  \right \rangle + \frac{\eta^2 L}{2(1-\beta)^2} \mathbb{E}_{t, \tau} \left \| \bar{\mathbf{g} }^{(t), \tau} \right \|^2,
\end{aligned}
\end{equation}
where $\mathbb{E}_{t,\tau}$ denotes a conditional expectation over the randomness in the $\tau$-th local updates under epoch $t$, conditioned on all past random variables. According to the described factors 
% in the Appendix~\ref{appendix:preliminary}
, on the right hand side for (\ref{eq:begin_smooth}):
\begin{equation}
\label{eq:rearrange_one}
\begin{aligned}
& - \left \langle \nabla f(\bar{\mathbf{z}}^{(t), \tau}),
\mathbb{E}_{t,\tau} \left [ \bar{\mathbf{g} }^{(t), \tau} \right ]  \right \rangle \\
& \quad \overset{(a)}{\le} \frac{1}{2a} \left \| \nabla f(\bar{\mathbf{z}}^{(t), \tau}) - \nabla f(\bar{\mathbf{x}}^{(t), \tau}) \right \|^2
- \frac{1-a}{2} \left \| \mathbb{E}_{t,\tau} \left [ \bar{\mathbf{g} }^{(t), \tau} \right ] \right \|^2  - \frac{1}{2} \left \| \nabla f(\bar{\mathbf{x}}^{(t), \tau}) \right \|^2 
+ \frac{1}{2} \left \| \nabla f(\bar{\mathbf{x}}^{(t), \tau}) - \mathbb{E}_{t,\tau} \left [ \bar{\mathbf{g} }^{(t), \tau} \right ] \right \|^2  \\
& \quad \overset{(b)}{\le} \frac{L^2}{2a} \left \| \bar{\mathbf{z}}^{(t),\tau} - \bar{\mathbf{x}}^{(t),\tau} \right \|^2 
- \frac{1-a}{2} \left \| \mathbb{E}_{t,\tau} \left [ \bar{\mathbf{g} }^{(t), \tau} \right ] \right \|^2 - \frac{1}{2} \left \| \nabla f(\bar{\mathbf{x}}^{(t), \tau}) \right \|^2 
+ \frac{1}{2} \left \| \nabla f(\bar{\mathbf{x}}^{(t), \tau}) - \mathbb{E}_{t,\tau} \left [ \bar{\mathbf{g} }^{(t), \tau} \right ] \right \|^2,  \\
\end{aligned}
\end{equation}
where $(a)$ follows from the combination of \textbf{Fact 2} and \textbf{Fact 3}; $(b)$ follows by the smoothness in Assumption~\ref{ass:l_smooth}.
Since we assume that for any $i$ in the initial stage $\mathbf{x}_i^{(0)} = \mathbf{v}_i^{(0)} = \mathbf{0}$, based on the definition of $\mathbf{z}^{(t), \tau}$, and $\mathbf{y}^{(t), \tau}$, it can be shown by averaging
\begin{equation}
\left \| \bar{\mathbf{z}}^{(t), \tau} - \bar{\mathbf{x}}^{(t), \tau} \right \|^2
= \frac{\beta^2}{(1-\beta)^2} 
\| \underset{\bar{\mathbf{c}}^{(t),\tau}}{\underbrace{\bar{\mathbf{x}}^{(t), \tau} - \bar{\mathbf{v}}^{(t), \tau}}} \|^2. 
\end{equation}
Applying the recursion of (\ref{eq:recursive_with_auxiliary}) in Lemma~\ref{lem:auxiliary_relationship},
\begin{equation}
\label{eq:rearrange_two}
\begin{aligned}
\left \| \bar{\mathbf{z}}^{(t), \tau} - \bar{\mathbf{x}}^{(t), \tau} \right \|^2
& \overset{(a)}{=} \left \| \bar{\mathbf{z}}^{l} - \bar{\mathbf{x}}^{l} \right \|^2 \\
% & = \frac{\beta^2\hat{\eta}^2}{(1-\beta)^2} \left \| \sum_{j=0}^{l-1} \beta^{l-1-j} \bar{\mathbf{g}}^{j}  \right \|^2 \\
& \overset{(b)}{=}  \frac{\beta^2\hat{\eta}^2 s^2}{(1-\beta)^2} \left \| \sum_{j=0}^{l-1} 
\frac{\beta^{l-1-j}}{s} \bar{\mathbf{g}}^{j} \right \|^2 \\
& \overset{(c)}{\le} \frac{\beta^2\hat{\eta}^2 s^2}{(1-\beta)^2} \sum_{j=0}^{l-1} 
\frac{\beta^{l-1-j}}{s} \left \| \bar{\mathbf{g}}^{j} \right \|^2 \\
& \overset{(d)}{\le} \frac{\beta^2\hat{\eta}^2}{(1-\beta)^3} \sum_{j=0}^{l-1} \beta^{l-1-j} 
\left \| \bar{\mathbf{g}}^{j} \right \|^2,
\end{aligned}
\end{equation}
where we omit the aspects of epoch (\ie, $t$), local updates (\ie, $\tau$) and replace them with a more general term: \textbf{iteration} (\ie, $l$) in $(a)$; we define $s=\sum_{j=0}^{l-1} \beta^{l-1-j}$ in $(b)$; $(c)$ follows by the \textbf{Fact 6} 
% in the Appendix~\ref{appendix:preliminary} 
and Jensen's inequality; $(d)$ follows because $s = \frac{1-\beta^l}{1-\beta} < \frac{1}{1-\beta}$ since $\beta \in [0, 1)$.
Substituting (\ref{eq:rearrange_two}) into (\ref{eq:rearrange_one}), and we set $a = \frac{1}{2}$, which yields
\begin{equation}
\label{eq:rearrange_three}
\begin{aligned}
& - \left \langle \nabla f(\bar{\mathbf{z}}^{(t), \tau}),
\mathbb{E}_{t,\tau} \left [ \bar{\mathbf{g} }^{(t), \tau} \right ]  \right \rangle \\
& \quad \le \frac{1}{2} \left \| \nabla f(\bar{\mathbf{x}}^{(t), \tau}) - \mathbb{E}_{t,\tau} 
\left [ \bar{\mathbf{g} }^{(t), \tau} \right ] \right \|^2 
+ \frac{\beta^2\hat{\eta}^2L^2}{(1-\beta)^3} \sum_{j=0}^{l-1} \beta^{l-1-j} 
\left \| \bar{\mathbf{g}}^{l} \right \|^2 - \frac{1}{2} \left \| \nabla f(\bar{\mathbf{x}}^{(t), \tau}) \right \|^2 - \frac{1}{4} \left \| \mathbb{E}_{t,\tau} \left [ \bar{\mathbf{g} }^{(t), \tau} \right ] \right \|^2.
\end{aligned}
\end{equation}
Moreover, for the second term in (\ref{eq:begin_smooth}), we apply the identified \textbf{Fact 1} for any vector and Assumption~\ref{ass:bound_variance}, then
\begin{equation}
\mathbb{E}_{t,\tau}\left [ \| \bar{\mathbf{g}}^{t,\tau} \|^2  \right ] \le \left \| \mathbb{E}_{t,\tau} [ \bar{\mathbf{g}}^{t,\tau}]  \right \|^2 + \frac{\sigma^2}{n}  
\end{equation}
Then plug (\ref{eq:rearrange_three}) and the above inequality into (\ref{eq:begin_smooth}),
\begin{equation}
\begin{aligned}
\mathbb{E}_{t,\tau} f(\bar{\mathbf{z}}^{(t), \tau+1}) - f(\bar{\mathbf{z}}^{(t), \tau})
& \le  \frac{\tilde{\eta}^2 \sigma^2 L}{2n} + \left ( \frac{\tilde{\eta}^2L}{2} - \frac{\tilde{\eta}}{4}  \right ) 
\left \| \mathbb{E}_{t,\tau} \left [ \bar{\mathbf{g} }^{(t), \tau} \right ] \right \|^2 -\frac{\tilde{\eta}}{2} \left \| \nabla  f(\bar{\mathbf{x}}^{(t), \tau}) \right \|^2 \\ & \quad +\frac{\tilde{\eta}}{2} \left \| \nabla f(\bar{\mathbf{x}}^{(t), \tau}) - \mathbb{E}_{t,\tau} 
\left [ \bar{\mathbf{g} }^{(t), \tau} \right ] \right \|^2 + \frac{\beta^2\hat{\eta}^2L^2}{(1-\beta)^3} \sum_{j=0}^{l-1} \beta^{l-1-j} 
\left \| \bar{\mathbf{g}}^{j} \right \|^2,   
\end{aligned}
\end{equation}
where $\tilde{\eta} = \frac{\eta}{1-\beta}$. Taking the total expectation, and summing from $\tau=0$ to $K-1$, we have
\begin{equation}
\label{eq:sum_local_step}
\begin{aligned}
\mathbb{E} \left [  f(\bar{\mathbf{z}}^{(t), K} ) - f(\bar{\mathbf{z}}^{(t), 0} ) \right ] & = \mathbb{E} \left [ f(\bar{\mathbf{z}}^{(t+1), 0} ) - f(\bar{\mathbf{z}}^{(t), 0} ) \right ] \\
& \le -\frac{\tilde{\eta}}{2} \sum_{\tau=0}^{K-1} \mathbb{E}\left \| \nabla f(\bar{\mathbf{x}}^{(t), \tau}) \right \|^2
+ \left ( \frac{\tilde{\eta}^2L}{2} - \frac{\tilde{\eta}}{4}  \right ) \sum_{\tau=0}^{K-1}
\mathbb{E}\left \| \mathbb{E}_{t,\tau} \left [ \bar{\mathbf{g} }^{(t), \tau} \right ] \right \|^2 \\
& \quad +\frac{\tilde{\eta}}{2} \sum_{\tau=0}^{K-1} \mathbb{E}\left \| \nabla f(\bar{\mathbf{x}}^{(t), \tau}) 
- \mathbb{E}_{t,\tau} \left [ \bar{\mathbf{g} }^{(t), \tau} \right ] \right \|^2 + \frac{\tilde{\eta}^2 K\sigma^2 L}{2n} \\
& \quad + \frac{\beta^2\hat{\eta}^2L^2}{(1-\beta)^3}\sum_{\tau=0}^{K-1} 
\mathbb{E} \left [ \sum_{j=0}^{l-1} \beta^{l-1-j}  \left \| \bar{\mathbf{g}}^{j} \right \|^2
 \right ].
\end{aligned}
\end{equation}
Summing from $t=0$ to $T-1$ and dividing both side by $K T$,
\begin{equation}
\label{eq:sum_epoch}
\begin{aligned}
\frac{1}{KT} \mathbb{E} \left [ f(\bar{\mathbf{z}}^{(T), 0} ) - f(\bar{\mathbf{z}}^{(0), 0} ) \right ]
& \le -\frac{\tilde{\eta}}{2 KT} \sum_{t=0}^{T-1}\sum_{\tau=0}^{K-1} \mathbb{E}\left \| \nabla f(\bar{\mathbf{x}}^{(t), \tau}) \right \|^2
+ \frac{\tilde{\eta}^2\sigma^2 L}{2n} \\
& \quad + \left ( \frac{\tilde{\eta}^2L}{2K T} - \frac{\tilde{\eta}}{4K T}  \right ) \underset{T_1}{\underbrace{\sum_{t=0}^{T-1}\sum_{\tau=0}^{K-1}
\mathbb{E}\left \| \mathbb{E}_{t,\tau} \left [ \bar{\mathbf{g} }^{(t), \tau} \right ] \right \|^2}} \\
& \quad +\frac{\tilde{\eta}}{2K T} \underset{T_2}{\underbrace{\sum_{t=0}^{T-1}\sum_{\tau=0}^{K-1} \mathbb{E}\left \| \nabla f(\bar{\mathbf{x}}^{(t), \tau}) 
- \mathbb{E}_{t,\tau} \left [ \bar{\mathbf{g} }^{(t), \tau} \right ] \right \|^2}} \\
& \quad + \frac{\beta^2\hat{\eta}^2L^2}{(1-\beta)^3KT}\underset{T_3}{\underbrace{\sum_{l=1}^{KT-1}
\mathbb{E} \left [ \sum_{j=0}^{l-1} \beta^{l-1-j}  \left \| \bar{\mathbf{g}}^{j} \right \|^2 \right ]}}.
\end{aligned}
\end{equation}
We now bound the upper bound of $T_1$:
\begin{equation}
\label{eq:sum_t1_bound}
\begin{aligned}
\sum_{t=0}^{T-1}\sum_{\tau=0}^{K-1} \mathbb{E}
\left \| \mathbb{E}_{t,\tau} \left [ \bar{\mathbf{g} }^{(t), \tau} \right ] \right \|^2
& = \sum_{t=0}^{T-1}\sum_{\tau=0}^{K-1} \mathbb{E}
\left \| \mathbb{E}_{t,\tau} \left [ \bar{\mathbf{g} }^{(t), \tau} \right ] \pm \bar{\mathbf{g} }^{(t), \tau} \right \|^2 \\
&  \overset{(a)}{\le} 2\sum_{t=0}^{T-1}\sum_{\tau=0}^{K-1} \mathbb{E} \left \| \mathbb{E}_{t,\tau} \left [ \bar{\mathbf{g} }^{(t), \tau} \right ] - \bar{\mathbf{g} }^{(t), \tau} \right \|^2
+ 2\sum_{t=0}^{T-1} \sum_{\tau=0}^{K-1}\mathbb{E} \left \| \bar{\mathbf{g} }^{(t), \tau} \right \|^2 \\
& \overset{(b)}{\le} \frac{2KT\sigma^2}{n} 
+ 2\sum_{t=0}^{T-1} \sum_{\tau=0}^{K-1}\mathbb{E} \left \| \bar{\mathbf{g} }^{(t), \tau} \right \|^2,
\end{aligned}
\end{equation}
where $(a)$ follows because of the \textbf{Fact 4} by setting $a=1$; $(b)$ follows from the Assumption~\ref{ass:bound_variance}. Then we estimate the $T_2$:
\begin{equation}
\label{eq:sum_t2_bound}
\begin{aligned}
\sum_{t=0}^{T-1}\sum_{\tau=0}^{K-1} \mathbb{E}\left \| \nabla f(\bar{\mathbf{x}}^{(t), \tau}) 
- \mathbb{E}_{t,\tau} \left [ \bar{\mathbf{g} }^{(t), \tau} \right ] \right \|^2
& = \sum_{t=0}^{T-1}\sum_{\tau=0}^{K-1} \mathbb{E}\left \| \frac{1}{n}\sum_{i=1}^n \nabla f(\bar{\mathbf{x}}^{(t), \tau}) 
-  \frac{1}{n}\sum_{i=1}^n \nabla f_i(\mathbf{x}_i^{(t), \tau} ) \right \|^2 \\
& \overset{(a)}{\le} \frac{1}{n^2} \sum_{t=0}^{T-1}\sum_{\tau=0}^{K-1}\sum_{i=1}^n 
\mathbb{E} \left \| \nabla f(\bar{\mathbf{x}}^{(t), \tau}) - \nabla f_i(\mathbf{x}_i^{(t), \tau} )
\pm \nabla f_i(\bar{\mathbf{x}}^{(t), \tau}) \right \|^2 \\
& \overset{(b)}{\le} \frac{2}{n^2} \sum_{t=0}^{T-1}\sum_{\tau=0}^{K-1}\sum_{i=1}^n 
\left ( \mathbb{E}\left \| \nabla f(\bar{\mathbf{x}}^{(t), \tau}) - \nabla f_i(\bar{\mathbf{x}}^{(t), \tau}) \right \|^2 
+ \mathbb{E} \left \| \nabla f_i(\bar{\mathbf{x}}^{(t), \tau})  - \nabla f_i(\mathbf{x}_i^{(t), \tau}) \right \|^2  \right ) \\
& \overset{(c)}{\le} \frac{2KT \zeta^2}{n} + \frac{2L^2}{n^2} \sum_{t=0}^{T-1}\sum_{\tau=0}^{K-1}\sum_{i=1}^n 
\mathbb{E} \left \| \bar{\mathbf{x}}^{(t), \tau} -  \mathbf{x}_i^{(t), \tau} \right \|^2,  
\end{aligned}
\end{equation}
where $(a)$ follows from the complexity of $\left \| \cdot \right \|^2$ and Jensen's inequality; $(b)$ follows by the \textbf{Fact 4} 
% in the Appendix~\ref{appendix:preliminary}
; $(c)$ follows by applying $\frac{1}{n} \sum_{i=1}^n\mathbb{E} \left \| \nabla f_i(\mathbf{x} ) - \nabla f(\mathbf{x}) \right \|^2 \le \zeta^2$ in Assumption~\ref{ass:bound_variance}. Finally, we bound the term $T_3$:
\begin{equation}
\label{eq:sum_t3_bound}
\begin{aligned}
\sum_{l=1}^{KT-1}
\mathbb{E} \left [ \sum_{j=0}^{l-1} \beta^{l-1-j}  \left \| \bar{\mathbf{g}}^{j} \right \|^2 \right ]
& \le \sum_{j=0}^{KT-2} \mathbb{E} \left [ \left \| \bar{\mathbf{g}}^j \right \|^2 \sum_{u=j+1}^{KT-1} \beta^{u-1-j} \right ]\\
& \overset{(a)}{\le} \frac{1}{1-\beta} \sum_{j=0}^{KT-2} \mathbb{E} \left \| \bar{\mathbf{g}}^j  \right \|^2 \\
& \le \frac{1}{1-\beta} \sum_{j=0}^{KT-1} \mathbb{E} \left \| \bar{\mathbf{g}}^j  \right \|^2,
\end{aligned}
\end{equation}
where $(a)$ follows by noting that $\sum_{u=j+1}^{TK-1} \beta ^{u-1-j} = \frac{1-\beta^{TK-1-j}}{1-\beta} \le \frac{1}{1-\beta} $.
Substitute (\ref{eq:sum_t1_bound}), (\ref{eq:sum_t2_bound}), and (\ref{eq:sum_t3_bound}) into
(\ref{eq:sum_epoch}), which yields
\begin{equation}
\label{eq:sum_all_bound}
\begin{aligned}
\frac{1}{KT} \mathbb{E} \left [ f(\bar{\mathbf{z}}^{(T), 0} ) - f(\bar{\mathbf{z}}^{(0), 0} ) \right ]
& \le -\frac{\tilde{\eta}}{2 KT} \sum_{t=0}^{T-1}\sum_{\tau=0}^{K-1} \mathbb{E}\left \| \nabla f(\bar{\mathbf{x}}^{(t), \tau}) \right \|^2
+ \frac{\tilde{\eta}^2\sigma^2 L}{2n}
+ \left ( \frac{\tilde{\eta}^2L}{K T} - \frac{\tilde{\eta}}{2K T}  \right ) \sum_{t=0}^{T-1}\sum_{\tau=0}^{K-1}
\mathbb{E}\left \| \bar{\mathbf{g} }^{(t), \tau} \right \|^2 \\
& \quad \quad +\frac{\tilde{\eta}L^2}{K Tn^2} \sum_{t=0}^{T-1}\sum_{\tau=0}^{K-1} \sum_{i=1}^n\mathbb{E}\left \| \bar{\mathbf{x}}^{(t), \tau}
-  \mathbf{x}_i^{(t), \tau} \right \|^2
+ \frac{\beta^2\hat{\eta}^2L^2}{(1-\beta)^4KT}\sum_{j=0}^{KT-1}
\mathbb{E} \left \| \bar{\mathbf{g}}^{j} \right \|^2 \\
& \quad \quad + \frac{\tilde{\eta}\zeta^2}{n} + \frac{\tilde{\eta}\sigma^2}{2n}\left ( 3\tilde{\eta} L-1 \right ).  
\end{aligned}
\end{equation}
Using that $\sum_{i=1}^n\left \| \mathbf{a}_i  \right \|^2 = \left \| \mathbf{A}  \right \|_F^2$ where $\mathbf{A} := \left [ \mathbf{a}_1, \cdots, \mathbf{a}_n \right ]$, we now try to bound the consensus error %\cite{kong2021consensus} 
between the nodes' parameters and its averaging. % \du{averaging?}. 
We first reiterate the update scheme of (\ref{eq:sum}) in a matrix form regardless of epoch $t$ and local update $\tau$ and denote $l$ as the index of update iteration:
\begin{equation}
\mathbf{X}^{(l+1)} = \mathbf{X}^{(l)} - \eta \mathbf{G}^{(l)} + \beta \left ( \mathbf{X}^{(l)} 
-\alpha\eta\mathbf{G}^{(l)} - \mathbf{X}^{(l-1)} + \alpha\eta \mathbf{G}^{(l-1)}  \right ). 
\end{equation}
For averaged parameters which are performed model averaging across all nodes, we can also simply the updates since $\mathbf{W}$ is doubly stochastic, which is described as follows:
\begin{equation}
\bar{\mathbf{x}}^{(l+1)} = \bar{\mathbf{x}}^{(l)} - \eta \bar{\mathbf{g}}^{(l)} + \beta \left ( \bar{\mathbf{x}}^{(l)}
-\alpha\eta\bar{\mathbf{g}}^{(l)} - \bar{\mathbf{x}}^{(l-1)} + \alpha\eta \bar{\mathbf{g}}^{(l-1)}  \right ).   
\end{equation}

According to the above two equations, we have
\begin{equation}
\begin{aligned}
\frac{1}{n} \mathbb{E} \left \| \mathbf{X}^{(l+1)} - \bar{\mathbf{X}}^{(l+1)} \right \|_F^2  
& = \frac{1}{n} \mathbb{E} \left \| \mathbf{X}^{(l)} - \eta \mathbf{G}^{(l)} + \beta \left ( \mathbf{X}^{(l)} 
-\alpha\eta\mathbf{G}^{(l)} - \mathbf{X}^{(l-1)} + \alpha\eta \mathbf{G}^{(l-1)}  \right ) \right. \\
& \quad \quad \quad \quad \left. - \left ( \bar{\mathbf{X}}^{(l)} - \eta \bar{\mathbf{G}}^{(l)} + \beta \left ( \bar{\mathbf{X}}^{(l)} 
-\alpha\eta\bar{\mathbf{G}}^{(l)} - \bar{\mathbf{X}}^{(l-1)} + \alpha\eta \bar{\mathbf{G}}^{(l-1)}  \right ) \right )  \right \|_F^2 \\
& \overset{(a)}{\le}  \frac{1-\rho}{n} \mathbb{E} \left \| \left ( 1+\beta \right ) \left ( \mathbf{X}^{(l)} - \bar{\mathbf{X}}^{(l)}  \right )  
- \beta \left ( \mathbf{X}^{(l-1)} - \bar{\mathbf{X}}^{(l-1)}  \right ) 
- \left ( 1 + \alpha\beta \right ) \eta \left ( \mathbf{G}^{(l)} - \bar{\mathbf{G}}^{(l)}  \right ) \right .\\ 
& \quad \quad \quad \quad \left. + \alpha\beta\eta \left ( \mathbf{G}^{(l-1)} - \bar{\mathbf{G}}^{(l-1)}  \right )  \right \|_F^2 \\
& \overset{(b)}{\le} \frac{1-\rho}{n} \mathbb{E} \left \| \left ( 1+\beta \right ) \left ( \mathbf{X}^{(l)} - \bar{\mathbf{X}}^{(l)}  \right )  
- \beta \left ( \mathbf{X}^{(l-1)} - \bar{\mathbf{X}}^{(l-1)}  \right )
- \left ( 1 + \alpha\beta \right ) \eta \left ( \mathbb{E} \left [ \mathbf{G}^{(l)} \right ]  - \mathbb{E} \left [ \bar{\mathbf{G}}^{(l)} \right ] \right ) \right. \\  
& \quad \quad \quad \quad \left. + \alpha\beta\eta \left ( \mathbb{E} \left [ \mathbf{G}^{(l-1)} \right ]  - \mathbb{E} \left [ \bar{\mathbf{G}}^{(l-1)} \right ]  \right )  \right \|_F^2 + \underset{:= \Delta}{\underbrace{4\left ( 1 - \rho \right ) \left ( 1 + 2\alpha\beta + 2\alpha^2\beta^2  \right )  \eta^2\sigma^2}}   \\
& \overset{(c)}{\le} \frac{2(1-\rho)}{n} \mathbb{E} \left \| \left ( 1+\beta \right ) \left ( \mathbf{X}^{(l)} - \bar{\mathbf{X}}^{(l)}  \right ) - \left ( 1 + \alpha\beta \right ) \eta \left ( \mathbb{E} \left [ \mathbf{G}^{(l)} \right ]  - \mathbb{E} \left [ \bar{\mathbf{G}}^{(l)} \right ] \right )  \right \|_F^2 \\
& \quad \quad + \frac{2(1-\rho)}{n} \mathbb{E} \left \| \beta \left ( \mathbf{X}^{(l-1)} - \bar{\mathbf{X}}^{(l-1)}  \right ) - \alpha\beta\eta \left ( \mathbb{E} \left [ \mathbf{G}^{(l-1)} \right ] - \mathbb{E} \left [\bar{\mathbf{G}}^{(l-1)} \right ]  \right )  \right \|_F^2  + \Delta ,   
\end{aligned}
\end{equation}
where $(a)$ follows by applying Assumption~\ref{ass:mixing_matrix}; $(b)$ follows because we add the expectation term of $\mathbf{G}$ and $\bar{\mathbf{G}}$, so that $\mathbf{G} - \bar{\mathbf{G}} = \left ( \mathbf{G} - \mathbb{E} \left [ \mathbf{G} \right ]   \right )
-\left (\bar{\mathbf{G}} - \mathbb{E} \left [ \bar{\mathbf{G}} \right ]\right ) + \left ( \mathbb{E} \left [ \mathbf{G} \right ] - \mathbb{E} \left [ \bar{\mathbf{G}} \right ] \right )$, which satisfies the condition of Assumption~\ref{ass:bound_variance} generalizing the constant $\Delta$; $(c)$ follows from the \textbf{Fact 4} by setting $a=1$. Here we use the contractivity of the matrix $\mathbf{W}$ and Young's inequality. % \du{citation?} 
We can further proceed as 
\begin{equation}
\begin{aligned}
& \frac{1}{n} \mathbb{E} \left \| \mathbf{X}^{(l+1)} - \bar{\mathbf{X}}^{(l+1)} \right \|_F^2 \\ 
& \overset{(a)}{\le} \frac{2(1-\rho)(1+\beta)^2}{n} \left ( 1+\frac{\rho}{2} \right )  \mathbb{E} \left \| \mathbf{X}^{(l)} - \bar{\mathbf{X}}^{(l)} \right \|_F^2
+  \frac{2(1-\rho)(1+\alpha\beta)^2\eta^2}{n} \left ( 1+\frac{2}{\rho}  \right ) \mathbb{E} \left \| \mathbb{E} \left [ \mathbf{G}^{(l)} \right ]  - \mathbb{E} \left [ \bar{\mathbf{G}}^{(l)} \right ] \right \|_F^2 \\
& \quad +  \frac{2(1-\rho)\beta^2}{n} \left ( 1 + \frac{\rho}{2}  \right ) \mathbb{E} \left \| \mathbf{X}^{(l-1)} - \bar{\mathbf{X}}^{(l-1)} \right \|_F^2  
+ \frac{2(1-\rho)}{n} \alpha^2\beta^2\eta^2 \left ( 1+\frac{2}{\rho}  \right ) \mathbb{E} \left \| \mathbb{E} \left [ \mathbf{G}^{(l-1)} \right ]  - \mathbb{E} \left [ \bar{\mathbf{G}}^{(l-1)} \right ] \right \|_F^2 + \Delta \\
& \le \frac{2(1-\rho)(1+\beta)^2}{n} \left ( 1+\frac{\rho}{2} \right )  \mathbb{E} \left \| \mathbf{X}^{(l)} - \bar{\mathbf{X}}^{(l)} \right \|_F^2
+  \frac{2(1-\rho)(1+\alpha\beta)^2\eta^2}{n} \left ( 1+\frac{2}{\rho}  \right ) \mathbb{E} \left \| \mathbb{E} \left [ \mathbf{G}^{(l)} \right ]  - \nabla f (\bar{\mathbf{X}}^{l}) \right \|_F^2 \\
& \quad +  \frac{2(1-\rho)\beta^2}{n} \left ( 1 + \frac{\rho}{2}  \right ) \mathbb{E} \left \| \mathbf{X}^{(l-1)} - \bar{\mathbf{X}}^{(l-1)} \right \|_F^2  
+ \frac{2(1-\rho)}{n} \alpha^2\beta^2\eta^2 \left ( 1+\frac{2}{\rho}  \right ) \mathbb{E} \left \| \mathbb{E} \left [ \mathbf{G}^{(l-1)} \right ]  - \nabla f (\bar{\mathbf{X}}^{l-1}) \right \|_F^2 + \Delta \\
& = \frac{2(1-\rho)(1+\beta)^2}{n} \left ( 1+\frac{\rho}{2} \right )  \mathbb{E} \left \| \mathbf{X}^{(l)} - \bar{\mathbf{X}}^{(l)} \right \|_F^2
+ \frac{2(1-\rho)(1+\alpha\beta)^2\eta^2}{n} \left ( 1+\frac{2}{\rho}  \right ) \sum_{i=1}^n \mathbb{E} \left \| \nabla f_i(\mathbf{x}_i^{(l)}) \pm \nabla f_i(\bar{\mathbf{x}}^{(l)}) - \nabla f(\bar{\mathbf{x}}^{(l)}) \right \|^2 + \Delta \\
& \quad +  \frac{2(1-\rho)\beta^2}{n} \left ( 1 + \frac{\rho}{2}  \right ) \mathbb{E} \left \| \mathbf{X}^{(l-1)} - \bar{\mathbf{X}}^{(l-1)} \right \|_F^2  
+ \frac{2(1-\rho)}{n} \alpha^2\beta^2\eta^2 \left ( 1+\frac{2}{\rho}  \right ) \sum_{i=1}^n \mathbb{E} \left \| \nabla f_i(\mathbf{x}_i^{(l-1)}) \pm \nabla f_i(\bar{\mathbf{x}}^{(l-1)}) - \nabla f(\bar{\mathbf{x}}^{(l-1)}) \right \|^2 \\
& \overset{(b)}{\le} \frac{2(1-\frac{\rho}{2} )}{n} \left ( 1+\beta \right )^2 \left \| \mathbf{X}^{(l)} - \bar{\mathbf{X}}^{(l)} \right \|_F^2 
+ \frac{12\eta^2L^2}{\rho n} \left ( 1+\alpha\beta \right )^2 \sum_{i=1}^n \mathbb{E} \left \| \mathbf{x}_i^{(l)} - \bar{\mathbf{x}}^{(l)} \right \|^2 + \frac{12\eta^2(1+\alpha\beta)^2\zeta^2}{\rho} \\
& \quad +  \frac{2(1-\frac{\rho}{2} )\beta^2}{n} \mathbb{E} \left \| \mathbf{X}^{(l-1)} - \bar{\mathbf{X}}^{(l-1)} \right \|_F^2  
+ \frac{12\eta^2L^2}{\rho n} \alpha^2\beta^2 \sum_{i=1}^n \mathbb{E} \left \| \mathbf{x}_i^{(l-1)} - \bar{\mathbf{x}}^{(l-1)} \right \|^2 + \frac{12\eta^2\alpha^2\beta^2\zeta^2}{\rho} + \Delta ,
\end{aligned}
\end{equation}
where $(a)$ follows by applying the \textbf{Fact 4} and sets $a=\frac{\rho}{2}$; $(b)$ follows because positive $\rho \le 1$, and using \textbf{Fact 4} as well as Assumption~\ref{ass:bound_variance}. By choosing the learning rate $\eta \le \frac{\rho}{5L}$ ensures that $6\eta^2L^2 \le \frac{\rho^2}{4}$, we have two cases since $a \ge 0$
\begin{itemize}
    \item \para{Case one:} $\alpha \in \left [0,1 \right )$, we define $\hat{\beta} = 1+\beta$, then
\begin{equation}
\begin{aligned}
\frac{1}{n} \mathbb{E} \left \| \mathbf{X}^{(l+1)} - \bar{\mathbf{X}}^{(l+1)} \right \|_F^2
& \le \frac{2\hat{\beta}^2 \left ( 1-\frac{\rho}{4}  \right ) }{n} \mathbb{E} \left \| \mathbf{X}^{(l)} - \bar{\mathbf{X}}^{(l)} \right \|_F^2 + \frac{2\hat{\beta}^2 \left ( 1-\frac{\rho}{4}  \right ) }{n} \mathbb{E} \left \| \mathbf{X}^{(l-1)} - \bar{\mathbf{X}}^{(l-1)} \right \|_F^2 + \frac{24\eta^2 \hat{\beta}^2\zeta^2}{\rho} + \Delta. \notag
\end{aligned}
\end{equation}
    \item \para{Case two:} $\alpha \in \left [1, \infty \right )$, we define $\tilde{\beta} = 1+\alpha\beta$, just replace $\hat{\beta}$ term by $\tilde{\beta}$.
\end{itemize}
Here we denote $\beta_0 \triangleq \max \left \{ \hat{\beta}, \tilde{\beta} \right \}$. Since $\mathbf{x}_i^{(0)} = \bar{\mathbf{x}}^{(0)} = \mathbf{x}_0$, we get $\frac{1}{n} \mathbb{E} \| \mathbf{X}^{(0)} - \bar{\mathbf{X}}^{(0)} \|_F^2 = \mathbf{0}$. Furthermore, we can easily obtain $\frac{1}{n} \mathbb{E}  \| \mathbf{X}^{(1)} - \bar{\mathbf{X}}^{(1)} \|_F^2 = 24\eta^2\beta_0^2\zeta^2/\rho + \Delta$. We observe that when $l \ge 1$, 
$\frac{1}{n} \mathbb{E} \| \mathbf{X}^{(l)} - \bar{\mathbf{X}}^{(l)}  \|_F^2
\le \frac{C_1l}{n\left ( 1-Q_1 \right ) }, $
where we denote $Q_1 = 2\beta_0^2 (1-\frac{\rho}{4})$ ensuring that $Q_1 < 1$, and $C_1 =  24\eta^2\beta_0^2\zeta^2/\rho + \Delta$. Substituting the above inequality into (\ref{eq:sum_all_bound}), for any $\bar{\mathbf{g}}$, $\mathbb{E} \left \| \bar{\mathbf{g}} \right \|^2 \le \frac{G^2}{n}$ by Assumption~\ref{ass:bound_sto_grad}, and $\bar{\mathbf{z}}^{(0)} = \bar{\mathbf{x}}^{(0)} = \mathbf{x}_0$ by definition, rearranging terms yields the Theorem~\ref{thm:dsgd_sum}.

\section{Proof of \textt{\algtwo{}}}
\label{appendix:proof_alg2}
Next, we provide a rigorous proof of \textt{\algtwo{}} under non-convexity. Here we consider a special case where $K=1$ with a fixed consensus matrix. Then we construct the matrix form of Algorithm~\ref{alg:gt_sum_gt} as follows:
\begin{equation}
\label{eq:upper_relationship_eq}
\begin{aligned}
& \mathbf{M}^{(t)} = \lambda \mathbf{G}^{(t)} + (1-\lambda) \mathbf{Y}^{(t)} \\
& \mathbf{X}^{(t+1)} = \mathbf{W}\left ( \left ( 1+\beta \right ) \mathbf{X}^{(t)} - \beta \mathbf{W} \mathbf{X}^{(t-1)} 
- \left ( 1+\alpha\beta \right )\eta \mathbf{M}^{(t)} + \alpha \beta \eta \mathbf{W}\mathbf{M}^{(t-1)} \right )  \\
& \mathbf{Y}^{(t+1)} = \mathbf{W}\left (\mathbf{Y}^{(t)} + \frac{2\mathbf{X}^{(t)} - \mathbf{X}^{(t+1)} - \mathbf{X}^{(t-1)}}{\eta}  \right ).     
\end{aligned}
\end{equation}

\para{Proof sketch.} 
we try to bound the consensus distance (Lemma~\ref{lem:dist_x_gt}) between the worker's parameters and its averaging. During this step, we perform a propagation %\du{propagation?} 
step which brings the parameters of the workers closer to each other. Moreover, we also perform additional gradient tracking (Lemma~\ref{lem:dist_y_gt}) and their accumulation steps (Lemma~\ref{lem:dist_xy_gt}) which move the distance away from each other. After that, we could immediately apply Lemma~\ref{lem:dist_xy_gt} into the the single-step update %\du{update} 
progress 
% \du{\sout{of update}} 
in (\ref{eq:gt_close}).
% \du{\sout{in detailed}}.

\begin{lemma}[Consensus distance change]
\label{lem:dist_x_gt}
Given above assumptions in Section~\ref{sec:design}, let $\beta_0 = \max \left \{ 1+\alpha\beta, 1+\beta \right \}$, and the update rule generated by (\ref{eq:upper_relationship_eq}) using learning rate satisfy $\eta \le \frac{\rho}{12\lambda L}$ when $t \ge 1$,
\begin{equation}
\begin{aligned}
\frac{1}{n} \mathbb{E} \left \| \mathbf{X}^{(t+1)} - \bar{\mathbf{X}}^{(t+1)} \right \|_F^2
& \le \frac{2\left ( 1-\frac{\rho}{4} \right ) \beta_0^2}{n} \mathbb{E} \left \| \mathbf{X}^{(t)} - \bar{\mathbf{X}}^{(t)} \right \|_F^2
+  \frac{2\left ( 1-\frac{\rho}{4} \right )\left ( \beta_0 - 1 \right )^2 }{n} \mathbb{E} \left \| \mathbf{X}^{(t-1)} - \bar{\mathbf{X}}^{(t-1)} \right \|_F^2 \\
& \quad+ \frac{\rho\left(1-\frac{\rho}{2}\right) \left ( 1+\alpha\beta \right )^2\left ( 1-\lambda \right )^2  }{12\lambda^2L^2n} \mathbb{E} \left \| \mathbf{Y}^{(t)} - \bar{\mathbf{Y}}^{(t)} \right \|_F^2 
+ \frac{\rho\left (1-\frac{\rho}{2}\right ) \alpha^2\beta^2\left ( 1-\lambda \right )^2  }{12\lambda^2L^2n} \mathbb{E} \left \| \mathbf{Y}^{(t-1)} - \bar{\mathbf{Y}}^{(t-1)} \right \|_F^2 \\
& \quad + \frac{\rho\left(1-\frac{\rho}{2}\right)}{nL^2} \left ( 1+\alpha\beta \right )^2\left ( \sigma^2+\zeta^2 \right ). \notag
\end{aligned}
\end{equation}
\end{lemma}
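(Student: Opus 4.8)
The plan is to mimic the consensus-error argument used for \textt{\algone} in Appendix~\ref{appendix:proof_alg1}, but now carrying an additional $\mathbf{Y}$-consensus-error term, since the applied direction $\mathbf{M}^{(t)} = \lambda\mathbf{G}^{(t)} + (1-\lambda)\mathbf{Y}^{(t)}$ mixes the stochastic gradient with the tracker. First I would subtract from the $\mathbf{X}$-recursion in (\ref{eq:upper_relationship_eq}) its averaged counterpart: since $\mathbf{W}$ is doubly stochastic, Proposition~\ref{proposition:consensus_matrix_propoerty} gives $\bar{\mathbf{X}}^{(t+1)} = (1+\beta)\bar{\mathbf{X}}^{(t)} - \beta\bar{\mathbf{X}}^{(t-1)} - (1+\alpha\beta)\eta\bar{\mathbf{M}}^{(t)} + \alpha\beta\eta\bar{\mathbf{M}}^{(t-1)}$, so that $\mathbf{X}^{(t+1)} - \bar{\mathbf{X}}^{(t+1)}$ equals $\mathbf{W}$ applied to $(1+\beta)(\mathbf{X}^{(t)}-\bar{\mathbf{X}}^{(t)}) - \beta\mathbf{W}(\mathbf{X}^{(t-1)}-\bar{\mathbf{X}}^{(t-1)}) - (1+\alpha\beta)\eta(\mathbf{M}^{(t)}-\bar{\mathbf{M}}^{(t)}) + \alpha\beta\eta\mathbf{W}(\mathbf{M}^{(t-1)}-\bar{\mathbf{M}}^{(t-1)})$. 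Applying the mixing contraction of Assumption~\ref{ass:mixing_matrix} to the outer $\mathbf{W}$ pulls out a factor $(1-\rho)$, and applying it once more to the inner $\mathbf{W}$ on the $(t-1)$ blocks lets me replace $\mathbf{W}\mathbf{X}^{(t-1)}-\bar{\mathbf{X}}^{(t-1)}$ and $\mathbf{W}\mathbf{M}^{(t-1)}-\bar{\mathbf{M}}^{(t-1)}$ by the plain consensus errors at the cost of another $(1-\rho)$.

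Next I would decompose $\mathbf{M}^{(t)}-\bar{\mathbf{M}}^{(t)} = \lambda(\mathbf{G}^{(t)}-\bar{\mathbf{G}}^{(t)}) + (1-\lambda)(\mathbf{Y}^{(t)}-\bar{\mathbf{Y}}^{(t)})$ and handle the gradient piece exactly as in the \textt{\algone} proof: split $\mathbf{G}^{(t)}-\bar{\mathbf{G}}^{(t)}$ into its zero-mean noise part, whose expected squared Frobenius norm is controlled by $n\sigma^2$ via Fact~1 and Assumption~\ref{ass:bound_variance}, and its conditional-mean part, whose $i$-th column is $\nabla f_i(\mathbf{x}_i^{(t)}) - \tfrac1n\sum_j\nabla f_j(\mathbf{x}_j^{(t)})$; the latter I would bound by inserting $\nabla f_i(\bar{\mathbf{x}}^{(t)})$ and $\nabla f(\bar{\mathbf{x}}^{(t)})$, then invoking $L$-smoothness (Assumption~\ref{ass:l_smooth}), the heterogeneity bound $\zeta^2$ (Assumption~\ref{ass:bound_variance}) and Fact~5, yielding $O(L^2\|\mathbf{X}^{(t)}-\bar{\mathbf{X}}^{(t)}\|_F^2 + n\zeta^2)$. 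The tracker piece $(1-\lambda)(\mathbf{Y}^{(t)}-\bar{\mathbf{Y}}^{(t)})$ (and its $t-1$ analogue) is simply carried along as a new summand.

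The final step is the bookkeeping. I would repeatedly apply Young's inequality (Fact~4) with parameters on the scale of $\rho$ to separate the current ($t$) and previous ($t-1$) consensus errors and to collapse factors $(1-\rho)(1+\beta)^2(1+\Theta(\rho))$ into the stated $2(1-\rho/4)\beta_0^2$ form, while the $(t-1)$ blocks collect into $2(1-\rho/4)(\beta_0-1)^2$; here one treats the regimes $\alpha\in[0,1)$ and $\alpha\in[1,\infty)$ separately and unifies them through $\beta_0=\max\{1+\beta,1+\alpha\beta\}$ (so $\beta_0-1=\max\{\beta,\alpha\beta\}$). The $\eta^2$-weighted terms coming from $\mathbf{M}^{(t)}$ and $\mathbf{M}^{(t-1)}$, of the form $(1-\rho)(1+\alpha\beta)^2\eta^2(1+2/\rho)$ times the $\mathbf{Y}$-error, $n\sigma^2$, $n\zeta^2$ and $L^2\|\mathbf{X}-\bar{\mathbf{X}}\|_F^2$, are then tamed with the learning-rate restriction $\eta\le\tfrac{\rho}{12\lambda L}$, i.e.\ $\eta^2\lambda^2L^2\le\rho^2/144$, which is exactly what produces the coefficients $\tfrac{\rho(1-\rho/2)(1+\alpha\beta)^2(1-\lambda)^2}{12\lambda^2L^2 n}$ (and its $\alpha^2\beta^2$ analogue at $t-1$) on the $\mathbf{Y}$-errors together with the additive constant $\tfrac{\rho(1-\rho/2)}{nL^2}(1+\alpha\beta)^2(\sigma^2+\zeta^2)$, the residual $L^2$-weighted self-term being absorbed back into the $\beta_0^2$ coefficient. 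I expect the main obstacle to be precisely this calibration: choosing each Young's-inequality parameter so that, after substituting the learning-rate bound, every constant lands on its stated value rather than merely $O(\cdot)$, and keeping the $\alpha<1$ versus $\alpha\ge1$ case split consistent so that the $\beta_0$ and $\beta_0-1$ coefficients come out genuinely uniform.
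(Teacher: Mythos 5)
Your proposal follows essentially the same route as the paper's proof: subtract the averaged recursion, apply the mixing contraction of Assumption~\ref{ass:mixing_matrix} (once for the outer $\mathbf{W}$ and again on the $(t-1)$ blocks), split the $t$ and $t-1$ groups with $\rho$-scaled Young's inequalities, bound $\mathbf{M}^{(t)}-\bar{\mathbf{M}}^{(t)}$ via the decomposition $\lambda(\mathbf{G}-\bar{\mathbf{G}})+(1-\lambda)(\mathbf{Y}-\bar{\mathbf{Y}})$ into a $\sigma^2$ noise term, a smoothness-plus-$\zeta^2$ term proportional to $L^2\|\mathbf{X}^{(t)}-\bar{\mathbf{X}}^{(t)}\|_F^2$, and the carried $\mathbf{Y}$-consensus error, and finally absorb the $\eta^2\lambda^2L^2$-weighted self-term using $\eta\le\frac{\rho}{12\lambda L}$ (\ie, $36\lambda^2L^2\eta^2\le\rho^2/4$). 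This matches the paper's derivation of (\ref{eq:dist_x}) and (\ref{eq:dist_m}) step for step, so the approach is correct and not a genuinely different argument.
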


\begin{proof}
Starting from (\ref{eq:upper_relationship_eq}) and all consensus matrices satisfy Proposition~\ref{proposition:consensus_matrix_propoerty}, we have
\begin{equation}
\label{eq:dist_x}
\begin{aligned}
\frac{1}{n} \mathbb{E}  \left \| \mathbf{X}^{(t+1)} - \bar{\mathbf{X}}^{(t+1)}  \right \|_F^2
& \overset{(a)}{\le} \frac{1-\rho}{n} \mathbb{E} \left \| \left ( 1+\beta \right ) \left ( \mathbf{X}^{(t)} - \bar{\mathbf{X}}^{(t)} \right ) 
-\beta \left ( \mathbf{WX}^{(t-1)} - \bar{\mathbf{X}}^{(t-1)} \right ) 
-(1+\alpha\beta)\eta \left ( \mathbf{M}^{(t)} - \bar{\mathbf{M}}^{(t)} \right ) \right. \\
& \quad \quad \left. +\alpha\beta\eta \left ( \mathbf{WM}^{(t-1)} - \bar{\mathbf{M}}^{(t-1)} \right )  \right \|_F^2 \\
& \overset{(b)}{\le} \frac{1-\rho}{n} \left ( 1+\frac{\rho}{2}  \right ) \mathbb{E} \left \| \left ( 1+\beta \right ) \left ( \mathbf{X}^{(t)} - \bar{\mathbf{X}}^{(t)} \right ) 
- \beta \left ( \mathbf{WX}^{(t-1)} - \bar{\mathbf{X}}^{(t-1)} \right ) \right \|_F^2 \\ 
& \quad + \frac{1-\rho}{n} \left ( 1+ \frac{2}{\rho}  \right ) \mathbb{E} \left \| (1+\alpha\beta)\eta \left ( \mathbf{M}^{(t)} - \bar{\mathbf{M}}^{(t)} \right ) 
- \alpha\beta\eta \left ( \mathbf{WX}^{(t-1)} - \bar{\mathbf{X}}^{(t-1)} \right ) \right \|_F^2 \\
& \overset{(c)}{\le} \frac{2(1-\rho) (1+\beta)^2}{n} \left ( 1+\frac{\rho}{2}  \right ) \mathbb{E} \left \| \mathbf{X}^{(t)} - \bar{\mathbf{X}}^{(t)} \right \|_F^2 
+ \frac{2(1-\rho)^2 \beta^2}{n} \left ( 1+\frac{2}{\rho}  \right ) \mathbb{E} \left \| \mathbf{X}^{(t-1)} - \bar{\mathbf{X}}^{(t-1)} \right \|_F^2 \\ 
& \quad + \frac{2(1-\rho)(1+\alpha\beta)^2\eta^2}{n} \left ( 1+\frac{\rho}{2}  \right )  \mathbb{E} \left \| \mathbf{M}^{(t)} - \bar{\mathbf{M}}^{(t)} \right \|_F^2 \\
& \quad + \frac{2(1-\rho)^2 \alpha^2\beta^2\eta^2}{n} \left ( 1+\frac{2}{\rho}  \right ) \mathbb{E} \left \| \mathbf{M}^{(t-1)} - \bar{\mathbf{M}}^{(t-1)} \right \|_F^2, 
\end{aligned}
\end{equation}
where $(a)$ follows by applying Assumption~\ref{ass:mixing_matrix}; $(b)$, $(c)$ follows by the \textbf{Fact 4} by choosing $\rho/2$, and $1$ respectively.
Then we try to bound the distance between $\mathbf{M}^{(t)}$ and $\bar{\mathbf{M}}^{(t)}$,
\begin{equation}
\label{eq:dist_m}
\begin{aligned}
\frac{1}{n} \mathbb{E} \left \| \mathbf{M}^{(t)} - \bar{\mathbf{M}}^{(t)} \right \|_F^2
& \overset{(a)}{=} \frac{1}{n} \mathbb{E} \left \| \lambda \left ( \mathbf{G}^{(t)} - \bar{\mathbf{G}}^{(t)} \right ) 
+ \left ( 1-\lambda \right ) \left ( \mathbf{Y}^{(t)} - \bar{\mathbf{Y}}^{(t)} \right )  \right \|_F^2 \\
& \overset{(b)}{\le}  \frac{2\lambda^2}{n} \mathbb{E} \left \| \mathbf{G}^{(t)} - \bar{\mathbf{G}}^{(t)} \right \|_F^2
+ \frac{2(1-\lambda)^2}{n} \mathbb{E} \left \| \mathbf{Y}^{(t)} - \bar{\mathbf{Y}}^{(t)} \right \|_F^2 \\
& =  \frac{2\lambda^2}{n} \mathbb{E} \left \| \mathbf{G}^{(t)} \pm \mathbb{E}_t\left [ \mathbf{G}^{(t)} \right ] 
- \bar{\mathbf{G}}^{(t)}  \pm \mathbb{E}_t \left [ \bar{\mathbf{G}}^{(t)} \right ] \right \|_F^2
+ \frac{2(1-\lambda)^2}{n} \mathbb{E} \left \| \mathbf{Y}^{(t)} - \bar{\mathbf{Y}}^{(t)} \right \|_F^2 \\
& \overset{(c)}{\le} 12\lambda^2\sigma^2 + \frac{6\lambda^2}{n} \mathbb{E} \left \| \mathbb{E}_t\left [ \mathbf{G}^{(t)} \right ] 
- \mathbb{E}_t\left [ \bar{\mathbf{G}}^{(t)} \right ]  \right \|_F^2 
+ \frac{2(1-\lambda)^2}{n} \mathbb{E} \left \| \mathbf{Y}^{(t)} - \bar{\mathbf{Y}}^{(t)} \right \|_F^2 \\
& \le \frac{6\lambda^2}{n} \sum_{i=1}^n \mathbb{E} 
\left \| \nabla f_i(\mathbf{x}_i^{(t)}) \pm \nabla f_i(\bar{\mathbf{x}}^{(t)}) - \nabla f(\bar{\mathbf{x}}^{(t)}) \right \|^2
+ 12\lambda^2\sigma^2 + \frac{2(1-\lambda)^2}{n} \mathbb{E} \left \| \mathbf{Y}^{(t)} - \bar{\mathbf{Y}}^{(t)} \right \|_F^2 \\
& \overset{(d)}{\le} \frac{12 \lambda^2L^2}{n} \sum_{i=1}^n \mathbb{E} \left \| \mathbf{x}_i^{(t)} - \bar{\mathbf{x}}^{(t)}  \right \|^2 
+ 12\lambda^2\left ( \sigma^2 + \zeta^2 \right )  
+ \frac{2(1-\lambda)^2}{n} \mathbb{E} \left \| \mathbf{Y}^{(t)} - \bar{\mathbf{Y}}^{(t)} \right \|_F^2,
\end{aligned}
\end{equation}
where $(a)$ follows from (\ref{eq:upper_relationship_eq}); $(b)$ follows by applying the \textbf{Fact 4}; $(c)$ follows by \textbf{Fact 5} with the vector set $\left \{ \mathbf{G} - \mathbb{E} \left [ \mathbf{G} \right ], \mathbb{E} \left [ \bar{\mathbf{G}} \right ] - \bar{\mathbf{G}},  \mathbb{E} \left [ \mathbf{G} \right ] - \mathbb{E} \left [ \bar{\mathbf{G}} \right ]   \right \} $; $(d)$ follows from the \textbf{Fact 4} and Assumption~\ref{ass:bound_variance}.
% \du{What's meaning of this sentence? i cannot understand it.}
Since the positive scalar $\rho \le 1$, substitute (\ref{eq:dist_m}) into (\ref{eq:dist_x}) on the condition that $\eta \le \frac{\rho}{12 \lambda L}$ ensures that $36\lambda^2L^2\eta^2 \le \rho^2/4$, completing the proof.
\end{proof}

\begin{lemma}[Gradient tracker distance change]
\label{lem:dist_y_gt}
Given the assumptions in Section~\ref{sec:design}, when $t\ge 1$, let the learning rate satisfy $\eta \le \min \left \{ \frac{3+\beta}{2\sqrt{3} \lambda L (1+\alpha\beta)} , \frac{1+\beta}{2\sqrt{3}\lambda L \alpha\beta}  \right \}$ and follows from the assumption of hyperparameter that $(1+\alpha\beta) (1-\lambda) \le \frac{1}{2\sqrt{2}}$, which yields
\begin{equation}
\begin{aligned}
\frac{1}{n}\mathbb{E} \left \| \mathbf{Y}^{(t+1)} - \bar{\mathbf{Y}}^{(t+1)} \right \|_F^2 
& \le \frac{16 (1-\rho) (3+\beta)^2}{n\eta^2} \mathbb{E} \left \| \mathbf{X}^{(t)} - \bar{\mathbf{X}}^{(t)} \right \|_F^2
+ \frac{16 (1-\rho) (1+\beta)^2}{n\eta^2} \mathbb{E} \left \| \mathbf{X}^{(t-1)} - \bar{\mathbf{X}}^{(t-1)} \right \|_F^2 \\
& \quad \quad + \frac{4(1-\rho)}{n} \mathbb{E} \left \| \mathbf{Y}^{(t)} - \bar{\mathbf{Y}}^{(t)} \right \|_F^2
+ \frac{2(1-\rho)}{n} \mathbb{E} \left \| \mathbf{Y}^{(t-1)} - \bar{\mathbf{Y}}^{(t-1)} \right \|_F^2 \\
& \quad \quad + \frac{192 \lambda^2}{n} (1-\rho) (1+\alpha\beta)^2(\sigma^2+\zeta^2). \notag
\end{aligned}
\end{equation}
\end{lemma}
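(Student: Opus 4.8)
The plan is to unroll the gradient-tracker update in (\ref{eq:upper_relationship_eq}) and reduce $\tfrac1n\mathbb{E}\|\mathbf{Y}^{(t+1)} - \bar{\mathbf{Y}}^{(t+1)}\|_F^2$ to the four consensus errors $\tfrac1n\|\mathbf{X}^{(s)} - \bar{\mathbf{X}}^{(s)}\|_F^2$ and $\tfrac1n\|\mathbf{Y}^{(s)} - \bar{\mathbf{Y}}^{(s)}\|_F^2$ for $s \in \{t,t-1\}$. First I would set $\mathbf{Z}^{(t)} := \mathbf{Y}^{(t)} + \tfrac1\eta\bigl(2\mathbf{X}^{(t)} - \mathbf{X}^{(t+1)} - \mathbf{X}^{(t-1)}\bigr)$, so that $\mathbf{Y}^{(t+1)} = \mathbf{W}\mathbf{Z}^{(t)}$; since $\mathbf{W}$ is doubly stochastic, Proposition~\ref{proposition:consensus_matrix_propoerty} gives $\bar{\mathbf{Y}}^{(t+1)} = \bar{\mathbf{Z}}^{(t)}$, and then Assumption~\ref{ass:mixing_matrix} yields
\[
\frac1n\,\mathbb{E}\bigl\|\mathbf{Y}^{(t+1)} - \bar{\mathbf{Y}}^{(t+1)}\bigr\|_F^2 \;\le\; \frac{1-\rho}{n}\,\mathbb{E}\bigl\|\mathbf{Z}^{(t)} - \bar{\mathbf{Z}}^{(t)}\bigr\|_F^2 .
\]
This single contraction is the only place the factor $1-\rho$ enters; every further occurrence of $\mathbf{W}$ below is bounded crudely by $\|\mathbf{W}\mathbf{Z}\|_F \le \|\mathbf{Z}\|_F$.

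Next I would substitute the $\mathbf{X}^{(t+1)}$ recursion from (\ref{eq:upper_relationship_eq}) into $\mathbf{Z}^{(t)}$ and center. Because applying $\mathbf{W}$ or $\mathbf{W}^2$ leaves the row-average unchanged, this expresses $\mathbf{Z}^{(t)} - \bar{\mathbf{Z}}^{(t)}$ explicitly as a fixed linear combination of $\widetilde{\mathbf{Y}}^{(t)}$, $\tfrac1\eta\bigl(2\widetilde{\mathbf{X}}^{(t)} - (1{+}\beta)\mathbf{W}\widetilde{\mathbf{X}}^{(t)}\bigr)$, $\tfrac1\eta\bigl(-\widetilde{\mathbf{X}}^{(t-1)} + \beta\mathbf{W}^2\widetilde{\mathbf{X}}^{(t-1)}\bigr)$, $(1{+}\alpha\beta)\mathbf{W}\widetilde{\mathbf{M}}^{(t)}$ and $-\alpha\beta\,\mathbf{W}^2\widetilde{\mathbf{M}}^{(t-1)}$, where $\widetilde{\mathbf{A}} := \mathbf{A} - \bar{\mathbf{A}}$; the decisive point is that the $\eta$ multiplying $\mathbf{M}^{(t)}$ and $\mathbf{M}^{(t-1)}$ inside $\mathbf{X}^{(t+1)}$ cancels the $\tfrac1\eta$ prefactor, so the $\mathbf{M}$-blocks carry no $\eta$ while the $\mathbf{X}$-blocks carry $\tfrac1{\eta^2}$. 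Bounding the two $\mathbf{X}$-blocks by the triangle inequality together with $\|\mathbf{W}\widetilde{\mathbf{X}}\|_F \le \|\widetilde{\mathbf{X}}\|_F$ gives $\|2\widetilde{\mathbf{X}}^{(t)} - (1{+}\beta)\mathbf{W}\widetilde{\mathbf{X}}^{(t)}\|_F \le (3{+}\beta)\|\widetilde{\mathbf{X}}^{(t)}\|_F$ and $\|{-}\widetilde{\mathbf{X}}^{(t-1)} + \beta\mathbf{W}^2\widetilde{\mathbf{X}}^{(t-1)}\|_F \le (1{+}\beta)\|\widetilde{\mathbf{X}}^{(t-1)}\|_F$, which is exactly where the $(3{+}\beta)^2$ and $(1{+}\beta)^2$ coefficients come from.

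For the two $\mathbf{M}$-blocks I would use $\widetilde{\mathbf{M}}^{(s)} = \lambda\widetilde{\mathbf{G}}^{(s)} + (1{-}\lambda)\widetilde{\mathbf{Y}}^{(s)}$. The $(1{-}\lambda)\widetilde{\mathbf{Y}}^{(s)}$ parts are folded into the $\widetilde{\mathbf{Y}}^{(t)}$ and $\widetilde{\mathbf{Y}}^{(t-1)}$ contributions; since $(1{+}\alpha\beta)(1{-}\lambda) \le \tfrac1{2\sqrt2}$ (and $\alpha\beta \le 1{+}\alpha\beta$) their effective coefficients stay below one, which after the outer $(1{-}\rho)$ and the split in \textbf{Fact~5} produces the $4(1{-}\rho)$ and $2(1{-}\rho)$ factors. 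For $\widetilde{\mathbf{G}}^{(s)}$ I would reuse the estimate already carried out in (\ref{eq:dist_m}): decompose $\mathbf{G}^{(s)} - \bar{\mathbf{G}}^{(s)} = (\mathbf{G}^{(s)} - \mathbb{E}_s\mathbf{G}^{(s)}) - (\bar{\mathbf{G}}^{(s)} - \mathbb{E}_s\bar{\mathbf{G}}^{(s)}) + (\mathbb{E}_s\mathbf{G}^{(s)} - \mathbb{E}_s\bar{\mathbf{G}}^{(s)})$, bound the first two by $\sigma^2$ via Assumption~\ref{ass:bound_variance}, and for the mean term insert $\pm\nabla f_i(\bar{\mathbf{x}}^{(s)})$ and apply $L$-smoothness (Assumption~\ref{ass:l_smooth}) and the $\zeta^2$-bound, giving $\tfrac1n\mathbb{E}\|\widetilde{\mathbf{G}}^{(s)}\|_F^2 \lesssim L^2\tfrac1n\|\widetilde{\mathbf{X}}^{(s)}\|_F^2 + \sigma^2 + \zeta^2$. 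The resulting $\lambda^2(1{+}\alpha\beta)^2L^2$ and $\lambda^2\alpha^2\beta^2L^2$ multiples of $\|\widetilde{\mathbf{X}}^{(t)}\|_F^2$ and $\|\widetilde{\mathbf{X}}^{(t-1)}\|_F^2$ are absorbed into the $\|\widetilde{\mathbf{X}}\|_F^2/\eta^2$ terms precisely because the two hypotheses $\eta \le \tfrac{3+\beta}{2\sqrt3\lambda L(1+\alpha\beta)}$ and $\eta \le \tfrac{1+\beta}{2\sqrt3\lambda L\alpha\beta}$ are equivalent to $12\lambda^2L^2\eta^2(1{+}\alpha\beta)^2 \le (3{+}\beta)^2$ and $12\lambda^2L^2\eta^2\alpha^2\beta^2 \le (1{+}\beta)^2$; the left-over $\sigma^2{+}\zeta^2$ pieces accumulate into the final $192\lambda^2(1{-}\rho)(1{+}\alpha\beta)^2(\sigma^2{+}\zeta^2)/n$ term. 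Collecting all contributions with \textbf{Fact~4} and \textbf{Fact~5} then gives the claim.

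I expect the main obstacle to be organizational rather than conceptual: there is no single hard inequality, but one must simultaneously track roughly half a dozen cross-terms (including the $\mathbf{W}$- and $\mathbf{W}^2$-twisted copies of $\widetilde{\mathbf{X}}$ and $\widetilde{\mathbf{M}}$ and the $\widetilde{\mathbf{Y}}$ that leaks out of $\widetilde{\mathbf{M}}$), choose the grouping in \textbf{Fact~5} that yields \emph{exactly} the stated constants $16,16,4,2,192$, and verify that each of the two $\eta$-bounds and the $(1{+}\alpha\beta)(1{-}\lambda)$-bound is tight enough to absorb, respectively, the two gradient-reduction terms and the $\widetilde{\mathbf{Y}}$ self-coupling. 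The subtlest single step is the $\eta$-cancellation in the second paragraph, which is what keeps the gradient/noise contribution $O(1)$ in $\eta$ and forces the $1/\eta^2$ onto the position errors.
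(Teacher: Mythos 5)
Your plan follows the paper's proof essentially step for step: the same centering of $\mathbf{Y}^{(t+1)}=\mathbf{W}\bigl(\mathbf{Y}^{(t)}+\tfrac{1}{\eta}(2\mathbf{X}^{(t)}-\mathbf{X}^{(t+1)}-\mathbf{X}^{(t-1)})\bigr)$ with a single mixing contraction, the same operator-norm bounds $(3+\beta)$ and $(1+\beta)$ on the $\mathbf{X}$-blocks, the same decomposition of $\mathbf{M}$ via the bound established for $\tfrac1n\mathbb{E}\|\mathbf{M}^{(s)}-\bar{\mathbf{M}}^{(s)}\|_F^2$, and the same use of the two $\eta$-conditions and the $(1+\alpha\beta)(1-\lambda)\le\tfrac{1}{2\sqrt2}$ condition to absorb the gradient and $\mathbf{Y}$-leakage terms into the stated constants. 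The argument is correct and matches the paper's route.
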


\begin{proof}
According to the update scheme in (\ref{eq:upper_relationship_eq}), we can get 
\begin{equation}
\mathbf{X}^{(t)} - \mathbf{X}^{(t+1)} 
= \left ( \mathbf{I} - \left ( 1+ \beta \right ) \mathbf{W} \right ) \mathbf{X}^{(t)}
+ \beta \mathbf{W}^2 \mathbf{X}^{(t-1)} + \left ( 1+\alpha\beta \right ) \eta \mathbf{WM}^{(t)}
- \alpha\beta\eta \mathbf{W}^2\mathbf{M}^{(t-1)},
\end{equation}
and
\begin{equation}
\label{eq:global_est_matrix}
2\mathbf{X}^{(t)} - \mathbf{X}^{(t+1)}  - \mathbf{X}^{(t-1)} 
= \left ( 2\mathbf{I} - \left ( 1+ \beta \right ) \mathbf{W} \right ) \mathbf{X}^{(t)}
+ \left ( \beta \mathbf{W}^2 - \mathbf{I} \right )  \mathbf{X}^{(t-1)} + \left ( 1+\alpha\beta \right ) \eta \mathbf{WM}^{(t)}
- \alpha\beta\eta \mathbf{W}^2\mathbf{M}^{(t-1)}.
\end{equation}
Then, based on (\ref{eq:global_est_matrix}) and (\ref{eq:upper_relationship_eq}), we have
\begin{equation}
\begin{aligned}
\frac{1}{n} \mathbb{E} \left \| \mathbf{Y}^{(t+1)} - \bar{\mathbf{Y}}^{(t+1)} \right \|_F^2 
& \le \frac{2(1-\rho)}{n} \mathbb{E} \left \| \frac{2\mathbf{I} - (1+\beta)\mathbf{W}}{\eta} \left ( \mathbf{X}^{(t)} - \bar{\mathbf{X}}^{(t)} \right ) 
+ \frac{\beta \mathbf{W}^2-\mathbf{I}}{\eta} \left ( \mathbf{X}^{(t-1)} - \bar{\mathbf{X}}^{(t-1)} \right ) \right. \\
& \left. \quad + (1+\alpha\beta)\mathbf{W} \left ( \mathbf{M}^{(t)} - \bar{\mathbf{M}}^{(t)} \right ) 
-\alpha\beta\mathbf{W}^2 \left ( \mathbf{M}^{(t-1)} - \bar{\mathbf{M}}^{(t-1)} \right ) \right \|_F^2 + \frac{2(1-\rho)}{n} \mathbb{E} \left \|\mathbf{Y}^{(t)} - \bar{\mathbf{Y}}^{(t)} \right \|_F^2,   
\end{aligned}
\end{equation}
The inequality holds for Assumption~\ref{ass:mixing_matrix} and \textbf{Fact 4}. Since $\mathbf{W} \prec \mathbf{I}$, $-\mathbf{I} \prec \mathbf{W}$ as well as the product of two doubly stochastic matrices is still doubly stochastic, we have $ 2\mathbf{I} - (1+\beta)\mathbf{W} \prec (3+\beta)\mathbf{I}$; $\beta\mathbf{W}^2 - \mathbf{I} \prec (\beta+1)\mathbf{W}^2 \prec (\beta+1)\mathbf{I}$, we can continue
\begin{equation}
\label{eq:dist_y}
\begin{aligned}
\frac{1}{n} \mathbb{E} \left \| \mathbf{Y}^{(t+1)} - \bar{\mathbf{Y}}^{(t+1)} \right \|_F^2 
& \le \frac{8(1-\rho) (3+\beta)^2}{n\eta^2} \mathbb{E} \left \| \mathbf{X}^{(t)} - \bar{\mathbf{X}}^{(t)} \right \|_F^2
+ \frac{8(1-\rho)(1+\beta)^2}{n\eta^2} \mathbf{E} \left \| \mathbf{X}^{(t-1)} - \bar{\mathbf{X}}^{(t-1)} \right \|_F^2 \\
& \quad \quad + \frac{8(1-\rho)(1+\alpha\beta)^2}{n} \mathbb{E} \left \| \mathbf{M}^{(t)} - \bar{\mathbf{M}}^{(t)} \right \|_F^2
+ \frac{8(1-\rho)\alpha^2\beta^2}{n} \mathbf{E} \left \| \mathbf{M}^{(t-1)} - \bar{\mathbf{M}}^{(t-1)} \right \|_F^2 \\
& \quad \quad + \frac{2(1-\rho)}{n} \mathbb{E} \left \| \mathbf{Y}^{(t)} - \bar{\mathbf{Y}}^{(t)} \right \|_F^2,
\end{aligned}
\end{equation}
where the inequality follows by applying the basic inequality $\left \| \sum_{i=1}^j \mathbf{A}_i \right \|_F^2 \le j \sum_{i=1}^j \left \| \mathbf{A}_i  \right \|_F^2$ for matrices of the same dimension with $j=4$. Plug (\ref{eq:dist_m}) into(\ref{eq:dist_y}) under the condition that $(\mathit{i})$ $\eta \le \frac{3+\beta}{2\sqrt{3} \lambda L (1+\alpha\beta)} $; $(\mathit{ii})$ $\eta \le \frac{1+\beta}{2\sqrt{3} \lambda L \alpha\beta}$; $(\mathit{iii})$ $(1+\alpha\beta) (1 - \lambda) \le \frac{1}{2\sqrt{2}}$ for ease of presentation.
\end{proof}

\begin{lemma}[Distance step improvement]
\label{lem:dist_xy_gt}
When $t \ge 1$, using learning rate $\eta \ge \max \left \{ \frac{2\sqrt{2}(3+\beta)}{\beta_0}, \frac{2\sqrt{2}(1+\beta)}{\beta_0-1}   \right \}$ and $\rho \le \frac{48 \lambda^2L^2}{(1-\lambda)^2}$, satisfy
\begin{equation}
 \begin{aligned}
& \frac{1}{n} \mathbb{E} \left \| \mathbf{X}^{(t+1)} - \bar{\mathbf{X}}^{(t+1)} \right \|_F^2 
+ \frac{1}{n} \mathbb{E} \left \| \mathbf{Y}^{(t+1)} - \bar{\mathbf{Y}}^{(t+1)} \right \|_F^2  \\
& \quad \quad \le \frac{4\beta_0^2 \left ( 1-\frac{\rho}{4}  \right ) }{n} 
\mathbb{E} \left \| \mathbf{X}^{(t)} - \bar{\mathbf{X}}^{(t)} \right \|_F^2 
+ \frac{4\beta_0^2 \left ( 1-\frac{\rho}{4}  \right ) }{n}
\mathbb{E} \left \| \mathbf{X}^{(t-1)} - \bar{\mathbf{X}}^{(t-1)} \right \|_F^2
+ \frac{8(1-\rho)(1+\alpha\beta)^2}{n} \mathbb{E} \left \| \mathbf{Y}^{(t)} - \bar{\mathbf{Y}}^{(t)} \right \|_F^2 \\
& \quad \quad \quad + \frac{8(1-\rho)(1+\alpha\beta)^2}{n} \mathbb{E} \left \| \mathbf{Y}^{(t-1)} - \bar{\mathbf{Y}}^{(t-1)} \right \|_F^2
+ \frac{\beta_0^2\left ( 1-\frac{\rho}{2}  \right )}{nL^2} (\sigma^2+\zeta^2) (192\lambda^2L^2 + \rho ). \notag
\end{aligned}  
\end{equation}
\end{lemma}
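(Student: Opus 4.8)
The plan is to derive this bound by directly \emph{summing} the two recursions already established: Lemma~\ref{lem:dist_x_gt} for $\frac{1}{n}\mathbb{E}\|\mathbf{X}^{(t+1)}-\bar{\mathbf{X}}^{(t+1)}\|_F^2$ and Lemma~\ref{lem:dist_y_gt} for $\frac{1}{n}\mathbb{E}\|\mathbf{Y}^{(t+1)}-\bar{\mathbf{Y}}^{(t+1)}\|_F^2$. The point is that both right-hand sides are linear combinations of the \emph{same} five nonnegative quantities --- $\frac{1}{n}\mathbb{E}\|\mathbf{X}^{(t)}-\bar{\mathbf{X}}^{(t)}\|_F^2$, $\frac{1}{n}\mathbb{E}\|\mathbf{X}^{(t-1)}-\bar{\mathbf{X}}^{(t-1)}\|_F^2$, $\frac{1}{n}\mathbb{E}\|\mathbf{Y}^{(t)}-\bar{\mathbf{Y}}^{(t)}\|_F^2$, $\frac{1}{n}\mathbb{E}\|\mathbf{Y}^{(t-1)}-\bar{\mathbf{Y}}^{(t-1)}\|_F^2$, and the additive term $(\sigma^2+\zeta^2)/n$ --- so after adding the two bounds it only remains to check that each of the four accumulated scalar coefficients is at most the claimed target coefficient and that the accumulated noise coefficient is at most $\beta_0^2(1-\rho/2)(192\lambda^2L^2+\rho)/L^2$.

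First I would handle the two $\mathbf{X}$-coefficients: their accumulated values are $2\beta_0^2(1-\rho/4)+16(1-\rho)(3+\beta)^2/\eta^2$ and $2(\beta_0-1)^2(1-\rho/4)+16(1-\rho)(1+\beta)^2/\eta^2$, where the $1/\eta^2$ terms come from Lemma~\ref{lem:dist_y_gt}. Here the assumed \emph{lower} bounds $\eta\ge 2\sqrt{2}(3+\beta)/\beta_0$ and $\eta\ge 2\sqrt{2}(1+\beta)/(\beta_0-1)$ are exactly what is needed: they give $16(3+\beta)^2/\eta^2\le 2\beta_0^2$ and $16(1+\beta)^2/\eta^2\le 2(\beta_0-1)^2\le 2\beta_0^2$, so after using $1-\rho\le 1-\rho/4$ and $(\beta_0-1)^2\le\beta_0^2$ each $\mathbf{X}$-coefficient is dominated by $4\beta_0^2(1-\rho/4)$. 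Next, for the two $\mathbf{Y}$-coefficients, the contributions from Lemma~\ref{lem:dist_x_gt} carry a $1/(\lambda^2L^2)$ factor of the form $\tfrac{\rho(1-\rho/2)(1+\alpha\beta)^2(1-\lambda)^2}{12\lambda^2L^2}$ (resp.\ with $\alpha^2\beta^2$ in place of $(1+\alpha\beta)^2$), and here the hypothesis $\rho\le 48\lambda^2L^2/(1-\lambda)^2$ bounds $\tfrac{\rho(1-\lambda)^2}{12\lambda^2L^2}\le 4$, so these merge with the $4(1-\rho)$ and $2(1-\rho)$ terms of Lemma~\ref{lem:dist_y_gt} into $8(1-\rho)(1+\alpha\beta)^2$ after invoking $(1+\alpha\beta)^2\ge 1$ and $\alpha^2\beta^2\le(1+\alpha\beta)^2$. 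Finally, the two noise contributions $\tfrac{\rho(1-\rho/2)(1+\alpha\beta)^2}{nL^2}(\sigma^2+\zeta^2)$ and $\tfrac{192\lambda^2(1-\rho)(1+\alpha\beta)^2}{n}(\sigma^2+\zeta^2)$ combine, via $(1+\alpha\beta)^2\le\beta_0^2$ and $1-\rho\le 1-\rho/2$, into precisely the stated remainder $\tfrac{\beta_0^2(1-\rho/2)}{nL^2}(\sigma^2+\zeta^2)(192\lambda^2L^2+\rho)$.

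The step I expect to be the main obstacle is not any individual inequality but the bookkeeping around the admissible learning-rate window: Lemma~\ref{lem:dist_x_gt} requires $\eta\le\rho/(12\lambda L)$, Lemma~\ref{lem:dist_y_gt} adds two further \emph{upper} bounds, while this lemma introduces \emph{lower} bounds on $\eta$; one must verify these are jointly feasible (they are, within the restrictive parameter regime carved out by the hypotheses of Theorem~\ref{thm:gt_dsum}) and then chase the constants so that every accumulated coefficient lands strictly below --- not merely near --- its target, applying the slacks $1-\rho\le 1-\rho/4$, $1-\rho\le 1-\rho/2$, $(\beta_0-1)^2\le\beta_0^2$ and $(1+\alpha\beta)^2\le\beta_0^2$ consistently in the same direction throughout.
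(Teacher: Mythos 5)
Your overall route is exactly the paper's: the paper's proof of this lemma is literally to add Lemma~\ref{lem:dist_x_gt} and Lemma~\ref{lem:dist_y_gt} and appeal to the hypotheses, and your bookkeeping for the two $\mathbf{X}$-coefficients (via the lower bounds $\eta\ge 2\sqrt{2}(3+\beta)/\beta_0$ and $\eta\ge 2\sqrt{2}(1+\beta)/(\beta_0-1)$ together with $1-\rho\le 1-\rho/4$ and $(\beta_0-1)^2\le\beta_0^2$) and for the additive noise term is correct. The gap is in your merge of the $\mathbf{Y}$-coefficients. Summing the two lemmas, the coefficient of $\tfrac{1}{n}\mathbb{E}\|\mathbf{Y}^{(t)}-\bar{\mathbf{Y}}^{(t)}\|_F^2$ is $\tfrac{\rho(1-\rho/2)(1+\alpha\beta)^2(1-\lambda)^2}{12\lambda^2L^2}+4(1-\rho)$, and the hypothesis $\rho\le 48\lambda^2L^2/(1-\lambda)^2$ yields at best $4\left(1-\tfrac{\rho}{2}\right)(1+\alpha\beta)^2+4(1-\rho)$. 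This is \emph{not} bounded by the target $8(1-\rho)(1+\alpha\beta)^2$: after your step $(1+\alpha\beta)^2\ge 1$ the needed inequality becomes $4(1-\rho/2)+4(1-\rho)\le 8(1-\rho)$, i.e.\ $1-\rho/2\le 1-\rho$, which fails for every $\rho>0$; without that relaxation one still needs $(4-6\rho)(1+\alpha\beta)^2\ge 4(1-\rho)$, which fails, e.g., for any $\rho\ge 2/3$ (at $\rho=1$ and $\lambda<1$ the contribution from Lemma~\ref{lem:dist_x_gt} is strictly positive while the target coefficient is zero). The $\mathbf{Y}^{(t-1)}$ coefficient has the same problem in milder form. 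So the constants do not "land strictly below" their targets as you assert; closing the step requires either a stronger hypothesis of the type $\rho(1-\rho/2)(1-\lambda)^2\le 48\lambda^2L^2(1-\rho)$ or weakening the $\mathbf{Y}$-coefficients in the conclusion to $8(1-\rho/2)(1+\alpha\beta)^2$ (which would then propagate into $Q_2$ and $C_2$ of Theorem~\ref{thm:gt_dsum}). The paper's one-line proof glosses over precisely this point, so the honest outcome of your computation is to flag the discrepancy, not to claim the stated constants follow.

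A secondary, smaller issue: you assert but do not verify that the learning-rate window is nonempty. To use Lemma~\ref{lem:dist_x_gt} one needs $\eta\le\rho/(12\lambda L)$, while this lemma demands $\eta\ge 2\sqrt{2}(3+\beta)/\beta_0$; compatibility forces $\beta_0\ge 24\sqrt{2}\lambda L(3+\beta)/\rho$, which is not implied by (and is generally in tension with) the parameter constraints of Theorem~\ref{thm:gt_dsum}. Since you explicitly identified this as the main obstacle, the proof should either exhibit a feasible parameter choice or state the joint conditions under which the window is nonempty, rather than asserting feasibility.
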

\begin{proof}
Adding the results of Lemma~\ref{lem:dist_x_gt} and \ref{lem:dist_y_gt} gives
%\du{give} 
the result following from our assumption of learning rate $\eta$ and the hyperparameters $\rho, \lambda, L$.
\end{proof}

We now state our convergence results for \textt{\algtwo} in Algorithm~\ref{alg:gt_sum_gt}. Similar to proof process of Theorem~\ref{thm:dsgd_sum}, with the smoothness of $f$,
\begin{equation}
\label{eq:begin_smooth_gt}
\begin{aligned}
\mathbb{E}f(\bar{\mathbf{z}}^{(t+1)}) 
& \le \mathbb{E} f(\bar{\mathbf{z}}^{(t)}) 
+ \tilde{\eta}L^2 \underset{T_1}{\underbrace{\mathbb{E}\left \| \bar{\mathbf{z}}^{(t)} - \bar{\mathbf{x}}^{(t)} \right \|^2}} 
+ \underset{T_2}{\underbrace{\left ( \frac{\tilde{\eta}^2L}{2} - \frac{\tilde{\eta}}{4}\right )   \mathbb{E} \left \| \bar{\mathbf{m}}^{(t)} \right \|^2 }}
- \frac{\tilde{\eta}}{2} \mathbb{E} \left \| \nabla f(\bar{\mathbf{x}}^{(t)}) \right \|^2 
+ \frac{\tilde{\eta}}{2} \underset{T_3}{\underbrace{\mathbb{E} \left \| \nabla f(\bar{\mathbf{x}}^{(t)}) - \bar{\mathbf{m}}^{(t)}  \right \|^2}},
\end{aligned}
\end{equation}
where $\tilde{\eta} = \frac{\eta}{1-\beta}$. For term $T_1$, we adopt the same derivation process as (\ref{eq:rearrange_two}), which is also suitable for the circumstance with no multiple local updates, indicating that 
\begin{equation}
\label{eq:gt_smooth_t1}
\left \| \bar{\mathbf{z}}^{(t)} - \bar{\mathbf{x}}^{(t)} \right \|^2 \le \frac{\beta^2\hat{\eta}^2}{\left ( 1-\beta \right )^3 } 
\sum_{l=0}^{t-1} \beta^{t-1-l} \left \| \bar{\mathbf{m}}^{(l)} \right \|^2,   
\end{equation}
where $\hat{\eta} = ((1-\beta)\alpha-1)\eta$. We can further obtain that 
\begin{equation}
\label{eq:gt_smooth_t1_part}
\begin{aligned}
\mathbb{E} \left \| \bar{\mathbf{m}}^{(t)}  \right \|^2 
& \overset{(a)}{=} \mathbb{E} \left \| \lambda \bar{\mathbf{g}}^{(t)} +  \left ( 1-\lambda  \right ) \bar{\mathbf{y}}^{(t)} \right \|^2 \\
& \overset{(b)}{\le} \mathbb{E} \left \| \bar{\mathbf{g}}^{(t)} + \bar{\mathbf{y}}^{(t)} \right \|^2  \\ 
& = \mathbb{E} \left \| \bar{\mathbf{g}}^{(t)} 
\pm \frac{1}{n} \sum_{i=1}^n \nabla f_i(\mathbf{x}_i^{(t)}) 
+ \bar{\mathbf{y}}^{(t)} \right \|^2 \\
& \overset{(c)}{\le} 2 \underset{T_4}{\underbrace{\mathbb{E} \left \| \bar{\mathbf{g}}^{(t)} + \frac{1}{n} \sum_{i=1}^n \nabla f_i(\mathbf{x}_i^{(t)}) \right \|^2}}
+ 2  \underset{T_5}{\underbrace{\mathbb{E} \left \| \bar{\mathbf{y}}^{(t)} - \frac{1}{n} \sum_{i=1}^n \nabla f_i(\mathbf{x}_i^{(t)}) \right \|^2}},
\end{aligned}
\end{equation}
where $(a)$ follows from (\ref{eq:upper_relationship_eq}) by model averaging; we omit the coefficients in $(b)$; $(c)$ follows by applying the \textbf{Fact 4}. For $T_4$,
\begin{equation}
\label{eq:gt_smooth_t4}
\begin{aligned}
T_4 & \overset{(a)}{=} \mathbb{E} \left \| \frac{2}{n} \sum_{i=1}^n \nabla F_i(\mathbf{x}_i^{(t)}, \xi_i^{(t)})
+ \frac{1}{n} \sum_{i=1}^n \nabla f_i(\mathbf{x}_i^{(t)} ) - \frac{1}{n} \sum_{i=1}^n \nabla F_i(\mathbf{x}_i^{(t)}, \xi_i^{(t)}) \right \|^2 \\
& \overset{(b)}{\le} \frac{8}{n^2} \sum_{i=1}^n \mathbb{E} \left \| \nabla F_i(\mathbf{x}_i^{(t)}, \xi_i^{(t)}) \right \|^2 
+ \frac{2}{n^2} \sum_{i=1}^n \mathbb{E} \left \| \nabla f_i(\mathbf{x}_i^{(t)} ) - F_i(\mathbf{x}_i^{(t)}, \xi_i^{(t)}) \right \|^2 \\
& \overset{(c)}{\le}  \frac{8G^2}{n} + \frac{2\sigma^2}{n},
\end{aligned}
\end{equation}
where $(a)$ follows the definition of $\bar{\mathbf{g}}_i$; $(b)$ follows by the Jensen's inequality and the \textbf{Fact 4}; and $(c)$ follows because Assumption~\ref{ass:bound_variance} and \ref{ass:bound_sto_grad}. Estimate $T_5$,
\begin{equation}
\label{eq:gt_smooth_t5}
\begin{aligned}
T_5 & \overset{(a)}{=} \mathbb{E} \left \| \frac{1}{n} \sum_{i=1}^n \mathbf{y}_i^{(t)} - \frac{1}{n} \sum_{i=1}^n \nabla f_i(\mathbf{x}_i^{(t)} )  \right \|^2 \\
& = \mathbb{E} \left \| \frac{1}{n} \sum_{i=1}^n \left ( \mathbf{y}_i^{(t)} 
- \frac{1}{n} \sum_{j=1}^n \nabla f_j(\mathbf{x}_i^{(t)})    \right ) 
+ \frac{1}{n^2} \sum_{i=1}^n \sum_{j=1}^n \left ( \nabla f_j(\mathbf{x}_i^{(t)}) 
- \nabla f(\mathbf{x}_i^{(t)} ) \right ) + \frac{1}{n^2} \sum_{i=1}^n \sum_{j=1}^n
\left ( \nabla f(\mathbf{x}_i^{(t)}) - \nabla f_i(\mathbf{x}_i^{(t)} ) \right )   \right \|^2 \\
& \overset{(b)}{\le} \frac{3}{n}\sum_{i=1}^n \mathbb{E} \left \| \mathbf{y}_i^{(t)} - \frac{1}{n} \sum_{j=1}^n \nabla f_j(\mathbf{x}_i^{(t)}) \right \|^2  
+ \frac{3}{n^4} \sum_{i=1}^n \sum_{j=1}^n \mathbb{E} \left \| \nabla f_j(\mathbf{x}_i^{(t)}) - \nabla f(\mathbf{x}_i^{(t)} ) \right \|^2
+ \frac{3}{n^4} \sum_{i=1}^n \sum_{j=1}^n \mathbb{E} \left \| \nabla f(\mathbf{x}_i^{(t)}) - \nabla f_i(\mathbf{x}_i^{(t)} ) \right \|^2 \\
& \overset{(c)}{\le} 3\epsilon^2 + \frac{6\zeta^2}{n^2}, 
\end{aligned}
\end{equation}
where $(a)$ follows because $\bar{\mathbf{y}}^{(t)} = \frac{1}{n} \sum_{i=1}^n \mathbf{y}_i^{(t)}$; $(b)$ follows by applying the \textbf{Fact 5}; $(c)$ follows by the Proposition~\ref{prop:appro_grad_tracking} and Assumption~\ref{ass:bound_variance}. Substituting (\ref{eq:gt_smooth_t1}), (\ref{eq:gt_smooth_t1_part}), (\ref{eq:gt_smooth_t4}), and (\ref{eq:gt_smooth_t5}) into the $T_1$ in (\ref{eq:begin_smooth_gt}), which yields
\begin{equation}
\label{eq:gt_smooth_t1_bound}
T_1 \le \frac{\beta^2\hat{\eta}^2}{(1-\beta)^3} \sum_{l=0}^{t-1} \beta^{t-1-l} 
\left ( \frac{16G^2}{n} + \frac{4\sigma^2}{n} + \frac{12\zeta^2}{n^2} + 6 \epsilon^2 \right ).  
\end{equation}
We can omit $T_2$ in (\ref{eq:begin_smooth_gt}) on the assumption that $\eta \le \frac{1-\beta}{2L}$ ensures that $\frac{\tilde{\eta}^2L}{2} - \frac{\tilde{\eta}}{4} < 0$. Then we estimate the bound of $T_3$ in (\ref{eq:begin_smooth_gt}),
\begin{equation}
\label{eq:gt_smooth_t3}
\begin{aligned}
\mathbb{E} \left \| \nabla f(\bar{\mathbf{x}}^{(t)}) - \bar{\mathbf{m}}^{(t)} \right \|^2 
& \overset{(a)}{=}  \mathbb{E} \left \| \lambda \left ( \nabla f(\bar{\mathbf{x}}^{(t)}) - \bar{\mathbf{g}}^{(t)}  \right )  
+ \left ( 1 - \lambda \right ) \left ( \nabla f(\bar{\mathbf{x}}^{(t)}) - \bar{\mathbf{y}}^{(t)} \right )   \right \|^2 \\
& \overset{(b)}{\le} 2\lambda^2 \mathbb{E} \left \| \nabla f(\bar{\mathbf{x}}^{(t)}) - \bar{\mathbf{g}}^{(t)} \right \|^2
+ 2 \left ( 1 - \lambda \right )^2 \mathbb{E} \left \| \nabla f(\bar{\mathbf{x}}^{(t)}) - \bar{\mathbf{y}}^{(t)} \right \|^2 \\
& = 2\lambda^2 \mathbb{E}\left \| \frac{1}{n} \sum_{i=1}^n 
\left ( \nabla f(\bar{\mathbf{x}}^{(t)}) \pm \nabla f_i(\mathbf{x}_i^{(t)}) 
- \nabla F_i(\mathbf{x}_i^{(t)}, \xi_i^{(t)}) \right )   \right \|^2 
+ 2 \left ( 1 - \lambda \right )^2 \mathbb{E} \left \| \nabla f(\bar{\mathbf{x}}^{(t)}) - \bar{\mathbf{y}}^{(t)} \right \|^2 \\
& \overset{(c)}{\le} \frac{4\lambda^2}{n^2} \sum_{i=1}^n \mathbb{E} \left \|  \nabla f(\bar{\mathbf{x}}^{(t)}) - \nabla f_i(\mathbf{x}_i^{(t)}) \right \|^2
+  \frac{4\lambda^2\sigma^2}{n} + 2 \left ( 1 - \lambda \right )^2 \mathbb{E} \left \| \nabla f(\bar{\mathbf{x}}^{(t)}) - \bar{\mathbf{y}}^{(t)} \right \|^2 \\
& = \frac{4\lambda^2}{n^2} \sum_{i=1}^n \mathbb{E} 
\left \| \nabla f(\bar{\mathbf{x}}^{(t)}) \pm \nabla f(\mathbf{x}_i^{(t)} )- \nabla f_i(\mathbf{x}_i^{(t)}) \right \|^2
+  \frac{4\lambda^2\sigma^2}{n} + 2 \left ( 1 - \lambda \right )^2 \mathbb{E} \left \| \nabla f(\bar{\mathbf{x}}^{(t)}) - \bar{\mathbf{y}}^{(t)} \right \|^2 \\
& \overset{(d)}{\le} \frac{8L^2\lambda^2}{n^2} \sum_{i=1}^n \left \| \bar{\mathbf{x}}^{(t)} - \mathbf{\mathbf{x}}_i^{(t)} \right \|^2 
+ \frac{4\lambda^2}{n} \left ( \sigma^2 + 2\zeta^2 \right ) 
+ 2 \left ( 1 - \lambda \right )^2 \underset{T_6}{\underbrace{\mathbb{E} \left \| \nabla f(\bar{\mathbf{x}}^{(t)}) - \bar{\mathbf{y}}^{(t)} \right \|^2}},
\end{aligned}
\end{equation}
where $(a)$ is follows by the definition of $\bar{\mathbf{m}}^{(t)}$; $(b)$ follows because of the \textbf{Fact 4}; $(c)$ and $(d)$ follow by applying the \textbf{Fact 4}, Jensen's inequality and Assumption~\ref{ass:bound_variance}. For $T_6$, we can estimate as follows

\begin{equation}
\label{eq:gt_smooth_t6}
\begin{aligned}
T_6 & = \mathbb{E} \left \| \frac{1}{n} \sum_{i=1}^n \nabla f(\bar{\mathbf{x}}^{(t)})
\pm \frac{1}{n} \sum_{i=1}^n\nabla f(\mathbf{x}_i^{(t)}) - \bar{\mathbf{y}}^{(t)}  \right \|^2 \\
& \overset{(a)}{\le} \frac{2}{n^2} \sum_{i=1}^n\mathbb{E} \left \| \nabla f(\bar{\mathbf{x}}^{(t)}) 
- \nabla f(\mathbf{x}_i^{(t)}) \right \|^2 
+ 2 \mathbb{E} \left \| \frac{1}{n} \sum_{i=1}^n \nabla f(\mathbf{x}_i^{(t)}) 
- \frac{1}{n} \sum_{i=1}^n\mathbf{y}_i^{(t)}  \right \|^2 \\
& \overset{(b)}{\le} \frac{2L^2}{n^2} \sum_{i=1}^n \mathbb{E} \left \| \bar{\mathbf{x}}^{(t)} - \mathbf{x}_i^{(t)} \right \|^2
+ 2 \mathbb{E} \left \| \frac{1}{n^2} \sum_{j=1}^n\sum_{i=1}^n \nabla f(\mathbf{x}_i^{(t)})
\pm \frac{1}{n^2}\sum_{j=1}^n \sum_{i=1}^n \nabla f_j(\mathbf{x}_i^{(t)} ) -\frac{1}{n}\sum_{i=1}^n \mathbf{y}_i^{(t)}   \right \|^2 \\
& \overset{(c)}{\le} \frac{2L^2}{n^2} \sum_{i=1}^n \mathbb{E} \left \| \bar{\mathbf{x}}^{(t)} - \mathbf{x}_i^{(t)} \right \|^2
+ \frac{2}{n^2}\sum_{i=1}^n \left ( \frac{2}{n^2} \sum_{j=1}^n 
\mathbb{E} \left \| \nabla f(\mathbf{x}_i^{(t)}) - \nabla f_j(\mathbf{x}_i^{(t)} ) \right \|^2 
+ 2\mathbb{E} \left \| \frac{1}{n}\sum_{j=1}^n \nabla f_j(\mathbf{x}_i^{(t)}) - \mathbf{y}_i^{(t)}  \right \|^2    \right ) \\
& \overset{(d)}{\le} \frac{2L^2}{n^2} \sum_{i=1}^n \mathbb{E} \left \| \bar{\mathbf{x}}^{(t)} - \mathbf{x}_i^{(t)} \right \|^2+ \frac{4\zeta^2}{n^2} + 4\epsilon^2,
\end{aligned}
\end{equation}
where $(a)$ follows from the \textbf{Fact 4}; $(b)$ follows because the Assumption~\ref{ass:l_smooth} for the first term on the right hand of the inequality; $(c)$ follows by applying the \textbf{Fact} 4 with Jensen's inequality; $(d)$ follows because the Assumption~\ref{ass:bound_variance} and Proposition~\ref{prop:appro_grad_tracking}. Plugging (\ref{eq:gt_smooth_t6}) into (\ref{eq:gt_smooth_t3}) yields
\begin{equation}
\label{eq:gt_smooth_t3_bound}
T_3 \le \left ( \frac{8L^2 \lambda^2}{n^2} + \frac{4L^2 \left ( 1 - \lambda \right )^2 }{n^2}    \right ) 
\mathbb{E} \left \| \bar{\mathbf{X}}^{(t)} - \mathbf{X}^{(t)} \right \|_F^2
+ \frac{4\lambda^2\sigma^2 + 16\zeta^2}{n} + 8(1-\lambda)^2\epsilon^2.  
\end{equation}
Plugging (\ref{eq:gt_smooth_t1_bound}) and (\ref{eq:gt_smooth_t3_bound}) into (\ref{eq:begin_smooth_gt}), which yields
\begin{equation}
\label{eq:gt_close}
\begin{aligned}
\mathbb{E} f(\bar{\mathbf{z}}^{(t+1)}) & \le \mathbb{E} f(\bar{\mathbf{z}}^{(t)}) 
+ \frac{\beta^2\hat{\eta}^2\tilde{\eta}L^2}{(1-\beta)^3} \sum_{l=0}^{t-1} \beta^{t-1-l}
\left ( \frac{16G^2}{n} + \frac{4\sigma^2}{n} + \frac{12\zeta^2}{n^2} + 6 \epsilon^2 \right )
- \frac{\tilde{\eta}}{2} \mathbb{E} \left \| \nabla f(\bar{\mathbf{x}}^{(t)} ) \right \|^2 + \frac{6L^2\tilde{\eta}}{n^2} \mathbb{E} \left \| \bar{\mathbf{X}}^{(t)} - \mathbf{X}^{(t)} \right \|_F^2 \\
& \quad + \frac{2\tilde{\eta}\lambda^2\sigma^2 + 8\tilde{\eta}\zeta^2}{n} + 4\tilde{\eta}(1-\lambda)^2 \epsilon^2 \\
& \le \mathbb{E} f(\bar{\mathbf{z}}^{(t)}) 
+ \frac{\beta^2\hat{\eta}^2\tilde{\eta}L^2}{(1-\beta)^4} \left ( \frac{16G^2}{n} + \frac{4\sigma^2}{n} + \frac{12\zeta^2}{n} + 6 \epsilon^2 \right ) 
- \frac{\tilde{\eta}}{2} \mathbb{E} \left \| \nabla f(\bar{\mathbf{x}}^{(t)} ) \right \|^2
+ \frac{2\tilde{\eta}\lambda^2\sigma^2 + 8\tilde{\eta}\zeta^2}{n} + 4\tilde{\eta}(1-\lambda)^2 \epsilon^2 \\
& \quad + \frac{6L^2\tilde{\eta}}{n^2} \left ( \mathbb{E} \left \| \bar{\mathbf{X}}^{(t)} - \mathbf{X}^{(t)} \right \|_F^2 + \mathbb{E} \left \| \bar{\mathbf{Y}}^{(t)} - \mathbf{Y}^{(t)} \right \|_F^2\right ),
\end{aligned}
\end{equation}
the last inequality holds because $\sum_{l=0}^{t-1}\beta^{t-1-l} \le \frac{1}{1-\beta}$, and we add the non-negative term $\mathbb{E}  \left \| \bar{\mathbf{Y}}^{(t)} - \mathbf{Y}^{(t)} \right \|_F^2$.

\begin{proposition}
\label{prop:recursive_abs}
Let $\left \{ V_t \right \}_{t \ge 0}$ be a non-negative sequence and $C_2 \ge 0$ be some constant such that $\forall t \ge 1, V_{t+1} \le Q_2V_t + Q_2V_{t-1} + C_2$, where $Q_2 \in (0, 1)$. Then the following inequality holds if $T \ge 1$, 
\begin{equation}
\begin{aligned}
V_{t+1} & \le Q_2V_t +Q_2V_{t-1}+C_2 \\
& \le Q_2V_t +V_{t-1}+C_2 \\
& \le Q_2^2V_{t-1} + (Q_2V_{t-2} + V_{t-1}) + (Q_2+1)C_2 \\
& \dots \\
& \le Q_2^tV_1 + \sum_{l=0}^{t-1} Q_2^{t-1-l} V_l + C_2 \sum_{l=0}^{t-1}Q_2^l .
\end{aligned}
\end{equation}
Summing $t$ over from $0$ to $T-1$,
\begin{equation}
\begin{aligned}
\sum_{t=0}^{T-1} V_t & = V_0 + V_1+ \sum_{t=2}^{T-1}V_t \\
& \le V_0 + V_1+ V_1 \sum_{t=2}^{T-1} Q_2^{t-1} + \sum_{t=2}^{T-1} \sum_{l=0}^{t-2} Q_2^{t-2-l} V_l + C_2\sum_{t=2}^{T-1} \sum_{l=0}^{t-2} Q_2^l \\
& \le V_0 + V_1 + V_1 \sum_{t=0}^{\infty} Q_2^t + \sum_{t=0}^{T-1} \sum_{l=0}^{\infty} Q_2^l V_l + C_2\sum_{t=0}^{T-1} \sum_{l=0}^{\infty}  Q_2^l \\
& \le V_0 + \frac{(2-Q_2)V_1}{1-Q_2} + \frac{TV_{\max}}{1-Q_2} + \frac{C_2T}{1-Q_2},
\end{aligned}
\end{equation}
where $V_{\max} = \max_{0 \le t \le T-1} \left \{ V_t\right \}$.
\end{proposition}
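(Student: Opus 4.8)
The plan is to prove the claimed bound in two stages: first establish a closed-form single-step estimate for $V_{t+1}$ by unrolling the coupled two-term recursion, and then sum that estimate over $t$ from $0$ to $T-1$. For the first stage I would prove by induction on $t \ge 1$ the estimate
\begin{equation}
V_{t+1} \le Q_2^t V_1 + \sum_{l=0}^{t-1} Q_2^{t-1-l} V_l + C_2 \sum_{l=0}^{t-1} Q_2^l. \notag
\end{equation}
The only non-trivial device is that $Q_2 \in (0,1)$ together with $V_{t-1} \ge 0$ lets me weaken the coefficient of the lagged term, $Q_2 V_{t-1} \le V_{t-1}$, so that the hypothesis becomes $V_{t+1} \le Q_2 V_t + V_{t-1} + C_2$. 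For the base case $t=1$ this immediately gives $V_2 \le Q_2 V_1 + V_0 + C_2$, which is the claimed form.

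For the inductive step I would apply the weakened recursion to $V_{t+2}$, substitute the inductive bound for the single leading term $Q_2 V_{t+1}$ (i.e.\ multiply the estimate for $V_{t+1}$ by $Q_2$), and verify that the geometric coefficients recombine exactly: the $V$-sum gains the new index $l=t$ with coefficient $Q_2^0$, turning $\sum_{l=0}^{t-1}Q_2^{t-l}V_l + V_t$ into $\sum_{l=0}^{t}Q_2^{t-l}V_l$, while the $C_2$-sum $C_2\sum_{l=0}^{t-1}Q_2^{l+1} + C_2$ telescopes to $C_2\sum_{l=0}^{t}Q_2^l$. Since these identities are exact, the induction closes with no further slack. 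For the second stage I would then split $\sum_{t=0}^{T-1} V_t = V_0 + V_1 + \sum_{t=2}^{T-1} V_t$ and apply the unrolled estimate (with $t$ shifted to $t-1$) to each $V_t$ for $t \ge 2$, producing three sums: a pure geometric sum $V_1 \sum_{t\ge 2} Q_2^{t-1} \le V_1/(1-Q_2)$, which combines with the separated $V_1$ to give $(2-Q_2)V_1/(1-Q_2)$; a $C_2$ double sum, bounded by $C_2 T/(1-Q_2)$ since each inner geometric sum is at most $1/(1-Q_2)$ and there are at most $T$ outer terms; and the mixed double sum $\sum_{t=2}^{T-1}\sum_{l=0}^{t-2} Q_2^{t-2-l} V_l$.

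The main obstacle is precisely this mixed double sum, and the crucial design choice is how to treat it. The tempting move of swapping the order of summation and collecting each $V_l$ reintroduces $\sum_l V_l$ on the right-hand side with coefficient $1/(1-Q_2) > 1$, so the inequality would fail to close; it is genuinely circular. The resolution is to bound the interior iterates uniformly by $V_{\max} = \max_{0\le t\le T-1} V_t$ \emph{before} summing: each inner sum $\sum_{l} Q_2^{t-2-l} V_{\max} \le V_{\max}/(1-Q_2)$ is then decoupled from the running sum, and the outer sum over at most $T$ values of $t$ contributes $T V_{\max}/(1-Q_2)$. Collecting $V_0$, the combined $V_1$ term, the $V_{\max}$ term, and the $C_2$ term yields exactly the stated bound. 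A minor point I would verify along the way is that every index $l$ appearing in the mixed sum satisfies $0 \le l \le T-3 < T-1$, so that $V_{\max}$ legitimately dominates each $V_l$.
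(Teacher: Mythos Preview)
Your proposal is correct and follows essentially the same route as the paper: weaken $Q_2 V_{t-1}\le V_{t-1}$, unroll the resulting one-sided recursion to reach $V_{t+1}\le Q_2^t V_1+\sum_{l=0}^{t-1}Q_2^{t-1-l}V_l+C_2\sum_{l=0}^{t-1}Q_2^l$, then sum over $t$ and control the mixed double sum by replacing each $V_l$ with $V_{\max}$ before invoking the geometric series. Your version is in fact cleaner than the paper's, since you close the unrolling by an explicit induction rather than the ``$\dots$'' argument, and you correctly flag why swapping the order of summation in the mixed term would be circular.
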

Summing (\ref{eq:gt_close}) over $t$ from $0$ to $T-1$, then dividing both sides by $2/\tilde{\eta}$ and rearranging terms. Finally, applying Lemma~\ref{lem:dist_xy_gt} to Proposition~\ref{prop:recursive_abs}. Concretely, we consider $C_2 = \frac{\beta_0^2\left ( 1-\frac{\rho}{2}  \right )}{L^2} (\sigma^2+\zeta^2) (192\lambda^2L^2 + \rho )$, $V_t \triangleq \mathbb{E} \left \| \mathbf{X}^{(t)} - \bar{\mathbf{X}}^{(t)} \right \|_F^2 + \mathbb{E} \left \| \mathbf{Y}^{(t)} - \bar{\mathbf{Y}}^{(t)} \right \|_F^2$, $Q_2 \triangleq \min \left \{ 4\beta_0^2\left ( 1-\frac{\rho}{4}  \right ), 8(1-\rho) (1+\alpha\beta)^2  \right \}$ ensuring that $\left\{\begin{matrix}
\begin{aligned} 
& 4\beta_0^2\left ( 1-\frac{\rho}{4}  \right ) < 1 \\
& 8(1-\rho) (1+\alpha\beta)^2 < 1 
\end{aligned}
\end{matrix}\right.$. Since $\mathbf{x}_i^{(0)} = \bar{\mathbf{x}}^{(0)} = \mathbf{0}$, and 
$\mathbb{E} \left \| \mathbf{Y}^{(0)} - \bar{\mathbf{Y}}^{(0)}  \right \|_F^2
= \mathbb{E} \left \| \mathbf{G}^{(0)} - \bar{\mathbf{G}}^{(0)}  \right \|_F^2
\le \frac{6L^2}{n} \mathbb{E} \left \| \mathbf{X}^{(0)} - \bar{\mathbf{X}}^{(0)} \right \| 
+ 6(\zeta^2 + \sigma^2) = 6(\zeta^2 + \sigma^2)$, thus $V_0 \le Q_3 = 6(\zeta^2 + \sigma^2)$. Furthermore, we assume that $\forall i, \mathbf{x}_i^{(-1)} = \mathbf{y}_i^{(-1)} = \mathbf{0}$, which is extended at an initial stage (\ie, $t=0$) for Lemma~\ref{lem:dist_x_gt}, (\ref{lem:dist_y_gt}), and (\ref{lem:dist_xy_gt}), thus $V_1 \le Q_4 = \beta_0^2 \left ( 1-\frac{\rho}{2}  \right ) (\sigma^2+\zeta^2) \left ( 48+ \frac{\rho}{L^2} + 192\lambda^2   \right )$. The proof is completed. 

\section{Experimental Setup}
\label{sec:appendix:ex_setup}
% Our main evaluation results demonstrate that \textt{\algone{}} outperforms other methods in terms of model accuracy, and \textt{\algtwo{}} achieves a higher performance under different levels of non-IID
%with For two different algorithms, \textt{\algone} and \textt{\algtwo} 
% on different datasets and neural networks. All experiments are executed in a CPU/GPU cluster, equipped with Inter(R) Xeon(R) Gold 6126, 4 GTX 2080Ti cards, and 12 Tesla T4 cards. We used Pytorch and Ray \cite{ray2018} to implement and train our models. 
% Besides, we apply GRACE method~\cite{xu2021grace}, a compressed communication framework to all experiments to make a trade-off between model generalization and communication overhead.

\subsection{Dataset and Model Description}
MNIST is a 10-class handwritten digits image classification dataset with $70,000$ $28$ $\times$ $28$ examples, $60,000$ of which are training datasets, the remaining $10,000$ are test datasets. 
Its extended version, EMNIST consists of images of digits and upper and lower case English characters, which includes $62$ total classes. CIFAR10 is labeled subsets of the 80 million images dataset, sharing the same $60,000$ input images with $10$ unique labels. For NLP, AG NEWS is a 4-class classification dataset on categorized news articles, containing $120,000$ training samples and $7,600$ testing samples. An overall description is given in Table~\ref{tab:models_datasets}.

\begin{table}[!htbp]
\centering
\caption{Datasets and Models
% \du{models->Models}
}
\resizebox{1.\textwidth}{!}{
\begin{tabular}{ccccccc}
\toprule
Dataset & Task & Training samples & Testing samples & Classes & Model  \\ \midrule
MNIST~\cite{lecun1998gradient}  & Handwritten character recognition (CV) & $60,000$ & $10,000$ & $10$ &LeNet described in Table~\ref{tbl:lenet_architecture} \\
EMNIST~\cite{cohen2017emnist}  & Handwritten character recognition (CV) & $731,668$ & $82,587$ & $62$ & CNN described in Table~\ref{tbl:cnn_architecture} \\
CIFAR10~\cite{krizhevsky2009learning} %\du{citation?} 
& Image classification (CV) & $50,000$ & $10,000$ & $10$ & LeNet described in Table~\ref{tbl:lenet_architecture} \\
AG NEWS \cite{zhang2015character} %\du{citation?} 
& Text classification (NLP) & $120,000$ & $7,600$ & $4$ & RNN described in Table~\ref{tbl:lstm_architecture} \\ \bottomrule
\end{tabular}
}
\label{tab:models_datasets}
\end{table}

%%%%%%%%% model architecture description
\begin{table}[!htbp]
\centering
\caption{LeNet model on MNIST and CIFAR10.}
% \resizebox{1.\textwidth}{!}{
\begin{tabular}{cccc}
\toprule
Layer     & Output Shape & Hyperparameters & Activation \\
\midrule
\textsc{Conv2d}    & $(28, 28, 6)$ & kernel size $= 5$ & ReLU       \\
\textsc{MaxPool2d} & $(14, 14, 6)$  & pool size $= 2$   &            \\
\textsc{Con2d}     & $(10, 10, 16)$ & kernel size $= 5$ & ReLU       \\
\textsc{MaxPool2d} & $(5, 5, 16)$   & pool size $= 2$   &            \\
\textsc{Flatten}   & $400$          &                 &            \\
\textsc{Dense}     & $120$          &                 &            \\
\textsc{Dense}     & $84$           &                 &            \\
\textsc{Dropout}   & $84$           & $p = 0.5$         & \\
\textsc{Dense}     & $10$           &                 &  \\ 
\bottomrule
\end{tabular}
% }
\label{tbl:lenet_architecture}
\end{table}

\begin{table}[!htbp]
\centering
\caption{CNN model on EMNIST.}
\begin{tabular}{cccc}
\toprule
Layer & Output Shape & Hyperparameters & Activation \\
\midrule
\textsc{Conv2d} & $(26, 26, 32)$ & kernel size $= 3$, strides $= 1$ \\ 
\textsc{Conv2d} & $(24, 24, 64)$ & kernel size $= 3$, strides $= 1$ & ReLU \\
\textsc{MaxPool2d} & $(12, 12, 64) $ & pool size $= 2$ & \\
\textsc{Dropout} & $(12, 12, 64)$ & $p = 0.25$ & \\
\textsc{Flatten} & $9,216$ & & \\
\textsc{Dense} & $128$ & & \\
\textsc{Dropout} & $128$ & $p = 0.5$ & \\
\textsc{Dense} & $62$ & & softmax \\
\bottomrule
\end{tabular}
\label{tbl:cnn_architecture}
\end{table}

\begin{table}[!htbp]
\centering
\caption{RNN model on AG NEWS.}
\begin{tabular}{cc}
\toprule
Layer        & Hyperparameters                      \\
\midrule
\textsc{EmbeddingBag} & embeddings $= 95,812$, dimension $= 64$   \\
\textsc{Dense}        & in\_features $= 64$, out\_features $= 4$ \\
\textsc{Dropout}      & $p = 0.5$ \\
\bottomrule
\end{tabular}
\label{tbl:lstm_architecture}
\end{table}

\section{Addtional Evaluations}
\label{appendix:addi_eval}
\subsection{Model Generalization}
% For training LeNet over MNIST in Table~\ref{tab:acc_appendix}, although all methods achieve perfect model performance, \textt{\algone} and \textt{\algtwo} still significantly improve the model accuracy compared with other three baselines. Comparing  \textt{\algone} and \textt{\algtwo}, the latter is inferior to the former one, but we observe that the model's precision could be negligible. On EMNIST dataset, we observe that \textt{\algtwo} has achieves the best performance under different non-IID levels, improving the accuracy results $10.0\%$, $36.7\%$ and $35.5\%$ compared with \textt{Local SGD w/ momentum}. 
Figure~\ref{fig:mnist_lenet}, \ref{fig:emnist_cnn}, \ref{fig:cifar10_lenet} and \ref{fig:agnews_lstm} present the experimental results on the model training process for different benchmarks. We note that the superiority of our proposed methods is better reflected in the convergence acceleration. For example, both \textt{\algone} and \textt{\algtwo} require about $50$ epochs to reach convergence for LeNet over MNIST, which reduces the number of training epochs by $40\%$. Switching to large datasets, \ie, CIFAR10 and AG NEWS, our proposed algorithms converge faster than other baselines with respect to the training epochs.

%%%%%%%%%%%%%%% model generalization figure
\begin{figure}[!htbp]
\centering
\subfigure[non-IID $= 0.1$]{
\label{fig:mnist_lenet_noniid0.1}
\includegraphics[width=0.32\linewidth]{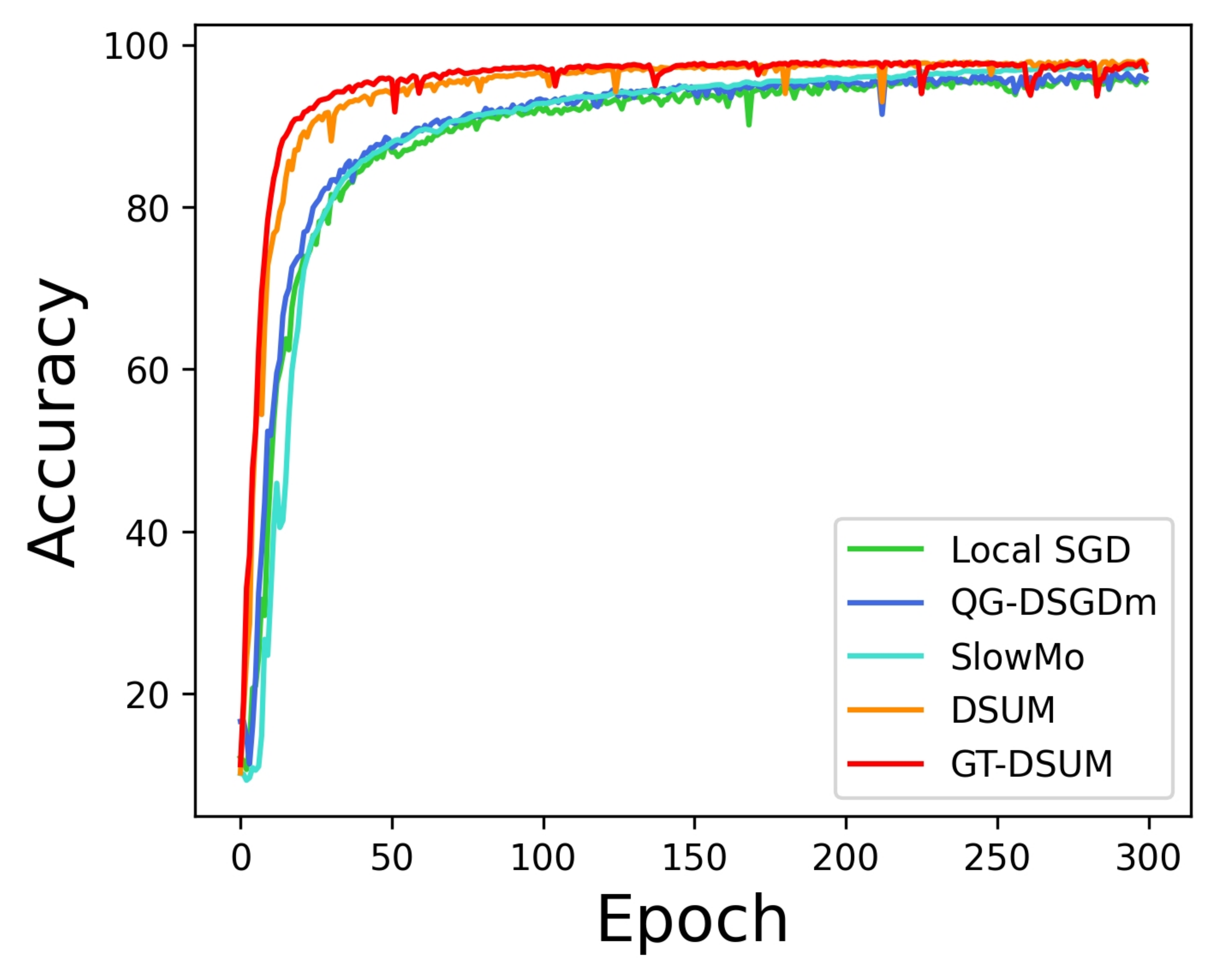}}
\subfigure[non-IID $= 1$]{
\label{fig:mnist_lenet_noniid1}
\includegraphics[width=0.32\linewidth]{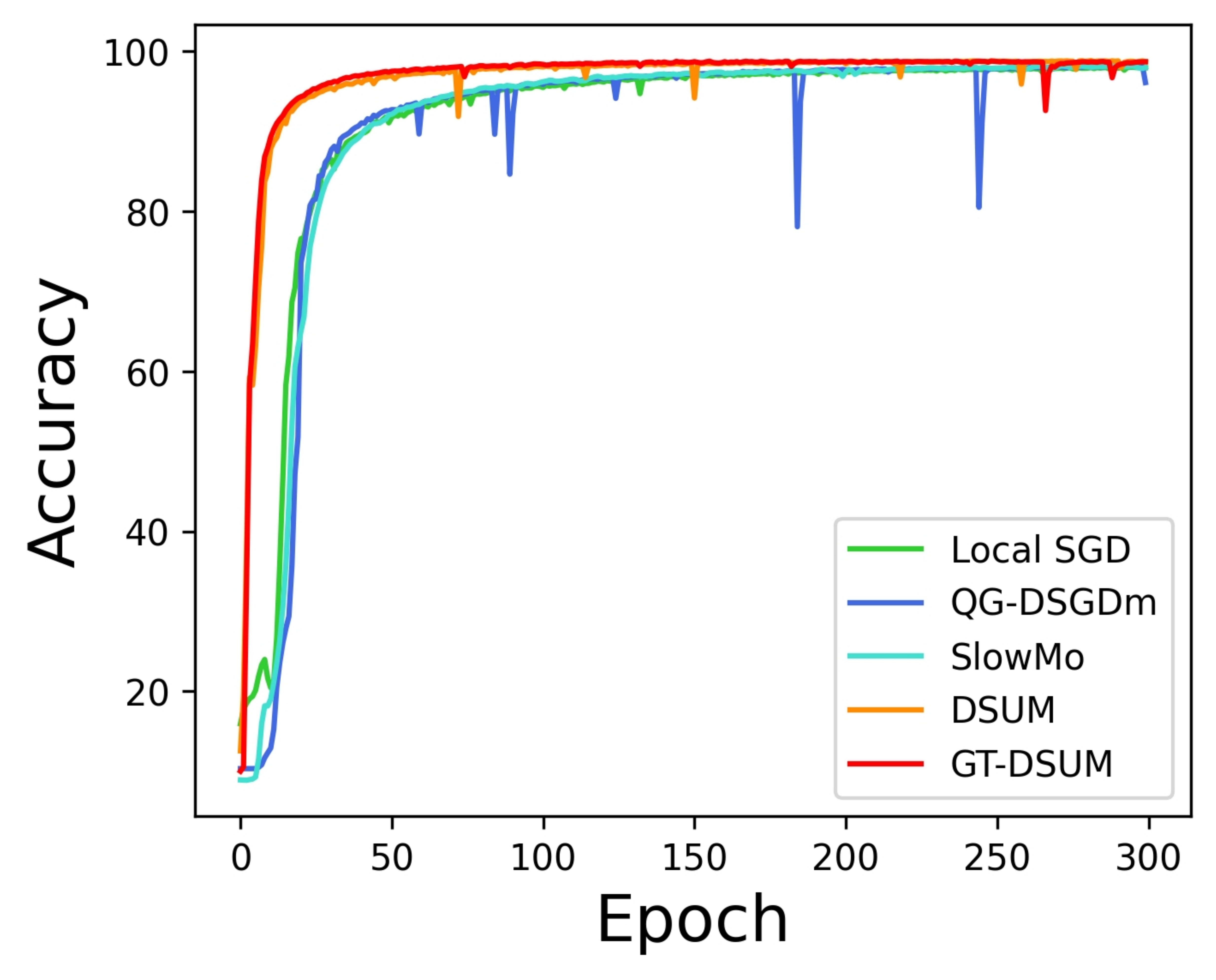}}
\subfigure[non-IID $= 10$]{
\label{fig:mnist_lenet_noniid10}
\includegraphics[width=0.32\linewidth]{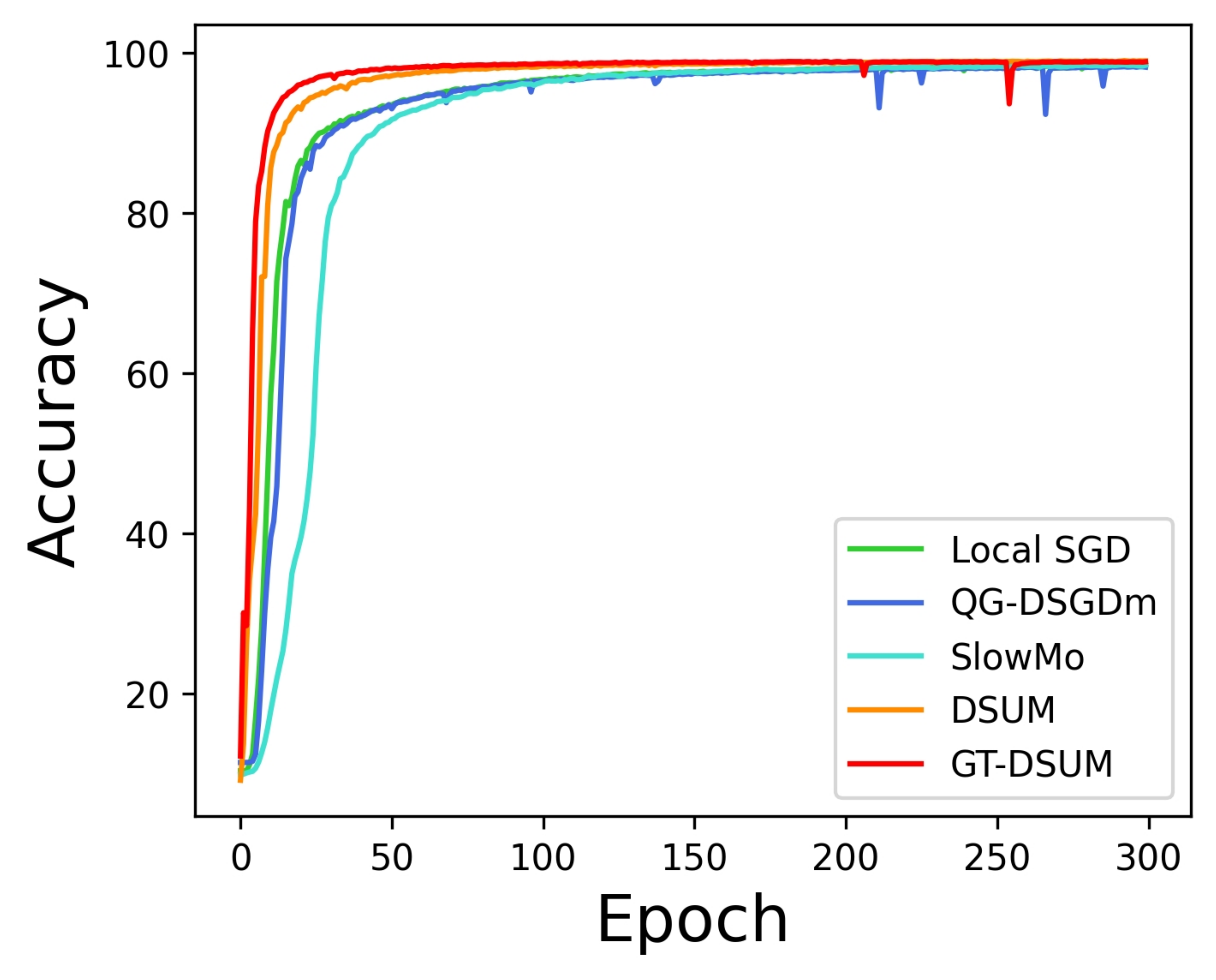}}
\label{fig:mnist_lenet_noniid10}
\caption{Testing accuracy for various tasks training on \textbf{LeNet} over \textbf{MNIST}.}
\label{fig:mnist_lenet}
\end{figure}

\begin{figure}[!htbp]
\centering
\subfigure[non-IID $= 0.1$]{
\includegraphics[width=0.32\linewidth]{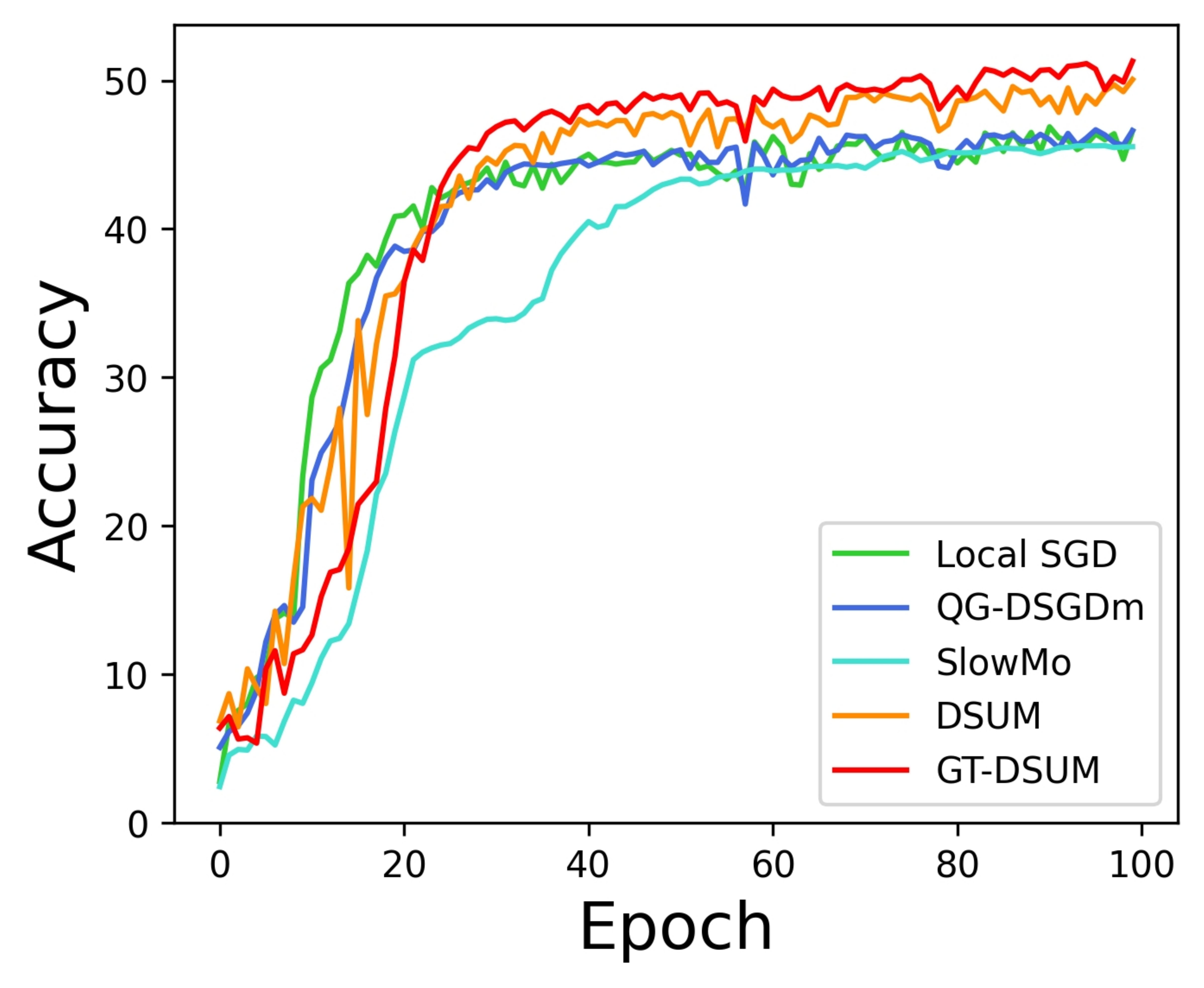}
\label{fig:emnist_cnn_noniid0.1}}
\subfigure[non-IID $= 1$]{
\includegraphics[width=0.32\linewidth]{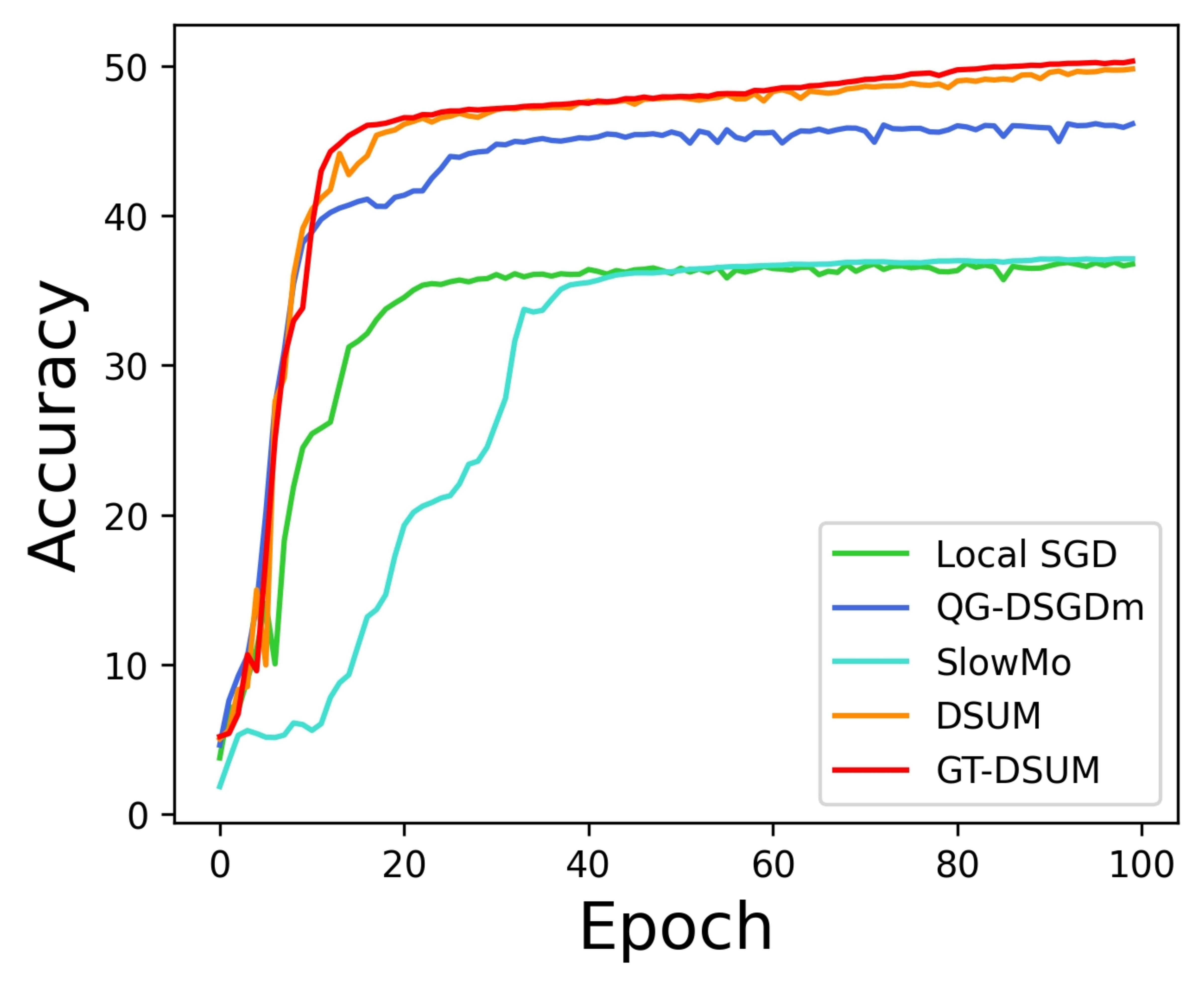}
\label{fig:emnist_cnn_noniid1}}
\subfigure[non-IID $= 10$]{
\includegraphics[width=0.32\linewidth]{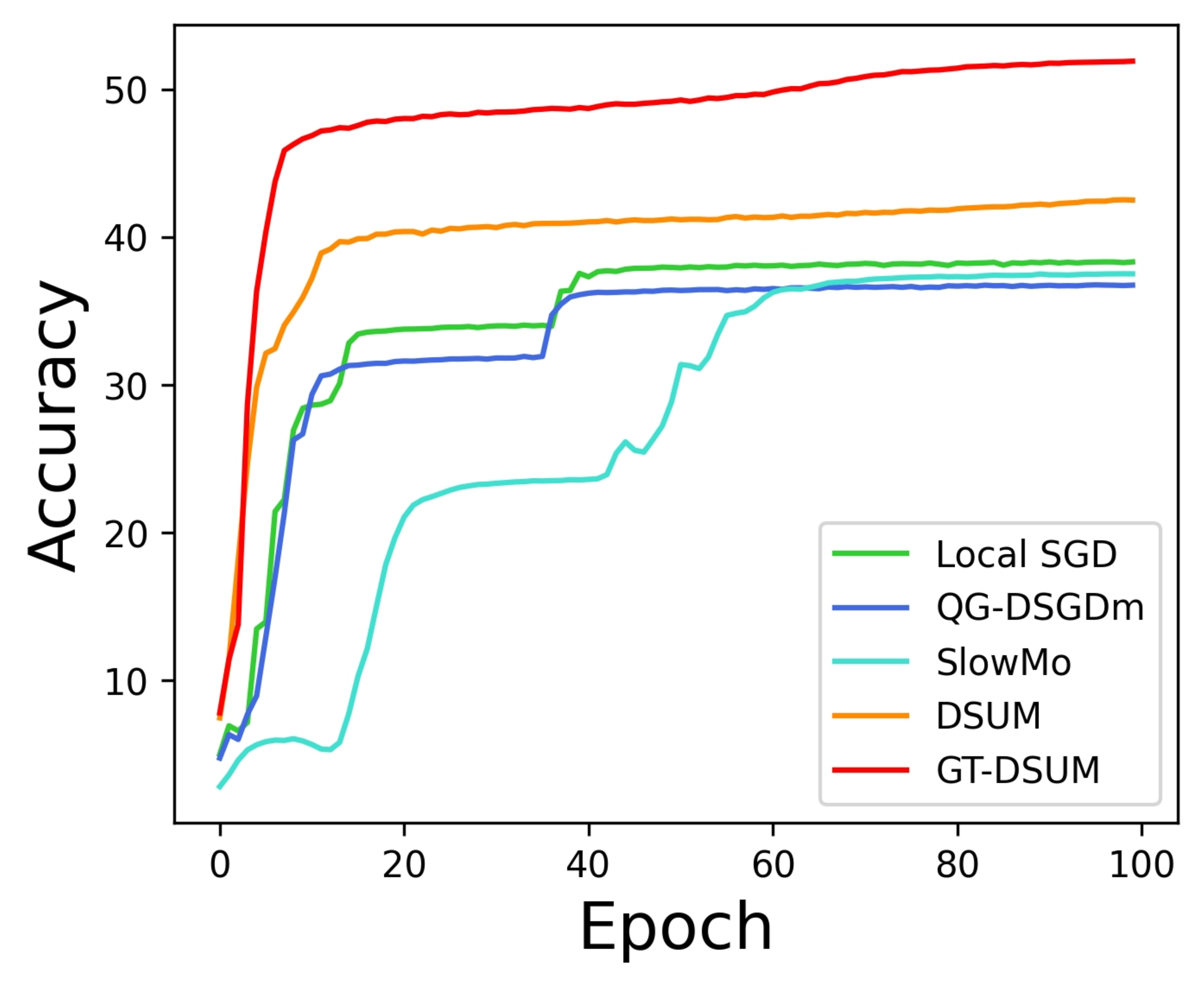}
\label{fig:emnist_cnn_noniid10}}
\caption{Testing accuracy for various tasks training on \textbf{CNN} over \textbf{EMNIST}.}
\label{fig:emnist_cnn}
\end{figure}

\begin{figure}[!htbp]
\centering
\subfigure[non-IID $= 0.1$]{
\includegraphics[width=0.32\linewidth]{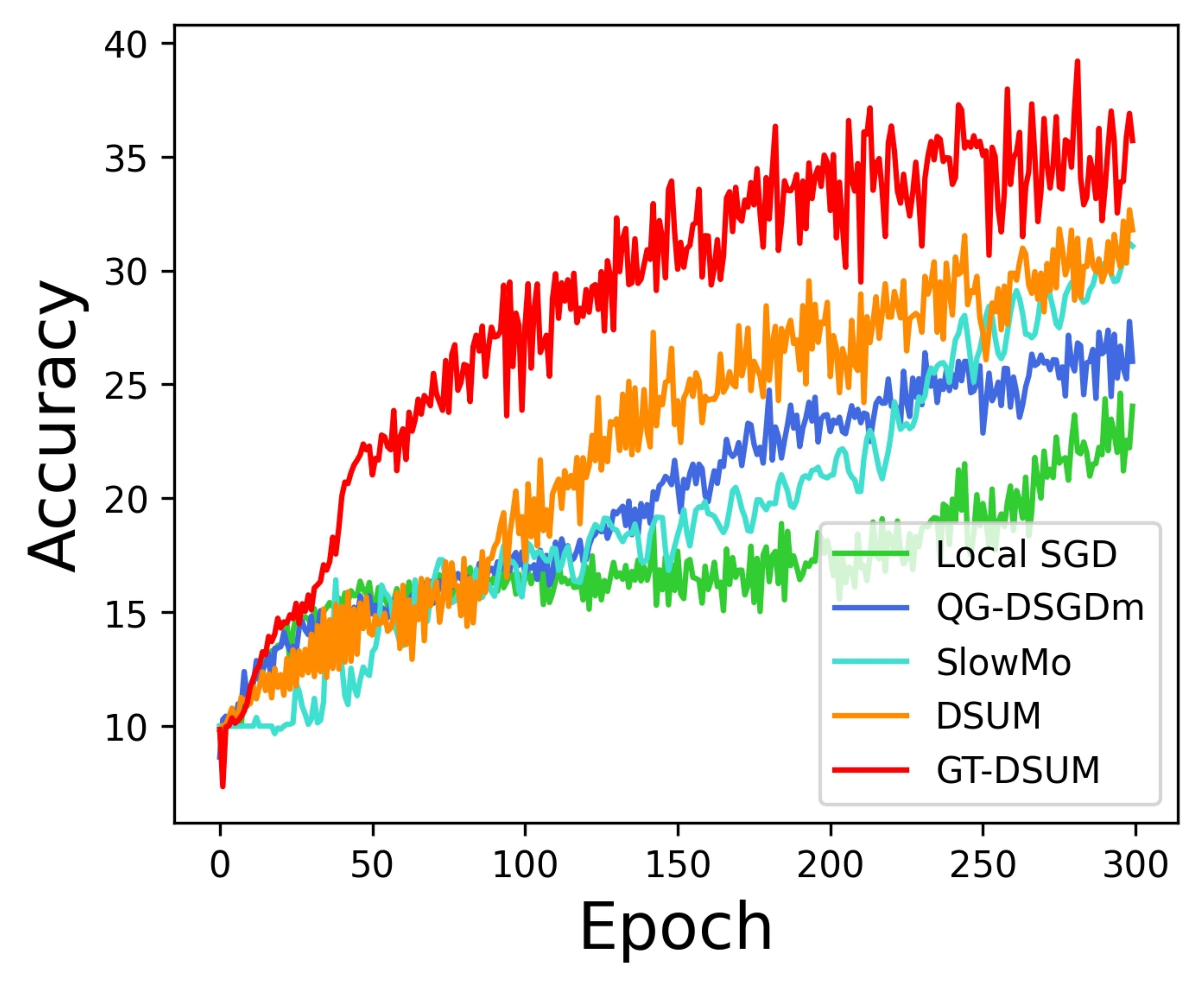}
\label{fig:cifar10_lenet_noniid0.1}}
\subfigure[non-IID $= 1$]{
\includegraphics[width=0.32\linewidth]{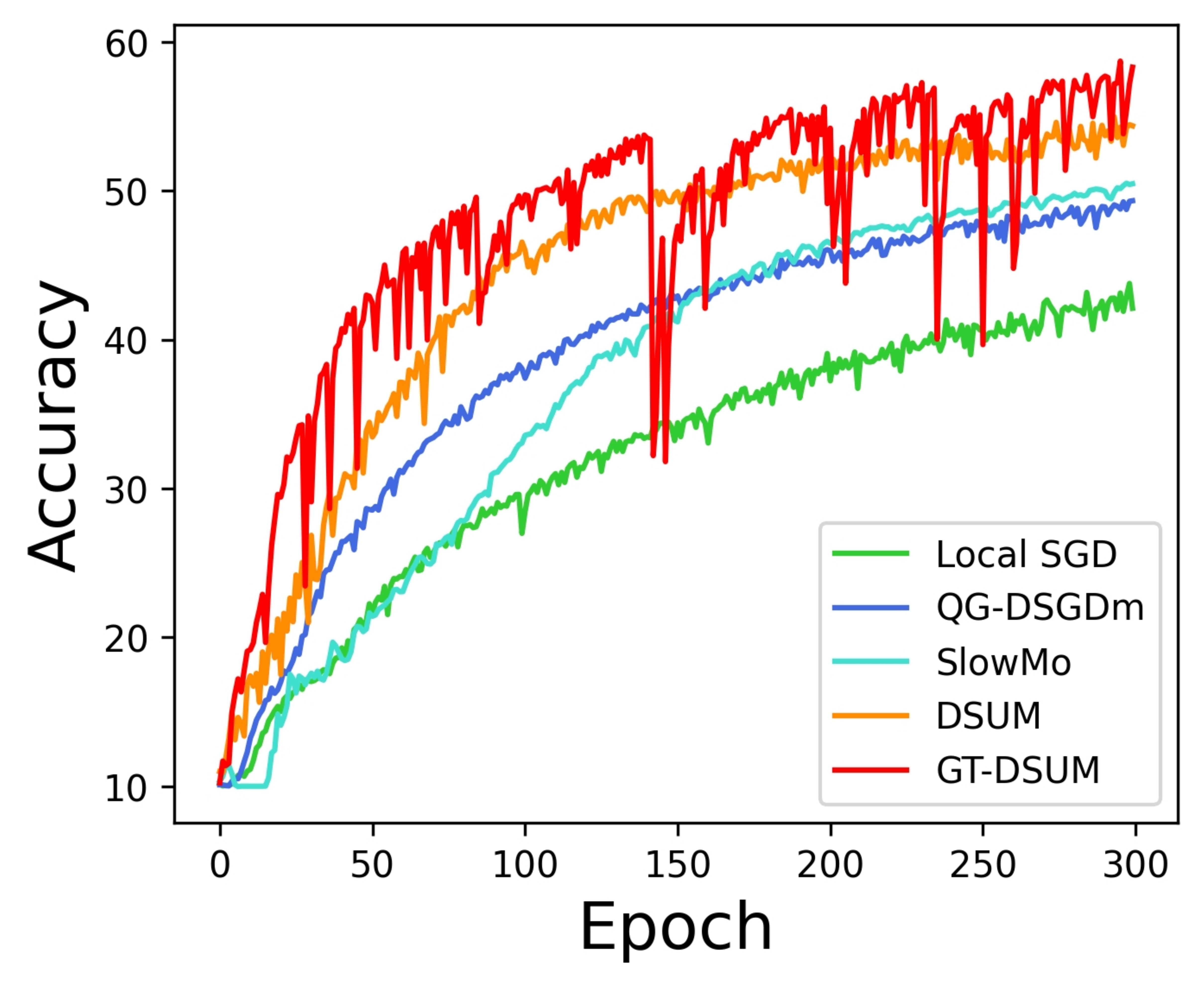}
\label{fig:cifar10_lenet_noniid1}}
\subfigure[non-IID $= 10$]{
\includegraphics[width=0.32\linewidth]{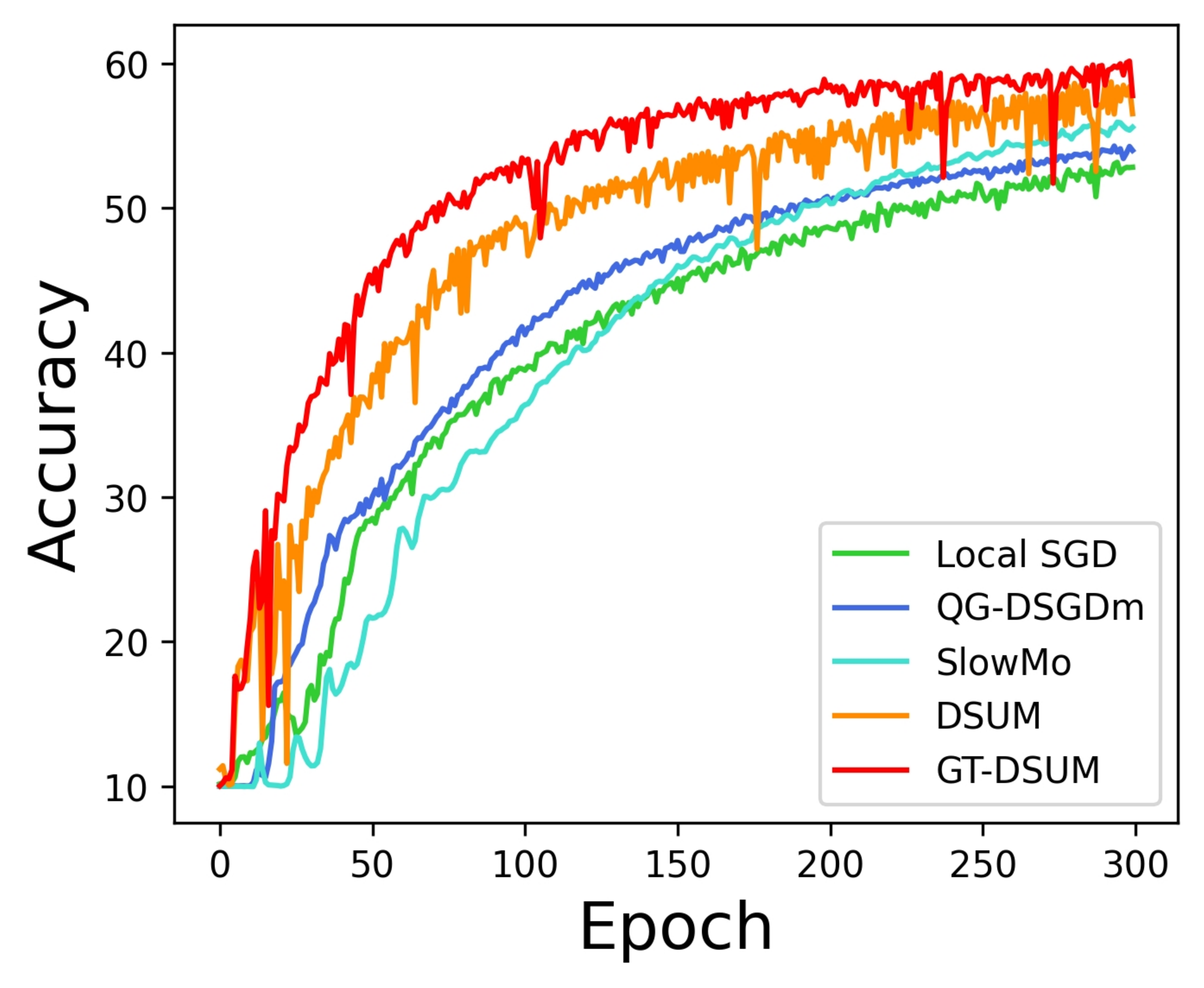}
\label{fig:cifar10_lenet_noniid10}}
\caption{Testing accuracy for various tasks training on \textbf{LeNet} over \textbf{CIFAR10}.}
\label{fig:cifar10_lenet}
\end{figure}

\begin{figure}[!htbp]
\centering
\subfigure[non-IID $= 0.1$]{
\includegraphics[width=0.32\linewidth]{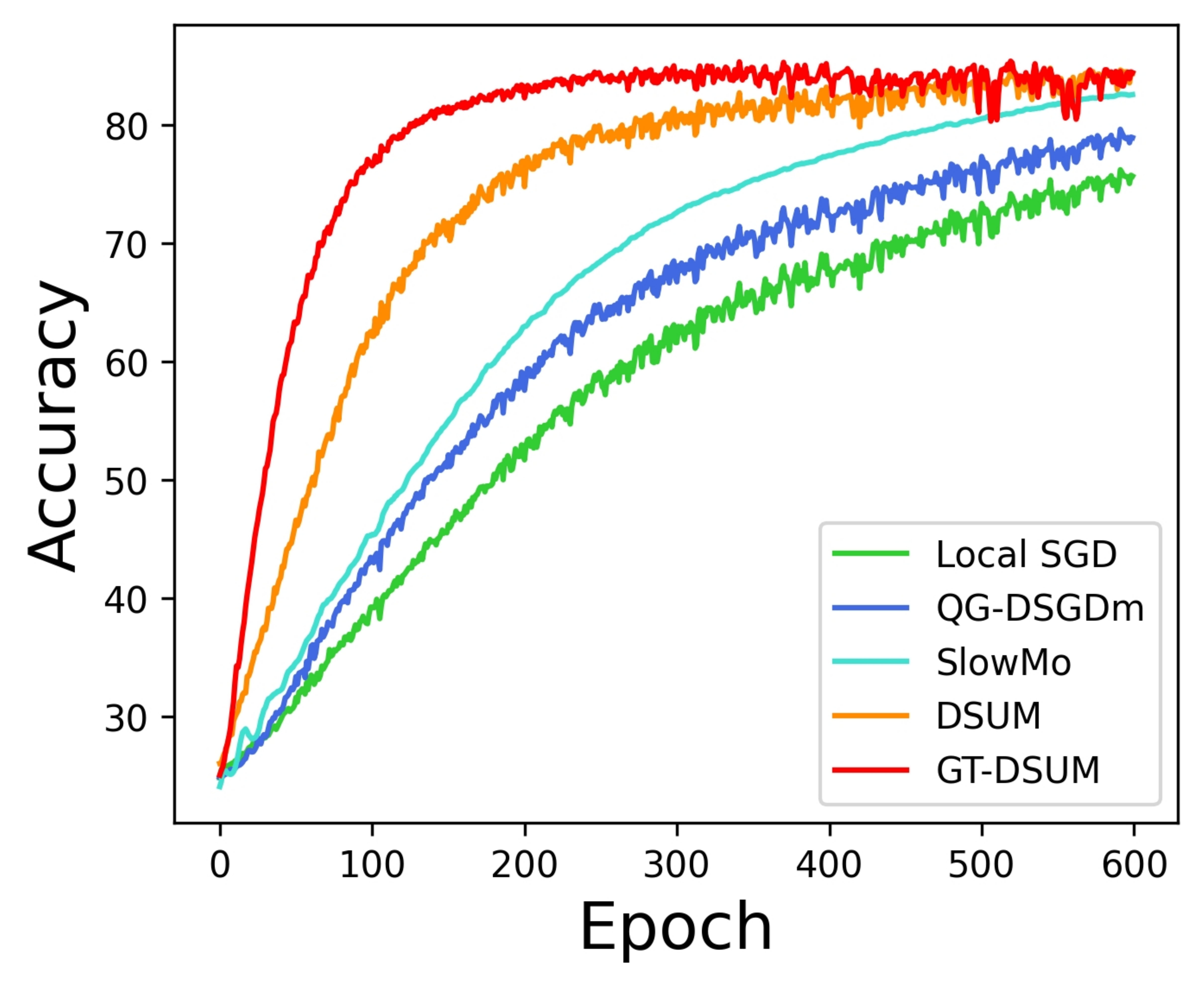}
\label{fig:agnews_lstm_noniid0.1}}
\subfigure[non-IID $= 1$]{
\includegraphics[width=0.32\linewidth]{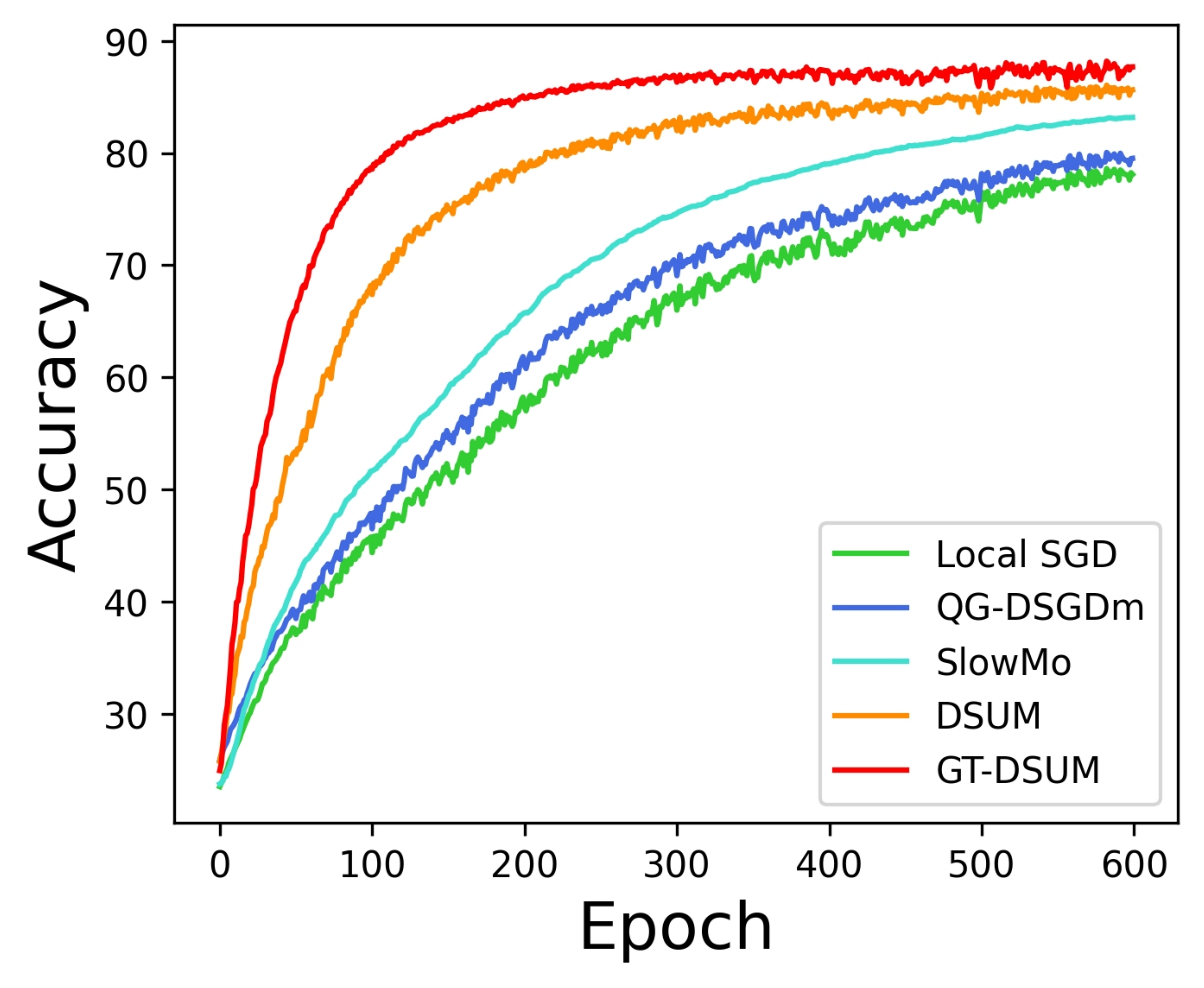}
\label{fig:agnews_lstm_noniid1}}
\subfigure[non-IID $= 10$]{
\includegraphics[width=0.32\linewidth]{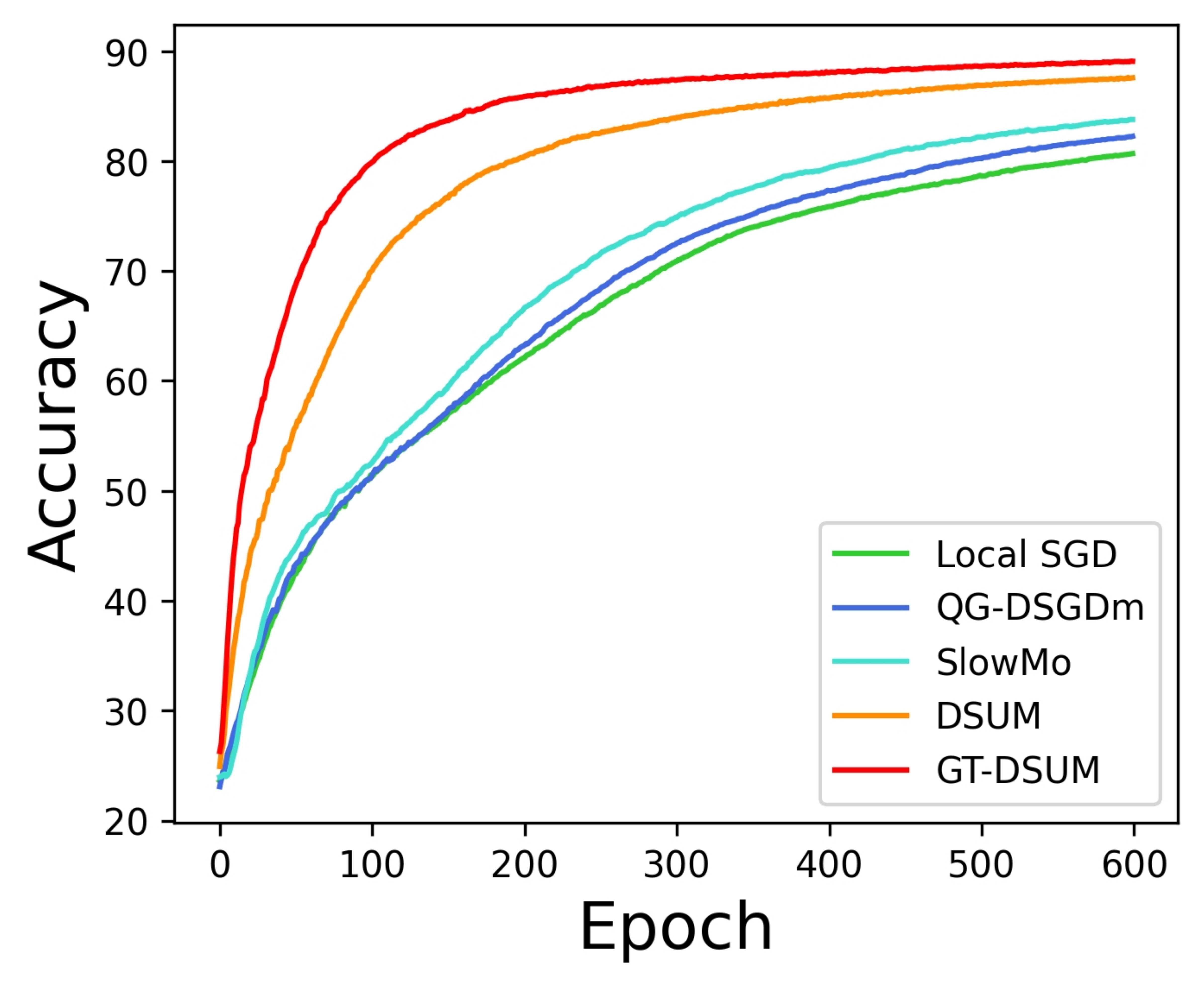}
\label{fig:agnews_lstm_noniid10}}
\caption{Testing accuracy for various tasks training on \textbf{RNN} over \textbf{AG NEWS}.}
\label{fig:agnews_lstm}
\end{figure}
%%%%%%%%%%%%%%% model generalization figure

% \subsection{Scalability}
% To further study the scaling properties of \textt{\algone} and \textt{\algtwo}, we final train on different numbers of workers compared with \textt{Local SGD w/ momentum} when non-IID $= 1$. For simplicity, model parameters are transmitted between each other according to the Ring-Reduce pattern in this section. Results are shown in Figure~\ref{fig:impact_ws}. When the number of participating % \xwj{workers->worker} 
% workers increases, the advantage of our schemes is readily apparent since our method \textt{\algtwo} consistently reaches a higher model accuracy compared to the \textt{Local SGD w/ momentum} in this non-IID case. % For example, the accuracy achieves $98.77\%$, $56.33\%$ and $88.12\%$ for EMNIST, CIFAR10 and AG NEWS when $n = 16$ for \textt{\algtwo}, having an improvement $1.1\%$, $10.8\%$ and $12.8\%$ higher than that of \textt{Local SGD w/ momentum}, respectively. 
% % However, determining the optimal number of involving workers for pursuing model performance remains to be further investigated.

\end{document}